\documentclass{article}

\usepackage[nonatbib,preprint]{neurips_2026}

\usepackage[utf8]{inputenc} 
\usepackage{multirow}% encodage des caracteres utilise (pour les caracteres accentues) -- non utilise ici.
\usepackage{amsmath,amssymb,amsthm}	
\usepackage{thm-restate}% pour les maths
\usepackage{dsfont}  % Better fonts
\usepackage{mathrsfs}   
\usepackage{mathtools}
\usepackage{bbm}
\usepackage{ifthen}
\usepackage{graphicx}
\usepackage{subcaption}
\usepackage{booktabs}
\graphicspath{ {./images/} }  % include graphs
\usepackage{hyperref}	%references links 
\usepackage{algorithm}
\usepackage{algpseudocode}
\usepackage{color}
\usepackage[dvipsnames]{xcolor}

\usepackage{enumitem}

\usepackage{hyperref}	%references links 

\usepackage{color}
\usepackage[dvipsnames]{xcolor}

\usepackage{lscape}
\usepackage{parskip}

\usepackage{todonotes}

\newtheorem{lemma}{Lemma}
\newtheorem{theorem}{Theorem}
\newtheorem{proposition}{Proposition}
\newtheorem{definition}{Definition}
\newtheorem{remark}{Remark}

\theoremstyle{remark}

\theoremstyle{plain}
\newtheorem{assumption}{Assumption}

\theoremstyle{plain}
\newtheorem{corollary}{Corollary}

\newcommand{\Expectation}[1][]{ 
    \ifthenelse{ \equal{#1}{} }
    {\mathbb{E}}
    {\mathbb{E} \left [ #1 \right ] }
}
\newcommand{\Proba}[1][]{ 
    \ifthenelse{ \equal{#1}{} }
    {\mathbb{P} }
    {\mathbb{P} \left ( #1 \right )}
}
\newcommand{\Probatilde}[1][]{ 
    \ifthenelse{ \equal{#1}{} }
    {\tilde{\mathbb{P}} }
    {\tilde{\mathbb{P}} \left ( #1 \right )}
}

\newcommand{\indicator}[1][]{ 
    \ifthenelse{ \equal{#1}{} }
    {\mathds{1} }
    {\mathds{1} \left \{ #1 \right \}}
}

\newcommand\numberthis{\addtocounter{equation}{1}\tag{\theequation}}

\usepackage{eurosym}
  
\usepackage[
backend=biber,
style=apa,natbib=true
]{biblatex}
\usepackage[utf8]{inputenc} % allow utf-8 input
\usepackage[T1]{fontenc}    % use 8-bit T1 fonts
\usepackage{hyperref}       % hyperlinks
\usepackage{url}            % simple URL typesetting
\usepackage{booktabs}       % professional-quality tables
\usepackage{nicefrac}       % compact symbols for 1/2, etc.
\usepackage{microtype}      % microtypography
\usepackage{xcolor}

\title{Instance-dependent Stochastic Lipschitz bandit}

\author{Marius Potfer$^{1,2}$ \\
\And Vianney Perchet$^{1,3}$}

\begin{document}

\maketitle

\begin{center}
\begin{tabular}{l}
   $^1$ Crest (Fairplay joint team), ENSAE\\
   $^2$ EDF R\&D \\
   $^3$ Criteo AI Lab
\end{tabular}

\end{center}

\begin{abstract}
    We study the Lipschitz bandit problem, where a learner sequentially maximizes an unknown Lipschitz function $f$ over a domain $\mathcal{X} \subset [0,1]^d$ using noisy pointwise evaluations. Existing regret bounds are either worst-case, scaling as $\tilde{\Theta} \left ( T^{\nicefrac{d+1}{d+2}}\right )$, or adaptive via the zooming dimension $d_z$, yielding $\tilde{\Theta} \left ( T^{\nicefrac{d_z+1}{d_z+2}}\right )$. However, such zooming-based guarantees are only partially instance-dependent, as they depend solely on the asymptotic growth of near-optimal level sets and fail to capture finer structural properties of $f$. We provide an analysis and an algorithm that characterizes the regret through integrals of the suboptimality gap of $f$ over its level sets. This yields regret bounds that adapt to the local growth of level sets, rather than only their asymptotic behavior. As a corollary, when the set of maximizers has dimension $d^\star>0$, we obtain improved adaptive rates of order $\tilde{\mathcal{O}} \left ( T^{\nicefrac{d_z+1}{\max(d_z,d^\star)+2}}\right )$ strictly improving over classical zooming bounds in this regime. Finally, we extend our analysis to the full-information setting (Lipschitz experts) and show how some of the regularity assumptions can be relaxed.
\end{abstract}

%\input{Prospective_struture}

% (no filepath)
\section{Introduction}\label{sec : introduction}

Lipschitz bandit algorithms are a natural tool for sequential optimization over continuous action spaces. They have found applications in black-box hyperparameter optimization and related parameter search problems, as explored for instance by \citet{li2023pyxab} and \citet{feng2023lipschitz}, and more broadly in tree-based optimization approaches such as \citet{shi2024optimizing} and \citet{qiu2024tree}. They also arise in online learning settings such as repeated auctions and dynamic pricing, as studied by \citet{weed2016online}, \citet{branzei2023learning}, and \citet{baltaoglu2017online}, and in pricing models such as \citet{bu2022context}. In these applications, the effective difficulty of the instance can vary strongly with the geometry of the reward function and of the action space available to learners. This is precisely where instance-dependent bounds become useful: they capture how local structure influences the performance of algorithms, while worst-case analyses typically do not.

We therefore consider the stochastic Lipschitz bandit setting, which includes most of the above-described cases, and focus on obtaining instance-dependent regret bounds. The action set is bounded, $\mathcal{X}\subset[0,1]^d$, and the learner interacts over $T\in\mathbb{N}$ rounds. At each round $t\in[T]$, it selects $x_t\in\mathcal{X}$ and observes $Y_t=f(x_t)+\eta_t$, where $f:\mathcal{X}\to\mathbb{R}$ is an unknown mean-reward function and $(\eta_t)_t$ is zero-mean, conditionally $1$-sub-Gaussian. We assume $f$ is $L$-Lipschitz, i.e.\ $|f(x)-f(y)|\le L\|x-y\|_\infty$ for all $x,y\in\mathcal{X}$, and let $f^\star=\max_{x\in\mathcal{X}} f(x)$. The goal of the learner is to minimize the expected regret
\[
R_T=\mathbb{E}\!\left[\sum_{t=1}^T \bigl(f^\star-f(x_t)\bigr)\right].
\]

Known analyses quantify performance by regret growth rates. In the worst case, $R_T$ scales as $\tilde\Theta\!\bigl(T^{\frac{d+1}{d+2}}\bigr)$ for Lipschitz/continuum-armed bandits \citep{kleinberg2004nearly,auer2007improved}. Adaptive bounds replace $d$ by the zooming or near-optimality dimension \citep{kleinberg2008multi,bubeck2011x}; see also the survey \citep{slivkins2019introduction}. However, these complexity parameters are asymptotic and do not provide an explicit finite-horizon characterization of how $R_T$ depends on a specific instance $(\mathcal{X},f)$. This paper fills that gap by giving a non-asymptotic, instance-dependent analysis that makes the dependence on the geometry of $\mathcal{X}$ and the structure of $f$ explicit.

\subsection{Contributions}
We analyze stochastic Lipschitz bandits and Lipschitz experts and derive explicit, non-asymptotic instance-dependent regret bounds. Our main results are stated in an integral (truncated) form that depends directly on the geometry of the level sets of $\Delta(x)=f^\star-f(x)$, and therefore on the specific instance $(\mathcal{X},f)$ rather than only on an asymptotic exponent.

We complement the upper bounds with almost matching lower bounds for the outer integral term, up to logarithmic factors, showing that the instance-dependent characterization is essentially tight. As a corollary, we obtain refined zooming-type rates that incorporate the dimension $d^\star$ of the maximizer set, yielding $\tilde O\!\left(T^{\frac{d_z+1}{\max(d_z,d^\star)+2}}\right)$ and strictly improving classical bounds when $d^\star>d_z$. We also provide a theoretical relation between $d_z$ and $d^\star$ under mild geometric assumptions to quantify the scale of improvements possible when $d^\star>d_z$. A similar corollary is derived for the full-information expert setting, demonstrating that the zooming dimension also dictates the regret improvements when $d^\star > d_z$ under full information.

Finally, we present implementable hierarchical optimistic algorithms (PACO for bandits and SOUS for experts) that achieve the stated guarantees, and we extend the analysis to relaxed regularity assumptions, including one-sided Lipschitzness, with corresponding regret bounds.

\subsection{Related work}
In stochastic $K$-armed bandits, one typically distinguishes worst-case guarantees from instance-dependent bounds that adapt to suboptimality gaps; see, e.g., \citet{lai1985asymptotically, lattimore2020bandit}. Our goal is an analogous instance-dependent characterization for Lipschitz bandits (although in Lipschitz bandits, "instance-dependent" is more nuanced, as discussed in the review of \cite{slivkins2019introduction}), where the finite list of gaps is replaced by the geometry of near-optimal regions in a metric space.

Continuum-armed/Lipschitz bandits are well studied with worst-case regret rates $\tilde{\Theta} \left ( T^{\nicefrac{d+1}{d+2}}\right )$ established in \citet{kleinberg2004nearly,auer2007improved,bubeck2011x} and refined via the zooming algorithm \citep{kleinberg2008multi}. Adaptive guarantees of $\tilde{\Theta} \left ( T^{\nicefrac{d_z+1}{d_z+2}}\right )$ based on the zooming or near-optimality dimension appear in \citet{kleinberg2008multi,bubeck2011x,kleinberg2019bandits}. As emphasized in the survey of \citet{slivkins2019introduction}, these complexity parameters are instance-dependent; yet as we highlight they depend on asymptotic quantities and do not yield the desired precise and explicit finite-horizon dependence on a specific instance $(\mathcal{X},f)$, which motivates our instance-dependent integral characterization.

Several extensions have been studied beyond the purely stochastic setting, including discrete Lipschitz bandits \citep{magureanu2014lipschitz}, adversarial or corrupted models \citep{podimata2021adaptive,kang2023robust}, batched feedback \citep{feng2022lipschitz}, federated variants \citep{li2024federated}, and nonstationary environments \citep{nguyen2025nonstationary}. Kernelized bandits are a related continuous-armed model \citep{chowdhury2017kernelized,chatterji2019online,hong2023optimization}, and instance-dependent bounds have been obtained there in \citet{shekhar2022instance}. Full-information metric-space experts are treated in \citet{kleinberg2019bandits}; our work provides explicit instance-dependent guarantees for this Lipschitz experts setting as well. Finally, Lipschitz bandit ideas have been used in black-box optimization and hyperparameter search \citep{li2023pyxab,feng2023lipschitz,shi2024optimizing,qiu2024tree}, and recent work studies computational refinements \citep{zhu2025lipschitz}.

\subsection{Setting and preliminaries}\label{sec:setting}
We study the problem of maximizing an unknown Lipschitz function from noisy zero-order feedback. The learner interacts with a stochastic environment over a horizon $T$, on a bounded action space $\mathcal{X}\subset[0,1]^d$ with a non-empty interior. We write $[T]=\{1,\ldots,T\}$ and use the metric $d_\infty(x,y)=\|x-y\|_\infty$. At each round $t\in[T]$, the learner selects an action $x_t\in\mathcal{X}$ and observes
\[
Y_t = f(x_t) + \eta_t,
\]
where $f:\mathcal{X}\to\mathbb{R}$ is an unknown mean-reward function and $(\eta_t)_{t\ge1}$ are independent, zero-mean, $1$-sub-Gaussian noise variables. We assume that $f$ attains its maximum on $\mathcal{X}$ and write $f^\star=\max_{x\in\mathcal{X}} f(x)$ and $\mathcal{X}^\star=\arg\max_{x\in\mathcal{X}} f(x)$ for the set of maximizers. The performance of an algorithm is measured by the expected cumulative regret
\[
R_T := \mathbb{E}\!\left[\sum_{t=1}^T \bigl(f^\star-f(x_t)\bigr)\right].
\]
For convenience we also introduce the gap function $\Delta(x):=f^\star-f(x)$.

\begin{assumption}[Lipschitz continuity]\label{assump:Lipschitz}
There exists $l>0$ such that for all $x,y\in\mathcal{X}$, $|f(x)-f(y)|\le l\|x-y\|_\infty$. The learner is given an upper bound $L\ge l$.
\end{assumption}

We assume the upper bound $L$ is known to the learner; this is standard, and we refer to \citet{bubeck2011lipschitz,valko2013stochastic} for techniques that lift this requirement. Motivated by online learning in auction and dynamic pricing, we also discuss extensions to weaker regularity conditions which are more suited to these settings in \autoref{sec : relaxed}.

We refer to the pair $(\mathcal{X},f)$ as a \emph{problem instance} and will provide a precise characterization of how the regret scales with the instance. To that end, for any $r>0$ we define the near-optimal sets
\[
\mathcal{X}_r := \{x\in\mathcal{X}: f^\star - f(x)\le r\},
\]
and the corresponding level-set annuli $\mathcal{X}_r\setminus \mathcal{X}_{r/2}$.

\paragraph{Packing and covering numbers.}
To handle the continuous nature of $\mathcal{X}$, our algorithms discretize subsets of $\mathcal{X}$. We account for the effect of the discretization on regret, by using two standard complexity measures: packing and covering numbers. For $x \in \mathcal{X}$ and $\epsilon>0$, let $B_\infty(x,\epsilon)=\{y\in\mathbb{R}^d:\|y-x\|_\infty\le\epsilon\}$. The packing and covering numbers of $\mathcal{Y}\subseteq\mathcal{X}$ at scale $\epsilon$ are denoted $\mathcal{N}(\mathcal{Y},\epsilon)$ and $\mathcal{M}(\mathcal{Y},\epsilon)$ and defined by
\begin{align*}
\mathcal{N}(\mathcal{Y},\epsilon)
&:= \sup\Bigl\{|S|:\ S\subseteq\mathcal{Y},\ \min_{x\neq x'\in S}\|x-x'\|_\infty > \epsilon\Bigr\},\\
\mathcal{M}(\mathcal{Y},\epsilon)
&:= \inf\Bigl\{m:\ \exists x_1,\dots,x_m,\ \mathcal{Y}\subseteq \cup_{i=1}^m B_\infty(x_i,\epsilon)\Bigr\}.
\end{align*}

These are standard notions; see, for instance, Chapter 4 of \citet{vershynin2018high}. They satisfy $\mathcal{N}(\mathcal{Y},2\epsilon)\le \mathcal{M}(\mathcal{Y},\epsilon)\le \mathcal{N}(\mathcal{Y},\epsilon)$, and since $\mathcal{X}\subset[0,1]^d$ we have $\mathcal{M}(\mathcal{Y},\epsilon)\le (1/\epsilon)^d$.

\paragraph{Zooming dimension.}
\citet{kleinberg2008multi} defines the zooming dimension to quantify the metric complexity of near-optimal actions.
The \emph{$C$-zooming dimension} of an instance $(\mathcal{X},f)$ is the smallest $d_z\ge 0$ such that, for every $r\in(0,1]$,
\[
\mathcal{M}\!\left(\mathcal{X}_r\setminus\mathcal{X}_{r/2},\, r/16\right)\ \le\ C\,r^{-d_z}.
\]

\section{Algorithms and Instance-Dependent Bounds}\label{sec:algo and inst bounds}

To derive our instance-dependent upper bounds, we first present the algorithm for which we will prove these guarantees. The algorithm is inspired by HOO \citep{bubeck2011x} and operates based on a phased procedure: at phase $k$ it constructs a discretization of the current ``active'' region at resolution $r_k := 2^{-k}$, allocates samples to this discretization via a successive-elimination subroutine, and then shrinks the active region to the union of balls around the surviving actions. We refer to the resulting method as \emph{Phased Adaptive Covering Optimization (PACO)}.

\subsection{Discretization Oracle}
We assume to have access to an oracle that provides said discretization; given a set $\mathcal{Y}\subseteq[0,1]^d$ and a radius $r\in(0,1]$, it returns a finite set of points $\mathbf{O}(\mathcal{Y},r)=\{y_1,\ldots,y_n\}\subseteq \mathcal{Y}$ such that $\mathcal{Y}$ is covered by $\ell_\infty$-balls of radius $r$ centered at these points:
\[
\mathcal{Y}\subseteq \bigcup_{i=1}^n B_\infty(y_i,r).
\]

We assume the following quality condition on our oracle:

\begin{assumption} \label{assump:good oracle}
There exist constants $c_{\mathrm{sep}}\in(0,1]$ and $C_{\mathrm{net}}\ge 1$ such that for any $\mathcal{Y}\subset[0,1]^d$ and any $r\in(0,1]$, the set $\mathbf{O}(\mathcal{Y},r)$ satisfies:
\begin{itemize}
    \item \textbf{Separation} for any distinct $y,y'\in \mathbf{O}(\mathcal{Y},r)$, $\|y-y'\|_\infty \ge c_{\mathrm{sep}}\, r$.
    \item \textbf{Cardinality:} $|\mathbf{O}(\mathcal{Y},r)| \le C_{\mathrm{net}}\, \mathcal{M}(\mathcal{Y},r)$.
\end{itemize}
\end{assumption}

\autoref{assump:good oracle} is mild; for example, it is satisfied by the net obtained via the greedy procedure that repeatedly adds a point at distance $>r$ from the current set until no such point remains. This construction and its properties have been well studied and are, for instance, discussed in \cite{vershynin2018high}, Chapter 4.

\subsection{Algorithm}

Before fully stating the algorithm, we define a few required quantities. Since our algorithm proceeds in phases, let $\tau_k$ denote the duration (number of time steps) of phase $k$, and let $s_k := \sum_{j=1}^k \tau_j$ denote the absolute end time of phase $k$ (with $s_0:=0$). For any round $\ell\ge 1$ within phase $k$, let $s_{k,\ell}$ denote the absolute time step at which the $\ell$-th round of phase $k$ ends. The empirical mean of an action $a\in\mathcal{X}$ after $\ell$ pulls within phase $k$ is
\[
\widehat{\mu}_{k,\ell}(a)
\;:=\;
\frac{1}{\ell}\sum_{t=s_{k-1}+1}^{s_{k,\ell}} Y_t\,\mathbf{1}\{x_t=a\}.
\]

The main intuition behind PACO is that, when concentration holds, the active region at phase $k$ is essentially the $r_k$-near-optimal region, so that the discretization $\mathbf{O}(\mathcal{X}_{r_k},r_k)$ adapts to the metric complexity of near-optimal sets.

\begin{algorithm}[htbp]
\caption{PACO (high-level, with confidence budget)}\label{algorithm: high level}
\begin{algorithmic}[1]
\State \textbf{Input:} time horizon $T$, action set $\mathcal{X}$, confidence level $\delta\in(0,1)$, Lipschitz constant $L$
\State \textbf{Initialize:} $t\gets 1$, $k\gets 1$, active region $\mathcal{A}_1 \gets \mathcal{X}$
\While{$t \le T$}
    \State $r_k \gets 2^{-k}$
    \State $\delta_k \gets \frac{6\delta}{\pi^2 k^2}$
    \State Discretize: $S_k \gets \mathbf{O}(\mathcal{A}_k, \nicefrac{r_k}{L})$
    \State Run Algorithm~\ref{algo : successive elimination} on $S_k$ with parameters $(r_k,\delta_k, t, T)$, obtain survivors $\widehat{S}_k$ and updated $t$
    \State Update active region: $\mathcal{A}_{k+1} \gets \mathcal{X} \cap \bigcup_{a\in\widehat{S}_k} B_\infty(a,\nicefrac{r_k}{L}) $
    \State \Comment{$t$ is updated inside Algorithm~\ref{algo : successive elimination}}
    \State $k\gets k+1$
\EndWhile
\end{algorithmic}
\end{algorithm}

\begin{algorithm}[htbp]
\caption{Successive elimination up to accuracy $r_k=2^{-k}$ (anytime bounds)}\label{algo : successive elimination}
\begin{algorithmic}[1]
\State \textbf{Input:} finite set of arms $S_k$, accuracy $r_k\in(0,1]$, confidence budget $\delta_k\in(0,1)$, time $t$, horizon $T$
\State \textbf{Output:} surviving set $\widehat{S}_k$, updated $t$
\State \textbf{Initialize:} $\ell\gets 1$, $A_\ell \gets S_k$
\While{$u_{k,\ell} > r_k/4$ \textbf{and} $t \le T$}
    \State Pull each $a\in A_\ell$ once (abort if $t > T$); update empirical means $\widehat{\mu}_\ell(a)$ and $t \gets t + 1$
    \State \Comment{For any $a\in A_\ell$, it has been pulled exactly $\ell$ times so far.}
    \State Set confidence radius:
    \State $u_{k,\ell} \gets \sqrt{\frac{2}{\ell}\log\!\Big(\frac{\pi^2\,\ell^2\,|S_k|}{6\,\delta_k}\Big)} $
    \State Update the active set:
    \State $A_{\ell+1} \gets \left\{a\in A_\ell:\ \widehat{\mu}_\ell(a)+u_{k,\ell} \ge \max_{j\in A_\ell}\widehat{\mu}_\ell(j)-u_{k,\ell} -r_k\right\}$
    \State $\ell\gets \ell+1$
\EndWhile
\State \Return $\widehat{S}_k \gets A_\ell$
\end{algorithmic}
\end{algorithm}

\begin{restatable}{lemma}{LemmaInstanceRegret}\label{lemma:first regret bounds}
Under Assumptions~\ref{assump:Lipschitz} and~\ref{assump:good oracle}, when running Algorithm~\ref{algorithm: high level} in conjunction with Algorithm~\ref{algo : successive elimination} on a Lipschitz bandit instance $(\mathcal{X},f)$, there exist universal constants $c_1$ and $c_2$ such that the regret satisfies
\[
R_T \leq  c_1\sum_{k=1}^{k_T} 2^k \mathcal{N} \left(\mathcal{X}_{2^{-k}}\setminus \mathcal{X}_{2^{-(k+1)}},2^{-k}/L\right) + c_2 2^{-(k_T+1)}\sum_{k=1}^{k_T} 2^{2k} \mathcal{N} \left ( \mathcal{X}_{2^{-(k_T+1)}} \setminus \mathcal{X}^\star, 2^{-k}/L \right )
\]
where $k_T$ is the phase index containing the horizon $T$ (i.e., $s_{k_T-1} < T \le s_{k_T}$).
Moreover, $k_T$ is characterized by the sampling budget of the preceding completed phase, yielding the following bound
\begin{equation*}
    T \geq c\, 2^{2(k_T-2)} \mathcal{M} \left ( \mathcal{X}_{r_{k_T}},2^{-(k_T-2)}/L \right )
\end{equation*}
for a constant $c>0${} (when $k_T \ge 2$).
\end{restatable}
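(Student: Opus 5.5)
We argue on a good event $\mathcal{E}$ on which all confidence intervals hold, and pay for its complement separately. For each phase $k$ and round $\ell$, sub-Gaussian concentration with a union bound over $a\in S_k$ and $\ell\ge1$ gives $|\widehat\mu_{k,\ell}(a)-f(a)|\le u_{k,\ell}$ with probability at least $1-\delta_k$. The choice $\sum_k\delta_k=\delta$ then yields $\mathbb{P}(\mathcal{E}^c)\le\delta$. Taking $\delta=1/T$ makes the contribution of $\mathcal{E}^c$ at most $LT\cdot\delta=O(L)$, which is absorbed into the first sum through the $k=1$ term. The rest of the argument is deterministic on $\mathcal{E}$.

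\textbf{Step 1: an invariant on the active regions.} We show by induction on $k$ two facts: $\mathcal{A}_k\supseteq\mathcal{X}^\star$ (more precisely, $\mathcal{A}_k$ contains a maximizer near some arm), and $\mathcal{A}_{k+1}\subseteq\mathcal{X}_{c\,r_k}$ for a small constant $c$. Every point of $\mathcal{A}_k$ lies within $r_k/L$ of an arm of $S_k$, so the best arm $a^\star_k$ satisfies $f(a^\star_k)\ge f^\star-r_k$. This arm is never eliminated, because on $\mathcal{E}$ the elimination rule only removes $a$ when $f(a)<\max_j f(j)-r_k$.

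Conversely, when the phase ends at $u_{k,\ell}\le r_k/4$, every survivor has $\Delta(a)\le r_k+r_k+4u\le 3r_k$. The ball of radius $r_k/L$ around a survivor therefore lies in $\mathcal{X}_{4r_k}$, which gives $\mathcal{A}_{k+1}\subseteq\mathcal{X}_{8r_{k+1}}$. The constants are slightly off from the exact indexing in the statement; we absorb them by shifting $k$ by a constant, and by comparing packing numbers at comparable scales, $\mathcal{N}(\cdot,\epsilon)\le C_d\,\mathcal{N}(\cdot,2\epsilon)$ up to the $C_{\rm net}$ and $c_{\rm sep}$ factors.

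\textbf{Step 2: per-arm regret accounting.} In the standard successive-elimination analysis, an arm $a\in S_k$ with gap $\Delta(a)>2r_k$ is eliminated once $4u_{k,\ell}<\Delta(a)-r_k$. It is therefore pulled $O(\log(\cdot)/\Delta(a)^2)$ times and incurs regret $O(\log/\Delta(a))$. Every arm is pulled at most $\ell_k=O(2^{2k}\log)$ times in phase $k$.

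We group the arms by the annulus $\mathcal{X}_{2^{-j}}\setminus\mathcal{X}_{2^{-(j+1)}}$ containing them. By the separation property in Assumption~\ref{assump:good oracle}, the number of arms of $S_k$ in an annulus is at most a packing number at scale $c_{\rm sep}r_k/L$. Summing over phases, an annulus at level $j$ is only active for phases $k\lesssim j$ (Step 1). Its arms are eliminated quickly, at cost $O(2^{j})$ each, and the resulting geometric sum over $k\le j$ collapses to the first term $\sum_j 2^j\mathcal{N}(\mathcal{X}_{2^{-j}}\setminus\mathcal{X}_{2^{-(j+1)}},2^{-j}/L)$.

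The arms with gap below $2^{-(k_T+1)}$, excluding maximizers, are never separated. Each is pulled up to $\ell_k\sim 2^{2k}$ times in phase $k$ at regret at most $2^{-(k_T+1)}$, which gives the second term $2^{-(k_T+1)}\sum_k 2^{2k}\mathcal{N}(\mathcal{X}_{2^{-(k_T+1)}}\setminus\mathcal{X}^\star,2^{-k}/L)$. The logarithmic factors are hidden in $c_1,c_2$; they are universal up to $\log(T)$, which the statement implicitly allows.

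\textbf{Step 3: characterizing $k_T$.} Phase $k_T-1$ completed, so it ran its full $\ell_{k_T-1}\asymp 2^{2(k_T-1)}$ rounds on all surviving arms. At least the arms still active near the end number at least the count of arms of $S_{k_T-1}$. By Step 1, $\mathcal{A}_{k_T-1}\supseteq\mathcal{X}_{r_{k_T}}$ up to constants, and the cardinality and covering comparison gives $|S_{k_T-1}|\ge\mathcal{M}(\mathcal{X}_{r_{k_T}},2^{-(k_T-2)}/L)$ after rescaling. More carefully, we use that all arms of $S_{k_T-2}$ lying in $\mathcal{X}_{r_{k_T}}$ survive phase $k_T-2$, since their gap is below the elimination threshold, and are each pulled about $2^{2(k_T-2)}$ times. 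Hence $T\ge s_{k_T-1}\ge c\,2^{2(k_T-2)}\mathcal{M}(\mathcal{X}_{r_{k_T}},2^{-(k_T-2)}/L)$.

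\textbf{Main obstacle.} The hardest step is Step 1's two-sided invariant, which must hold with the exact constants. It has to show both that near-optimal points are never lost, including points covered only through balls of surviving arms, and that survivors are genuinely $O(r_k)$-optimal. Matching the annulus indices and scales of the statement then requires careful bookkeeping of constant shifts in $k$.
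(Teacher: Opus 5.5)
Your Steps 1 and 2 follow the paper's argument: the good event, the per-arm bound $N_k(a)\lesssim \log(|S_k|/\delta_k)/\max\{\Delta(a),r_k\}^2$, grouping the arms of $S_k$ by gap annuli with the cutoff $k'\ge k-3$ coming from $\mathcal{A}_{k+1}\subseteq\mathcal{X}_{4r_k}$, separation to pass to packing numbers, and a geometric collapse over phases. The paper's derivation also carries a logarithmic prefactor, so your reading of $c_1,c_2$ matches what is actually proved.

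Two small corrections there. First, the benchmark arm may itself have gap $r_k$, so elimination of $a$ is guaranteed only once $4u_{k,\ell}<\Delta(a)-2r_k$, not $\Delta(a)-r_k$; this is harmless, since pulls are capped by $\ell_k$. Second, in phases $k<j$ the arms of annulus $j$ are not eliminated quickly. They are pulled the full $\ell_k$ rounds at cost about $2^{2k-j}$ each, and it is this sum over $k\le j$ that collapses to $2^j\mathcal{N}(\cdot,2^{-j}/L)$.

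The genuine gap is Step 3. Step 1 gives only $\mathcal{X}^\star\subseteq\mathcal{A}_k$ and $\mathcal{A}_{k+1}\subseteq\mathcal{X}_{4r_k}$. It does not give the lower containment $\mathcal{X}_{r_{k_T}}\subseteq\mathcal{A}_{k_T-1}$ that you invoke. Also, the claim that the arms active at the end number at least $|S_{k_T-1}|$ is false, since arms get eliminated.

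Your ``more careful'' count fails for a related reason. Arms of $S_{k_T-2}$ lying in $\mathcal{X}_{r_{k_T}}$ do survive, but nothing forces there to be $\gtrsim\mathcal{M}(\mathcal{X}_{r_{k_T}},2^{-(k_T-2)}/L)$ of them. The oracle only guarantees that $x\in\mathcal{X}_{r_{k_T}}\cap\mathcal{A}_{k_T-2}$ is within $r_{k_T-2}/L$ of \emph{some} arm. That arm may lie outside $\mathcal{X}_{r_{k_T}}$ with gap up to $\Delta(x)+r_{k_T-2}\le\frac54 r_{k_T-2}$, which is above the never-eliminated threshold $r_{k_T-2}$, and nothing on $\mathcal{E}$ prevents its early elimination. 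For a concrete case, take many steep (slope $L$) peaks at level $f^\star-r_{k_T}/2$, each covered only by arms at distance about $0.9\,r_{k_T-2}/L$ from the peak. The same mechanism is why points of small positive gap can drop out of the active region.

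Only for $\mathcal{X}^\star$ are the covering arms guaranteed to survive. The arms of $S_k$ within $r_k/L$ of $\mathcal{X}^\star$ cover it, have gap at most $r_k$, and are each pulled $\ell_k\gtrsim 2^{2k}\log(1/\delta)$ times when phase $k$ completes. This rigorously gives $T\ge c\,2^{2(k_T-1)}\mathcal{M}(\mathcal{X}^\star,2^{-(k_T-1)}/L)$.

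Be aware that the paper's own proof, working with phase $k_T-1$, simply asserts that at least $c_{\mathrm{sep}}\mathcal{M}(\mathcal{X}_{r_{k+1}},r_k/L)$ points are guaranteed to survive phase $k$. That is the same unproved count, so appealing to it will not close your gap. For the algorithm as stated, settle for the $\mathcal{X}^\star$ version, which is all the $d^\star$ part of the asymptotic corollary uses.

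To genuinely get $\mathcal{X}_{r_{k_T}}$ you would have to change the algorithm, e.g.\ widen the elimination margin from $r_k$ to $2r_k$. Arms of gap at most $2r_k$ are then never eliminated. This gives $\mathcal{X}_{r_{k-1}}\subseteq\mathcal{A}_k$ by induction, and at least $\mathcal{M}(\mathcal{X}_{r_k},r_k/L)$ fully pulled arms in every completed phase.
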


\begin{proof}[Proof  insight]
Our regret bound, at its core, relies on a set of concentration inequalities, and analyzing the regret in a global high probability \textit{good event} for which the empirical means are close to the true means. Combined with \autoref{assump:Lipschitz}, we show in \autoref{lem:nearopt-never-elim} that the maximizing set $\mathcal{X}^\star$ always remains in the active sets $\mathcal{A}^t$. We then leverage our hypothesis on the oracle \autoref{assump:good oracle} as well as standard techniques to bound how many times a point $a \in S_k$ can be queried during phase $k$, depending on its sub-optimality $\Delta(a)$ to obtain the regret bounds above. The full proof of \autoref{lemma:first regret bounds} and the detailed steps to obtain the bounds are in \autoref{app:proof-first-lemma-complete}.
\end{proof}

\noindent

Lemma~\ref{lemma:first regret bounds} is a direct expression of the instance-dependent control we obtain. Yet, this bound is hardly practical; because it directly reflects our use of adaptive discretization via our oracle, it states the regret as a sum of packing numbers over level-sets of $f$. This makes it difficult to extract sharp insights about how both the continuous nature of the problem and the geometry of $(\mathcal{X},f)$ influence the behavior of the algorithm.

\subsection{Integral bounds}

To better showcase the behaviour of the regret under \autoref{algorithm: high level}, we state below \autoref{theorem:instance dependant integral regret bounds}, which presents an upper bound on the regret as an integral of $\Delta$ over level-sets of $f$.
Both this bound and the techniques we used to derive it are inspired by techniques from online Lipschitz optimization (noise free) presented in \cite{bachoc2021instance,de2024certified}.

The following assumption is necessary for the packing numbers to be well behaved and allows for a clean link with integral bounds:
\begin{assumption}\label{assump: nice opti sets main}
There exists $l \in \mathbb{N}$ and $\gamma \in (0,1]$ such that for all $k\geq l$ and all $x\in \mathcal{X}_{2^{-k}}$,
\[
\mathrm{vol}\left(B_\infty\left(x,\nicefrac{2^{-(k)}}{L}\right)\cap \mathcal{X}\right)\ \ge\ \gamma\, v_{2^{-(k)}/L}.
\]
Where $v_{r}$ denotes the volume of a ball of radius $r$ in $\mathbb{R}^d$.
\end{assumption}
In the following, we assume that \autoref{assump: nice opti sets main} is true as of rank $l=0$.(Assuming $l>0$ only adds a constant factor in front of the integral in the following result). Note that this is only a mild geometric assumption, at a high level, we only require that a small fraction of each ball centered around any close-to-optimal point be included in $\mathcal{X}$. Such assumptions are common in Lipschitz function analysis \citep{bachoc2021instance,hu2020smooth}.

\begin{restatable}{theorem}{theoremRegretInstanceIntegral}
\label{theorem:instance dependant integral regret bounds}
When running Algorithm~\ref{algorithm: high level} in combination with Algorithm~\ref{algo : successive elimination} with confidence parameter $\delta=T^{-3}$ on a Lipschitz bandit instance $(\mathcal{X},f)$, the regret is bounded as follows:
\[
R_T \;\le\;
C_R\log(T)\int_{\mathcal{X}\setminus \mathcal{X}^\star}
\frac{dx}{\max\left(\Delta(x),2^{-k_T}\right)^{d+1}},
\]

where the following bound on $k_T$ applies:  $ c\, 2^{2(k_T-2)} \mathcal{M} \left ( \mathcal{X}_{r_{k_T}},2^{-(k_T-2)}/L \right ) \leq T$.

\end{restatable}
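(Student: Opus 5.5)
We start from Lemma~\ref{lemma:first regret bounds} with $\delta=T^{-3}$. The $\log(T)$ factor comes from the confidence radius $u_{k,\ell}$, since $\log(\pi^2\ell^2|S_k|/(6\delta_k))=O(\log T)$ when $\ell\le T$, $|S_k|\le L^dT^{d}$ and $k\le\log_2 T$. On the bad event, of probability at most $\delta$, the regret is $O(T\delta)=O(T^{-2})$ and is absorbed. The remaining work is to convert each of the two sums of packing numbers in Lemma~\ref{lemma:first regret bounds} into a piece of the integral
\[
\int_{\mathcal{X}\setminus\mathcal{X}^\star}\max(\Delta(x),2^{-k_T})^{-(d+1)}\,dx .
\]

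\textbf{Step 1: packing numbers become volumes.} For $\mathcal{Y}\subseteq\mathcal{X}_{2^{-k}}$, take a maximal $2^{-k}/L$-separated set $P\subseteq\mathcal{Y}$. The balls $B_\infty(y,2^{-k}/(2L))$ with $y\in P$ are disjoint. Assumption~\ref{assump: nice opti sets main}, applied at scale $2^{-(k+1)}$ after shifting the index by one and absorbing a factor $2^d$, gives that each such ball meets $\mathcal{X}$ in volume at least $\gamma 2^{-d}v_{2^{-k}/L}$. By Lipschitzness, every point of the ball intersected with $\mathcal{X}$ has gap in $[\Delta(y)-2^{-k-1},\,\Delta(y)+2^{-k-1}]$. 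Hence
\[
\mathcal{N}(\mathcal{Y},2^{-k}/L)\le \frac{C L^d 2^{kd}}{\gamma}\,\mathrm{vol}\bigl(\mathcal{Y}^{+}\bigr),
\]
where $\mathcal{Y}^{+}$ is the $2^{-k}/(2L)$-enlargement of $\mathcal{Y}$ within $\mathcal{X}$.

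\textbf{Step 2: the first sum.} Take $\mathcal{Y}=\mathcal{X}_{2^{-k}}\setminus\mathcal{X}_{2^{-(k+1)}}$. The enlargement lies in the annulus $\{2^{-(k+2)}\le\Delta\le 3\cdot2^{-(k+1)}\}$, on which $\Delta(x)^{-(d+1)}\asymp 2^{k(d+1)}$. So each term satisfies
\[
2^k\mathcal{N}(\cdot)\lesssim\int_{\text{annulus}}\Delta(x)^{-(d+1)}\,dx .
\]
These annuli overlap only a bounded number of times across $k$, so summing over $k\le k_T$ gives at most a constant times the integral over $\{\Delta\ge 2^{-(k_T+2)}\}$. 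On that set $\Delta$ and $\max(\Delta,2^{-k_T})$ agree up to a constant factor.

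\textbf{Step 3: the second sum, the main obstacle.} Here $\mathcal{Y}=\mathcal{X}_{2^{-(k_T+1)}}\setminus\mathcal{X}^\star$, and the packing scale $2^{-k}$ is coarser than the level $2^{-(k_T+1)}$ of the set. The enlargement $\mathcal{Y}^+$ therefore spills into $\{\Delta\le 2^{-(k_T+1)}+2^{-(k+1)}\}$, which may be much larger than $\mathcal{Y}$, so Step 1 cannot be applied naively. I would handle each $k$ by splitting $\mathcal{Y}^+$ into the near region $\{\Delta\le 2^{-k_T}\}$ and the dyadic annuli $\{\Delta\asymp 2^{-j}\}$ with $k\le j\le k_T$.

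\emph{Near region.} Its weighted contribution is $2^{-k_T}2^{2k}2^{kd}L^d\,\mathrm{vol}$. On this region the integrand equals $2^{k_T(d+1)}$, and $2^{2k+kd-k_T}\le 2^{k_T(d+1)}$ for $k\le k_T$. Summing the geometric series in $k$ leaves $O(1)$ times the integral over $\{\Delta\le2^{-k_T}\}\setminus\mathcal{X}^\star$.

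\emph{Annuli.} I would use the packing centers themselves: they lie in $\mathcal{Y}$, so every covered point has gap at most $2^{-(k_T+1)}+2^{-(k+1)}$. If this fails to give a clean estimate, the fallback is to count directly, bounding $\mathcal{N}(\mathcal{Y},2^{-k}/L)\le\mathcal{N}(\mathcal{Y},2^{-k_T}/L)$ and accepting an extra factor $2^{(k_T-k)d}$. Either way, the coefficient is $2^{2k-k_T}\cdot 2^{kd}\le 2^{k_T(d+1)}$, the same weight as in the near region, so the sum over $k$ is again geometric and is dominated by the $k=k_T$ term.

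This case analysis, where the truncation at $2^{-k_T}$ in $\max(\Delta,2^{-k_T})$ is exactly what absorbs the factor $2^{-(k_T+1)}$, is the delicate step I expect to be hardest. Finally, the constraint on $k_T$ is inherited verbatim from Lemma~\ref{lemma:first regret bounds}.
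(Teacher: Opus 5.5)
Your skeleton matches the paper's. You start from Lemma~\ref{lemma:first regret bounds} with $\delta=T^{-3}$, turn packing numbers into volumes by thickening packing points into disjoint $\ell_\infty$-balls and invoking the thickness assumption, peel the first sum over dyadic gap annuli, and inherit the constraint on $k_T$.

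For the second sum, the paper's argument is exactly your fallback, and it should be the main route. Since $\mathcal{N}(\mathcal{Y},2^{-k}/L)\le\mathcal{N}(\mathcal{Y},2^{-k_T}/L)$ for $k\le k_T$ and $\sum_{k\le k_T}2^{2k}\le\tfrac43\,2^{2k_T}$, the whole term is at most $\tfrac43\,2^{k_T}\mathcal{N}(\mathcal{X}_{2^{-k_T}}\setminus\mathcal{X}^\star,2^{-k_T}/L)$. A single packing-to-volume step at scale $2^{-k_T}/L$, with balls of radius $2^{-(k_T+3)}/L$, then keeps everything inside $\{\Delta\le 2^{-(k_T-1)}\}$. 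There the truncated integrand is within a factor $2^{d+1}$ of $2^{k_T(d+1)}$. Nothing spills into coarser annuli, so the obstacle you single out comes only from converting each coarse scale $2^{-k}$ to volume separately, and no case analysis is needed.

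Several steps of your write-up need repair.

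\emph{Step~3, primary route.} In the annulus part you compare the coefficient $2^{(d+2)k-k_T}$ with $2^{k_T(d+1)}$. On $\{2^{-(j+1)}<\Delta\le 2^{-j}\}$ the integrand is only about $2^{j(d+1)}$, so that comparison does not close. What is true, and suffices, is $2^{(d+2)k-k_T}\le 2^{(d+1)k}\le 2^{(d+1)j}$, using $k\le k_T$ and $j\ge k$ (the spill stays below gap $2^{-k}$). Follow this with a pointwise geometric sum over $k\le j$.

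\emph{Step~2, radius too large.} A ball of radius $2^{-k}/(2L)$ around $y$ with $\Delta(y)$ just above $2^{-(k+1)}$ reaches gaps arbitrarily close to $0$, not gaps $\ge 2^{-(k+2)}$. So the enlargements are not confined to annuli of bounded overlap, and $\int\Delta^{-(d+1)}$ over them is not controlled by the truncated integral. Two fixes:
\begin{itemize}
\item Thicken with radius $2^{-(k+3)}/L$, as the paper does via Assumption~\ref{assump: nice opti sets}. This keeps the enlarged annulus inside $\{3\cdot 2^{-(k+3)}<\Delta\le 9\cdot 2^{-(k+3)}\}$.
\item Or keep the lower bound $2^{k(d+1)}\mathrm{vol}(E_k)$ instead of $\int_{E_k}\Delta^{-(d+1)}$, and use $\sum_{k\le k_T,\ 2^{-k}\gtrsim\Delta(x)}2^{k(d+1)}\lesssim\max(\Delta(x),2^{-k_T})^{-(d+1)}$ pointwise.
\end{itemize}

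\emph{Step~2, use of the assumption.} Assumption~\ref{assump: nice opti sets main} at index $k+1$ only covers points of $\mathcal{X}_{2^{-(k+1)}}$. It therefore does not give your volume bound at $y$ with $\Delta(y)>2^{-(k+1)}$; use the appendix form directly.

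\emph{Shared caveat.} Like the paper, your near-region step silently places the thickened balls in $\mathcal{X}\setminus\mathcal{X}^\star$. This is harmless only when $\mathrm{vol}(\mathcal{X}^\star)=0$. Otherwise it requires a thickness condition relative to $\mathcal{X}\setminus\mathcal{X}^\star$, as in Assumption~\ref{assump:onesided-thickness}.
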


\begin{proof}[Proof  insight]
The key element in this proof comes from the link between packing numbers and our integral of the gap-function. As in \cite{bachoc2021instance,de2024certified}, by using \autoref{assump: nice opti sets main}, we can provide such a bound by using a peeling argument over the level sets of $f$ and upper bounding volume integrals with packing numbers at scale $r$ times the volume of the corresponding $\ell_\infty$ balls of radius $r$. This bound is formally provided in \autoref{lemma : sum integral eq} in the appendix.
The full proof is provided in \autoref{app: proof theorem 1}.
\end{proof}

\begin{remark}\label{remark : regret two part}
The upper bound in \autoref{theorem:instance dependant integral regret bounds} is a truncated integral. When $\Delta(x)\ge 2^{-k_T}$, the integrand is $\Delta(x)^{-(d+1)}$ and corresponds to the cost of ruling out clearly sub-optimal regions. When $\Delta(x)<2^{-k_T}$, the integrand is capped at $2^{(d+1)k_T}$ and accounts for the near-optimal region that cannot be resolved by time $T$. 
\end{remark}

\subsection{Matching lower bounds}
We show below that the above instance-dependent upper bounds are accompanied by lower bounds that match the outer integral term up to logarithmic factors. We present the resulting lower bound directly in terms of integrals over $\mathcal{X}$ in order to facilitate comparison with Theorem~\ref{theorem:instance dependant integral regret bounds}.
The lower bound means the following: for every algorithm in the class, there are instances on which the expected regret cannot be smaller than a quantity that depends on $f$ and $\mathcal{X}$. We restrict our analysis for this lower bound to algorithms that are $\frac{d+1}{d+2}$-consistent, i.e.\ that achieve the standard worst-case rate regardless of the instance $(\mathcal{X},f)$. This is a convenient technical assumption as it excludes trivial or overly conservative algorithms.

\begin{lemma}\label{lemma : lower bound}
Let $\mathcal{A}$ be any $a_0$-consistent algorithm for the class of $L$-Lipschitz functions, with $a_0=\frac{d+1}{d+2}$. Then there exists a constant $c>0$ such that
\[
\mathbb{E}\big[R_T(\mathcal{A},\mathcal{X},f)\big]
\ \ge\ c\int_{\mathcal{X}\setminus \mathcal{X}_{2^{-k_T}}} \frac{dx}{\Delta(x)^{d+1}}.
\]
where $k_T := \lfloor \log_2(T)/(d+2) \rfloor$. Here, $R_T(\mathcal{A},\mathcal{X},f)$ denotes the pseudo regret incurred by algorithm $\mathcal{A}$ on the bandit instance $(\mathcal{X},f)$ over horizon $T$. In particular, this matches the outer part of the truncated integral in
Theorem~\ref{theorem:instance dependant integral regret bounds}, up to logarithmic factors, for large gaps down to the worst-case resolution scale.
\end{lemma}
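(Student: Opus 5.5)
We would prove Lemma~\ref{lemma : lower bound} with the standard instance-dependent lower-bound route for consistent algorithms (Lai--Robbins style, adapted to Lipschitz bandits as in \citet{kleinberg2019bandits,slivkins2019introduction}). The plan has three steps. First, peel the outer region $\mathcal{X}\setminus\mathcal{X}_{2^{-k_T}}$ into annuli $\mathcal{X}_{2^{-k}}\setminus\mathcal{X}_{2^{-(k+1)}}$ for $0\le k< k_T$. Second, show that every point in the annulus at scale $r=2^{-k}$ belongs to a small region that must receive about $r^{-2}\log T$ (or at least $r^{-2}$) samples. Third, convert the resulting sum of packing numbers into the integral via the volume comparison of Assumption~\ref{assump: nice opti sets main}, namely the reverse direction of the argument behind \autoref{lemma : sum integral eq}.

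\textbf{Perturbation construction.} Fix $k<k_T$ and $r=2^{-k}$. Take a maximal $c_0 r/L$-packing $\{x_1,\dots,x_{N_k}\}$ of the annulus $\mathcal{X}_r\setminus\mathcal{X}_{r/2}$. For each $i$ define the bump
\[
f_i(x)=f(x)+\max\bigl(0,\,2r-L\|x-x_i\|_\infty\bigr)\ \ \text{(suitably clipped)},
\]
so that $f_i$ is $2L$-Lipschitz. If the bump has to be $L$-Lipschitz we instead use $\max(f, f(x_i)+2r - L\|x-x_i\|_\infty)$, which keeps the constant $L$ whenever $f$ itself is $l$-Lipschitz with $l\le L$. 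Under $f_i$, the unique near-optimal region is $B_\infty(x_i,c\,r/L)$, where the gain over $f^\star$ is of order $r$. Elsewhere $f_i=f$, and every point of that ball is $\Theta(r)$-suboptimal under $f$. The two instances differ only on $B_i:=B_\infty(x_i,2r/L)$, where $|f-f_i|\lesssim r$. By the divergence decomposition, $\mathrm{KL}(\mathbb{P}_f,\mathbb{P}_{f_i})\lesssim r^2\,\mathbb{E}_f[N_T(B_i)]$, where $N_T(B_i)$ counts the pulls falling in $B_i$. Consistency gives the following. Under $f_i$, an algorithm that pulls $B_i$ fewer than $T/2$ times pays regret $\gtrsim rT$. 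Under $f$, pulling $B_i$ more than $T/2$ times costs $\gtrsim rT$. Since $r\ge 2^{-k_T}\approx T^{-1/(d+2)}$, we have $rT\gtrsim T^{(d+1)/(d+2)}$, and this is exactly what the chosen $k_T$ is calibrated for. The Bretagnolle--Huber inequality then yields
\[
r^2\,\mathbb{E}_f[N_T(B_i)]\ \gtrsim\ \log\frac{rT}{R_T(f)+R_T(f_i)}\ \ge c',
\]
because consistency bounds both regrets by $C T^{a_0}$, up to constants, at scale $r\ge T^{-1/(d+2)}$. Hence $\mathbb{E}_f[N_T(B_i)]\gtrsim r^{-2}$, and each such pull costs $\Delta\gtrsim r$.

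\textbf{Summation.} The balls $B_i$ overlap only boundedly across different $i$ within one scale, because the packing is at scale $r/L$ with bounded multiplicity; across scales they are disjoint in the $\Delta$-level sense. Using this, $R_T(f)\gtrsim\sum_{k<k_T} r_k^{-1}N_k$. Assumption~\ref{assump: nice opti sets main} gives
\[
N_k\gtrsim \mathrm{vol}\bigl(\mathcal{X}_{r}\setminus\mathcal{X}_{r/2}\bigr)\,(L/r)^d,
\]
since the volume of the annulus is covered by $N_k$ balls of volume $v_{r/L}$. Combining, $r^{-1}N_k\gtrsim\int_{\mathcal{X}_r\setminus\mathcal{X}_{r/2}}\Delta(x)^{-(d+1)}dx$, and summing over $k$ yields the stated integral.

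\textbf{Main obstacle.} The delicate step is the middle one: making the consistency argument give a constant (rather than vanishing) lower bound on $r^2\mathbb{E}_f[N_T(B_i)]$ uniformly at all scales down to $2^{-k_T}$, while the perturbed $f_i$ stays in the $L$-Lipschitz class. We would first try to pick the constants in $k_T$ and in the bump height so that $rT \ge 4\,C T^{a_0}$. Alternatively, we could apply the constant-probability form of Bretagnolle--Huber directly, at the cost of the $\log T$ factor that the statement already allows.
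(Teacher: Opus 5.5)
Your last two steps match the paper's: you peel into $\mathcal{X}_{2^{-k}}\setminus\mathcal{X}_{2^{-(k+1)}}$ and use $\mathcal{N}(A,r)\ge \mathrm{vol}(A)/v_r$. The volume step only uses that a maximal packing is a cover, so you do not need Assumption~\ref{assump: nice opti sets main}, which the lemma does not assume. The paper constructs no perturbations: it imports Proposition~\ref{prop:lipschitz-regret} from \citet{shekhar2022instance} at $\Delta_T=T^{-1/(d+2)}$ and only reindexes.

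\textbf{The gap.} Your self-contained replacement for that proposition fails at the perturbation step. The additive bump is not $L$-Lipschitz. For $f_i=\max\{f,\,f(x_i)+2r-L\|x-x_i\|_\infty\}$, both of your structural claims are false.
\begin{enumerate}
\item[(i)] The set where $f_i\neq f$ is $\{x:\ \Delta(x)>L\|x-x_i\|_\infty+\Delta(x_i)-2r\}$. Since $\Delta$ may grow at rate $l$, this set is only contained in $B_\infty\bigl(x_i,2r/(L-l)\bigr)$. If $l=L$ and $f$ decreases at rate $L$ along a ray from $x_i$, it extends along that whole ray. So it is not contained in $B_\infty(x_i,2r/L)$, and your bounded-overlap summation fails.
\item[(ii)] With height $2r$, the set contains every $x^\star\in\mathcal{X}^\star$ with $\|x^\star-x_i\|_\infty<(2r-\Delta(x_i))/L$. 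This happens, e.g., for $d=1$, $f(x)=f^\star-l|x-x^\star|$ with $l>3L/5$ and $|x_i-x^\star|=3r/(4l)$. Then the lower bound on $\mathbb{E}_f[N_T(B_i)]$ can be met entirely by pulls of zero gap. The event $\{N_T(B_i)>T/2\}$ is also not costly under $f$. So neither your Bretagnolle--Huber step nor ``each such pull costs $\Delta\gtrsim r$'' goes through.
\end{enumerate}

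\textbf{The fix.} What is missing is slack in the Lipschitz constant plus a calibrated bump height. Assume $l\le(1-\lambda)L$ for some $\lambda>0$; this is the hypothesis ``$f$ is $(1-\lambda)L$-Lipschitz'' of Proposition~\ref{prop:lipschitz-regret}, which the paper uses implicitly. Set $f_i=\max\{f,\,f^\star+r/8-L\|x-x_i\|_\infty\}$. The modified set becomes $\{\Delta(x)>L\|x-x_i\|_\infty-r/8\}$, and on it:
\begin{itemize}
\item it lies in $B_\infty\bigl(x_i,9r/(8\lambda L)\bigr)$;
\item $|f_i-f|\le 9r/8$;
\item $\Delta\ge r/8$, directly when $L\|x-x_i\|_\infty\ge r/4$, and via $\Delta(x)\ge\Delta(x_i)-l\|x-x_i\|_\infty$ otherwise.
\end{itemize}
Packing at scale $\asymp r/(\lambda L)$, which is the role of $w_k$ in Definition~\ref{def:lipschitz-complexity}, then makes your argument work.

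\textbf{Consistency.} Separately, $a_0$-consistency does not give $R_T\le CT^{a_0}$. It only gives $R_T(g)=o(T^{a})$ for each $a>a_0$ and each fixed $g$, while your alternatives at the finest scales depend on $T$. At $r\approx T^{-1/(d+2)}$, where $rT\approx T^{a_0}$, there is no constant to tune. The constant-probability form of Bretagnolle--Huber does not help either. The paper has the same looseness: it plugs $a=a_0$ into a proposition stated at scale $n^{-(1-a)}$. So this is not specific to your proposal. A rigorous version needs either scales $T^{-(1-a)}$ with $a>a_0$, or a uniform bound $\sup_g R_T(g)\le CT^{a_0}$ over the class.
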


The full proof of \autoref{lemma : lower bound} is deferred to \autoref{appendix : further bandits results}.

\subsection{Asymptotic bounds}\label{sec:asymptotic dim opti}
As in finite-armed bandits, instance-dependent bounds can be turned into worst-case rates by controlling the geometry of near-optimal sets. We sketch how Theorem~\ref{theorem:instance dependant integral regret bounds} yields worst-case and zooming-type bounds, and we highlight the improvement that arises when the set of maximizers $\mathcal{X}^\star$ is non-trivial. Let us denote by $d^\star$ the dimension of the maximizing set $\mathcal{X}^\star$, defined as the smallest $d^\star\ge 0$ such that there exists $c>0$ with $\mathcal{M} \left ( \mathcal{X}^\star, r \right ) \leq c\, r^{-d^\star}$ for all $r>0$.

\begin{corollary}\label{lemma:asymptotic}
The regret incurred by Algorithm~\ref{algorithm: high level} with Algorithm~\ref{algo : successive elimination} on any Lipschitz bandit instance $(\mathcal{X},f)$ satisfies
\[
R_T \;\le\; \tilde{\mathcal{O}}\!\left(T^{\frac{d_z+1}{\max(d_z,d^\star)+2}}\right),
\]
under the conditions stated in Section~\ref{sec:asymptotic dim opti}.
\end{corollary}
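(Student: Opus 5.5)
I plan to work from Lemma~\ref{lemma:first regret bounds} rather than the integral form, since packing numbers match the zooming dimension directly. Throughout, $\mathcal{N}$ and $\mathcal{M}$ are interchanged at the cost of constants via $\mathcal{N}(\mathcal{Y},2\epsilon)\le\mathcal{M}(\mathcal{Y},\epsilon)$ together with the doubling property of $\ell_\infty$ balls in $[0,1]^d$. The scale ratio between $2^{-k}/L$ and $2^{-k}/16$ likewise only costs a factor depending on $L$ and $d$.

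\textbf{Step 1 (first sum).} By definition of $d_z$, $\mathcal{N}(\mathcal{X}_{2^{-k}}\setminus\mathcal{X}_{2^{-(k+1)}},2^{-k}/L)\lesssim C\,2^{kd_z}$. Hence
\[
\sum_{k\le k_T}2^k\,\mathcal{N}\lesssim \sum_{k\le k_T}2^{k(d_z+1)}\lesssim 2^{k_T(d_z+1)}.
\]

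\textbf{Step 2 (second sum).} I would decompose $\mathcal{X}_{2^{-(k_T+1)}}\setminus\mathcal{X}^\star$ into the annuli $\mathcal{X}_{2^{-j}}\setminus\mathcal{X}_{2^{-(j+1)}}$ for $j>k_T$. Each such annulus is coverable at scale $2^{-j}/16$ with $C2^{jd_z}$ balls. At the coarser scale $2^{-k}$ with $k\le k_T<j$, however, it is better to bound the whole set by $\mathcal{X}_{2^{-k_T}}$. That set lies in a $2^{-k_T}$-type neighborhood of the near-optimal structure, and I would bound its covering number at scale $2^{-k}/L$ by $\lesssim 2^{k\max(d_z,d^\star)}$.

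This is the \emph{condition} I expect Section~\ref{sec:asymptotic dim opti} to impose, and it is the main obstacle. Concretely, I would assume that near-optimal sets at coarse scale look like a thickening of $\mathcal{X}^\star$ plus zooming-type annuli. Two ingredients support this. First, the covering of $\mathcal{X}^\star$ contributes $\mathcal{M}(\mathcal{X}^\star,2^{-k})\lesssim 2^{kd^\star}$. Second, for $k\le k_T$ the points of $\mathcal{X}_{2^{-(k_T+1)}}$ fall within distance $O(2^{-k})$ of $\mathcal{X}^\star$ or of the annulus covers. If this neighborhood property fails, I would add it as an explicit geometric hypothesis, namely $\mathcal{M}(\mathcal{X}_r,r/L)\lesssim r^{-\max(d_z,d^\star)}$.

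With that bound, the second term becomes
\[
2^{-k_T}\sum_{k\le k_T}2^{k(2+\max(d_z,d^\star))}\lesssim 2^{k_T(1+\max(d_z,d^\star))}.
\]
This is at least $2^{k_T(d_z+1)}$, so in total $R_T\lesssim \log T\cdot 2^{k_T(1+\max(d_z,d^\star))}$. The $\log T$ factor comes from the choice $\delta=T^{-3}$, as in Theorem~\ref{theorem:instance dependant integral regret bounds}.

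\textbf{Step 3 (relating $k_T$ to $T$).} The relation $T\ge c\,2^{2(k_T-2)}\mathcal{M}(\mathcal{X}_{r_{k_T}},2^{-(k_T-2)}/L)$ needs a \emph{lower} bound on this covering number. Since $\mathcal{X}_{r}\supseteq\mathcal{X}^\star$, we have $\mathcal{M}\gtrsim 2^{k_Td^\star}$ whenever $d^\star$ is attained as a lower growth rate as well; I would assume this regularity. This gives $T\gtrsim 2^{k_T(2+d^\star)}$.

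Trivially also $T\gtrsim 2^{2k_T}$. More importantly, I would balance the two regimes. For the unresolved part, the regret is at most $T\cdot 2^{-k_T}$ times constants (each step has gap $\lesssim 2^{-k_T}$). Using this alongside the estimate from Step 2, take the minimum over $K$ of
\[
2^{K(d_z+1)}+T2^{-K}\quad\text{with}\quad 2^{K(\max(d_z,d^\star)+2)}\approx T.
\]
Concretely, $2^{k_T}\lesssim T^{1/(\max(d_z,d^\star)+2)}$ follows from $T\gtrsim 2^{k_T(2+\max(d_z,d^\star))}$. For the $d_z$ part I would use the lower bound $\mathcal{M}(\mathcal{X}_r)\gtrsim$ annulus covers, or simply the $d^\star$ bound.

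Plugging the first-sum bound $2^{k_T(d_z+1)}$ gives $T^{(d_z+1)/(\max+2)}$. The subtle point is that the second term scales as $2^{k_T(1+\max)}$, not $2^{k_T(1+d_z)}$. So I would instead bound the second term more finely. Its summand $2^{-k_T}2^{2k}\mathcal{N}(\cdot,2^{-k})$ is controlled using $\mathcal{N}(\mathcal{X}_{r_{k_T}},2^{-k}/L)\lesssim T2^{-2k}$, which follows from the phase-length relation applied at each completed phase $k\le k_T-1$. This makes it $\lesssim k_T\,T2^{-k_T}\lesssim \log T\cdot T^{1-1/(\max+2)}$. Finally I would check that $1-\frac{1}{\max+2}$ versus $\frac{d_z+1}{\max+2}$ yields the claimed exponent, which holds when $\max=d^\star\ge d_z$. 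Reconciling this exponent bookkeeping when $d^\star>d_z$ is where I would spend the most care.
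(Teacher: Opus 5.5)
You correctly extract $2^{k_T(\max(d_z,d^\star)+2)}\lesssim T$ from the time-budget relation in Step~3, as the paper does, and Step~1 also matches. The gap is in how you charge the near-optimal term, and it costs you exactly the improvement the corollary claims.

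In Step~2 you bound this term through $\mathcal{N}(\mathcal{X}_{2^{-k_T}},2^{-k}/L)\lesssim 2^{k\max(d_z,d^\star)}$. This lets the cover of $\mathcal{X}^\star$ generate regret, giving $2^{k_T(\max(d_z,d^\star)+1)}$ and hence only $T^{(\max(d_z,d^\star)+1)/(\max(d_z,d^\star)+2)}$. But the second term of Lemma~\ref{lemma:first regret bounds} is over $\mathcal{X}_{2^{-(k_T+1)}}\setminus\mathcal{X}^\star$, because arms in $\mathcal{X}^\star$ have $\Delta=0$.

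The paper uses monotonicity of packing numbers in the scale to collapse that sum to $\tfrac{4}{3}\,2^{k_T}\mathcal{N}(\mathcal{X}_{2^{-k_T}}\setminus\mathcal{X}^\star,2^{-k_T}/L)$. It then controls the covering of $\mathcal{X}_r\setminus\mathcal{X}^\star$ at scale $r$ by $O(r^{-d_z})$, viewing it as the union of the dyadic annuli below level $r$. So both terms are $\tilde{\mathcal{O}}(2^{k_T(d_z+1)})$. This is increasing in $k_T$, so the upper bound $2^{k_T}\lesssim T^{1/(\max(d_z,d^\star)+2)}$ plugs in directly.

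The improvement comes entirely from a decoupling: the never-eliminated arms covering $\mathcal{X}^\star$ consume time budget, which slows the phases, but they cost no regret. Your accounting charges $\mathcal{X}^\star$ on both sides, so it cannot produce the exponent $\frac{d_z+1}{\max(d_z,d^\star)+2}$.

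Your fallback does not fix this, for two reasons.

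\begin{itemize}
\item The bound $k_T\,T2^{-k_T}$ is decreasing in $k_T$. Turning it into a rate therefore needs a \emph{lower} bound $2^{k_T}\gtrsim T^{1/(\max(d_z,d^\star)+2)}$, which your Step~3 does not provide; it only gives the upper bound.
\item Even granting that lower bound, the result is $T^{1-1/(\max(d_z,d^\star)+2)}=T^{(\max(d_z,d^\star)+1)/(\max(d_z,d^\star)+2)}$. Your closing check is incorrect: this equals $T^{(d_z+1)/(\max(d_z,d^\star)+2)}$ only when $\max(d_z,d^\star)=d_z$.
\end{itemize}

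When $d^\star>d_z$, the only regime where the corollary says something new, your rate is strictly worse than the claimed one. It is even worse than the classical $T^{(d_z+1)/(d_z+2)}$, since $x\mapsto\frac{x+1}{x+2}$ is increasing.

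For a concrete case, take $\mathcal{X}=[0,1]^2$ and $f(x)=-\max(0,x_1-\tfrac12)$. Then $\mathcal{X}^\star=[0,\tfrac12]\times[0,1]$ gives $d^\star=2$, and the annuli are strips of width $r/2$ and length $1$, so $d_z=1$. The corollary claims $T^{1/2}$, while your argument yields $T^{3/4}$.
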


\begin{proof}[Proof sketch]
    This sketch provides the main insight into how the exponents in the regret appear from the computations. For a complete exposition of the proof steps, see \autoref{appendix : further bandits results}. 
    
    For a worst-case instance with zooming dimension $d_z$ and maximizer dimension $d^\star$, the covering numbers satisfy $\mathcal{M} \left ( \mathcal{X}_r\setminus \mathcal{X}_{r/2} , r \right ) = \Omega(r^{-d_z})$ and $\mathcal{M} \left ( \mathcal{X}^\star, r \right ) = \Omega(r^{- d^\star})$.
    
    The sampling-budget relation from Lemma~\ref{lemma:first regret bounds} bounds the last completed phase $k_T$ by yielding $T \geq \Omega(2^{k_T(d^\star+2)})$. Concurrently, the standard zooming analysis of the sampling budget implies $T \ge \Omega(2^{k_T(d_z+2)})$, as the number of pulls in phase $k_T$ scales as $2^{2k_T}\mathcal{M}(\mathcal{X}_{2^{-k_T}}\setminus\mathcal{X}_{2^{-(k_T-1)}}, 2^{-k_T}) \ge \Omega(2^{k_T(d_z+2)})$. Combining these gives $2^{k_T (\max(d_z, d^\star) + 2)} \le \tilde{\mathcal{O}}(T)$, which implies $2^{k_T} \le \tilde{\mathcal{O}}\!\left(T^{\frac{1}{\max(d_z, d^\star) + 2}}\right)$.

    We then decompose the truncated integral upper bound of \autoref{theorem:instance dependant integral regret bounds} into a sum over the dyadic gap scales $\mathcal{X}_{2^{-k}}\setminus \mathcal{X}_{2^{-(k+1)}}$ for $k \le k_T$. Bounding the volume of each annulus using its corresponding covering number gives:
    \begin{align*}
    R_T &\le C \log(T) \left( \sum_{k=0}^{k_T} \frac{\mathcal{M} \left ( \mathcal{X}_{2^{-k}}\setminus \mathcal{X}_{2^{-(k+1)}}, 2^{-k} \right )}{2^{-k}} + \frac{\mathcal{M} \left ( \mathcal{X}_{2^{-(k_T+1)}} \setminus \mathcal{X}^\star, 2^{-(k_T+1)} \right )}{2^{-k_T}} \right) \\
    &\le \tilde{\mathcal{O}} \left( \sum_{k=0}^{k_T} 2^{k(d_z+1)} + 2^{k_T(d_z+1)} \right) = \tilde{\mathcal{O}}\!\left(2^{k_T(d_z+1)}\right).
    \end{align*}
    Plugging in the bound on $2^{k_T}$ yields $R_T \le \tilde{\mathcal{O}}\!\left(T^{\frac{d_z+1}{\max(d_z, d^\star)+2}}\right)$.
\end{proof}

\autoref{lemma:asymptotic} shows that the range of possible asymptotic regret behaviour is broader than what was described in the literature up to now. While the above corollary definitely widens the range of powers of $T$ the regret can behave as, it is not clear from it alone what exponents are or are not possible.

We provide below a more precise characterization of the possible exponent by bounding how the zooming dimension and the dimension of $\mathcal{X}^\star$ are related, under mild geometric conditions. 

\begin{lemma}\label{lemma : relating zooming and optimal dimension}
    Let $\mathcal{X},f$ be a Lipschitz bandit instance, and $d_z,d^\star$ be the corresponding zooming dimension and optimal dimension. Assume that $\mathcal{X}^\star$ is connected and $\exists r > 0, \mathcal{B}(\mathcal{X}^\star,r) \subset \mathcal{X}$, then we have the following inequality: \[d_z +1 \geq d^\star\]

\end{lemma}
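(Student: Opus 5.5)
Assume $\mathcal{X}^\star \neq \emptyset$ and $d^\star>0$; otherwise there is nothing to prove. The plan is to show that, at every small scale $r$, the annulus $\mathcal{X}_r\setminus\mathcal{X}_{r/2}$ contains at least of order $r^{-(d^\star-1)}$ points that are $r/16$-separated. Then $\mathcal{M}(\mathcal{X}_r\setminus\mathcal{X}_{r/2},r/16)\gtrsim r^{1-d^\star}$, up to a constant from $\mathcal{N}(\cdot,2\epsilon)\le\mathcal{M}(\cdot,\epsilon)$. Comparing with $C r^{-d_z}$ as $r\to 0$ forces $d_z\ge d^\star-1$. Since $d^\star$ is defined as a smallest exponent, I will use the lower-bound side: for any $d'<d^\star$ there are arbitrarily small $\epsilon$ with $\mathcal{M}(\mathcal{X}^\star,\epsilon)\ge \epsilon^{-d'}$. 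It then suffices to get the annulus bound along such a sequence of scales.

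\textbf{Step 1: Lipschitz growth near $\mathcal{X}^\star$.} For $x\in\mathcal{X}$ we have $\Delta(x)\le l\, d_\infty(x,\mathcal{X}^\star)\le L\, d_\infty(x,\mathcal{X}^\star)$. Hence every point within distance $r/(2L)$ of $\mathcal{X}^\star$ lies in $\mathcal{X}_{r/2}$. The points in the annulus therefore sit at distance at least $r/(2L)$ from $\mathcal{X}^\star$.

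\textbf{Step 2: each piece of $\mathcal{X}^\star$ has annulus points nearby.} Fix $\epsilon$ small, take a maximal $\epsilon$-separated family $\{z_i\}$ in $\mathcal{X}^\star$ of size $\ge \epsilon^{-d'}$, and set $r$ proportional to $L\epsilon$. Around each $z_i$ I want a point $y_i\in\mathcal{X}_r\setminus\mathcal{X}_{r/2}$ with $d_\infty(y_i,z_i)\lesssim \epsilon$. To get it, use the hypothesis $\mathcal{B}(\mathcal{X}^\star,\rho)\subset\mathcal{X}$: once $\epsilon$ is below $\rho$, the tube around $\mathcal{X}^\star$ lies in $\mathcal{X}$. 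I would then like to move from $z_i$ along a direction leaving $\mathcal{X}^\star$. By continuity $\Delta$ goes from $0$ to something positive, and the intermediate value theorem along the segment yields a point with $\Delta=3r/4$. Its distance from $z_i$ is at least $3r/(4L)$ by Lipschitzness, which fixes the lower end.

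The upper end is the problem: we need to reach gap level $r$ within distance $O(\epsilon)$. This is the main obstacle, because the Lipschitz condition gives no lower growth. For example, $f$ could be nearly flat off $\mathcal{X}^\star$, and the nearest annulus point may be far away. I would first try to handle this by not forcing locality, and instead using connectedness together with the shape of $\mathcal{X}^\star$. Since $d^\star>0$ and $\mathcal{X}^\star$ is connected, it is not a single point, and it is contained in an open set $\mathcal{X}\supseteq\mathcal{B}(\mathcal{X}^\star,\rho)$. Assuming $\mathcal{X}^\star\ne\mathcal{X}$, the boundary $\partial\mathcal{X}^\star$ relative to the tube is a nontrivial separating set. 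For a connected set of dimension $d^\star$, I expect this boundary to have covering dimension at least $d^\star-1$; this is the source of the "$+1$". Along each boundary piece, $\Delta$ becomes positive immediately outside. Then the set $\{x:\Delta(x)\in(r/2,r]\}$ must, for each $r$, separate $\mathcal{X}_{r/2}$ from the exterior inside the tube, by continuity. Such a separating set surrounds a set containing $\mathcal{X}^\star$, and by a topological-dimension or isoperimetric-type covering argument its $r/16$-covering number is at least of order that of an $(d^\star-1)$-dimensional cross-section, namely $r^{-(d^\star-1)}$ up to constants.

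\textbf{Step 3: conclude.} Combining the bound $\mathcal{M}(\mathcal{X}_r\setminus\mathcal{X}_{r/2},r/16)\gtrsim r^{-(d'-1)}$ along a sequence $r\to0$ with the zooming bound $\le C r^{-d_z}$ gives $d_z\ge d'-1$ for every $d'<d^\star$, hence $d_z+1\ge d^\star$.

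The delicate point is the separation and covering lower bound in Step 2. I expect to need the tube hypothesis so that $\mathcal{X}$ does not cut off the level sets, and connectedness so that the separating set cannot be concentrated on a few components.
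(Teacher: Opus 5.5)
There is a genuine gap, and it is the one you flag yourself. The inequality $\mathcal{M}(\mathcal{X}_r\setminus\mathcal{X}_{r/2},r/16)\gtrsim r^{-(d^\star-1)}$ in Step 2 is the whole content of the lemma, and you only assert it. A dimension statement about each separating set, topological or otherwise, controls that set's covering numbers only as the scale tends to $0$ with the set held fixed. What you need instead is the covering number of the $r$-dependent set $\mathcal{X}_r\setminus\mathcal{X}_{r/2}$ at the coupled scale $r/16$, uniformly in $r$. An isoperimetric route would require at least a relative isoperimetric inequality on the tube and a coarea argument for $\Delta$, and neither is supplied. The expectation that $\partial\mathcal{X}^\star$ has dimension at least $d^\star-1$ does not close the gap either. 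As your own locality obstruction shows, where $f$ is flat the annulus can lie far from $\partial\mathcal{X}^\star$, so passing from $\partial\mathcal{X}^\star$ to the separating set is exactly the missing step. Connectedness, which you invoke at the end, is in fact not needed.

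The missing idea is that the ``$+1$'' comes from a one-dimensional sweep, not from a codimension-one boundary. This is also the mechanism of the paper's proof. The paper covers $\mathcal{X}^\star$ by adding $O(1/r)$ points on segments joining a fixed $y\in\mathcal{X}^\star$ to an $r$-cover of $\mathcal{X}_r\setminus\mathcal{X}^\star$, which gives $\mathcal{M}(\mathcal{X}^\star,r)\lesssim r^{-1}\mathcal{M}(\mathcal{X}_r\setminus\mathcal{X}^\star,r)$.

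Your separation observation can be finished in the same spirit with a single annulus. Let $\rho$ be the tube radius.
\begin{itemize}
\item $\mathcal{X}^\star$ is compact: its limit points lie in the tube, hence in $\mathcal{X}$, and $f$ is continuous. Therefore $\eta:=\min\{\Delta(x): d_\infty(x,\mathcal{X}^\star)=\rho/2\}>0$.
\item For $z\in\mathcal{X}^\star$, follow $t\mapsto z+te_1$ until it first reaches distance $\rho/2$ from $\mathcal{X}^\star$. This segment lies in $\mathcal{X}$, and for $r<\eta$ the intermediate value theorem gives a point on it with $\Delta=3r/4$.
\item Let $p$ be the projection dropping the first coordinate. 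The previous step gives $p(\mathcal{X}^\star)\subseteq p(\mathcal{X}_r\setminus\mathcal{X}_{r/2})$.
\item Since $\ell_\infty$ balls are products and $p$ maps them onto $\ell_\infty$ balls of the same radius,
\[
\mathcal{M}(\mathcal{X}^\star,\epsilon)\le\Bigl\lceil\frac{1}{2\epsilon}\Bigr\rceil\,\mathcal{M}\bigl(p(\mathcal{X}^\star),\epsilon\bigr)\le\Bigl\lceil\frac{1}{2\epsilon}\Bigr\rceil\,\mathcal{M}\bigl(\mathcal{X}_r\setminus\mathcal{X}_{r/2},\epsilon\bigr).
\]
\item Taking $\epsilon=r/16$ and applying the zooming bound gives $\mathcal{M}(\mathcal{X}^\star,r/16)\le\lceil 8/r\rceil\,C\,r^{-d_z}$ for all $r<\min(\eta,1)$. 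Hence $d^\star\le d_z+1$.
\end{itemize}
This is your Step 2 in the form $\mathcal{M}(\mathcal{X}_r\setminus\mathcal{X}_{r/2},r/16)\gtrsim r\,\mathcal{M}(\mathcal{X}^\star,r/16)$, which is all that is needed. It requires neither the subsequence $d'<d^\star$ nor anything about $\partial\mathcal{X}^\star$. Because it uses only the annulus appearing in the definition of $d_z$, it also avoids the paper's intermediate bound $\mathcal{M}(\mathcal{X}_r\setminus\mathcal{X}^\star,r)=O(r^{-d_z})$.
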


\begin{proof}[Proof insight]
    Here, we detail the main insights from the proof. The main idea is to characterize the covering number of $\mathcal{X}_r$ at scale $r$. 
    
    To that end, we first focus on $\mathcal{X} \setminus \mathcal{X}^\star \supset \cup_{k\geq 1} \mathcal{X}_{r/2^{k-1}} \setminus \mathcal{X}_{r/2^k}$. The doubling properties of covering numbers in $\mathbb{R}^d$ ensure that $\mathcal{M} \left (\mathcal{X}_r \setminus \mathcal{X}^\star , r \right ) \leq \mathcal{O} \left( r^{-d_z} \right)$. It is now sufficient to consider any point $y \in \mathcal{X}^\star$. We can use our assumptions that $\mathcal{X}^\star$ is connected and $\exists r \geq 0, \mathcal{B}(\mathcal{X}^\star,r) \subset \mathcal{X}$ to show that the affine hull of $\mathcal{X}_r\setminus \mathcal{X}^\star$ contains $\mathcal{X}^\star$. 
    
    Then, starting from an optimal covering of $\mathcal{X}_r \setminus \mathcal{X}^\star$ at scale $r$, we construct a valid covering of this affine hull by adding points along the segments between each covering point and $y$ (adding at most $\mathcal{O}(1/r)$ points per segment). Since this yields a covering of $\mathcal{X}^\star$ at scale $r$, its size is bounded below by $\mathcal{M}(\mathcal{X}^\star, r)$. Therefore, we obtain $\mathcal{M}(\mathcal{X}^\star, r) \le \mathcal{O}(1/r) \cdot \mathcal{M}(\mathcal{X}_r \setminus \mathcal{X}^\star, r)$, which implies $\mathcal{O}(r^{-d^\star}) \le \mathcal{O}(r^{-(d_z+1)})$ and directly yields the bound $d_z+1 \geq d^\star$.
    
    The full proof of \autoref{lemma : relating zooming and optimal dimension} is deferred to \autoref{appendix : further bandits results}.
\end{proof}

\section{Lipschitz Experts problem}

This section complements the results of Sections \ref{sec : introduction} and \ref{sec:algo and inst bounds} by considering the full-information analogue of stochastic Lipschitz bandits, often referred to as \emph{Lipschitz experts}. As in the bandit setting, our goal is to obtain regret guarantees that are instance-dependent, i.e., that depend explicitly on the problem instance and the horizon $T$. We also derive asymptotic zooming-dimension-type bounds and highlight how the geometry of the maximizer set $\mathcal{X}^\star$ can refine the resulting worst-case rates.

\subsection{Setting}

We adopt the \emph{uniformly Lipschitz experts} model (cf.\ \cite[Section~8.2]{kleinberg2019bandits}). A problem instance consists of a metric action space $(\mathcal{X},d_\infty)$ with $\mathcal{X}\subset[0,1]^d$, and a probability distribution $P$ over reward functions $g:\mathcal{X}\to[0,1]$ such that every $g$ in the support of $P$ fulfills Assumption~\ref{assump:Lipschitz}.

At each round $t=1,\ldots,T$:
(i) Nature draws a reward function $f_t \sim P$ i.i.d.;
(ii) the learner selects an action $x_t\in\mathcal{X}$;
(iii) the learner observes the entire function $f_t(\cdot)$ (full information) and receives reward $f_t(x_t)$.

Define the mean reward function
\[
f(x) := \mathbb{E}_{f_t\sim P}[f_t(x)], \qquad x\in\mathcal{X},
\]
and let $f^\star := \sup_{x\in\mathcal{X}} f(x)$.
Since each $f_t$ is $L$-Lipschitz, the mean function $f$ is also $L$-Lipschitz.
We measure performance by the expected cumulative regret with respect to the best action for the mean function:
\[
R_T := \mathbb{E}\!\left[\sum_{t=1}^T \bigl(f^\star - f(x_t)\bigr)\right].
\]

\subsection{Algorithm}

The full-information feedback makes the problem simpler than the bandit setting: observing $f_t(\cdot)$ means that the information acquired does not depend on the chosen action $x_t$. A natural approach is therefore to estimate the mean function $f$ by empirical averaging and to choose to play greedily with respect to this estimate. Instead of choosing greedily, our approach is to sample uniformly among the points with "reasonably good estimates". This approach ensures that our algorithm is able to leverage the fact that the maximizing sets $\mathcal{X}^\star$ can be "large" in comparison to the overall action space, when $d^\star\geq d_z$. 

We present in Algorithm~\ref{algorithm: full info} a relatively simple algorithm, which applies the above principle.

\begin{algorithm}
    \caption{Sequential Optimism with Uniform Sampling (SOUS)}
    \label{algorithm: full info}
    \begin{algorithmic}[1]
    \State \textbf{Input:} $\mathcal{X}, \delta$
    \State \textbf{Output:} $x_1, \dots, x_T$
    \State \textbf{Initialize:} $\hat{f}_0 := 0$
    \For{$t = 1, \dots, T$}
        \If{$t = 1$}
            \State Set $\mathcal{A}_1 := \mathcal{X}$
        \Else
            \State Let $\delta_{t-1} = \frac{6\delta}{\pi^2 (t-1)^2}$
            \State Set $\epsilon_{t-1} = 2L \sqrt{\frac{2d \log((t-1) L) + \log \left ( 1/\delta_{t-1} \right )}{t-1}}$
            \State Set $\mathcal{A}_t:=\{x \in \mathcal{X} \mid \hat{f}_{t-1}(x) \geq \max \hat{f}_{t-1} -\epsilon_{t-1} \}$
        \EndIf
        \State Play $x_t$, sampled uniformly in $\mathcal{A}_t$
        \State Observe $f_t$ and set $\hat{f}_t := \frac{1}{t}\sum_{k=1}^t f_k$
    \EndFor
    \end{algorithmic}
\end{algorithm}

\begin{remark}
    In a continuous domain, computing $\widehat{f}_t$, determining $\mathcal{A}_t$, and sampling uniformly from it may be computationally intractable. In practice, one can discretize $\mathcal{X}$ at a sufficiently fine scale (e.g., using a uniform grid of $O(\varepsilon_T^{-d})$ points) while only slightly inflating the regret; the analysis then applies with standard integral approximations. Alternatively, the discretization oracles from Section~\ref{sec:algo and inst bounds} can be adapted.
\end{remark}

\subsection{Regret guarantees}

As in the bandit case, we state an instance-dependent bound that separates (i) the cost of learning to rule out clearly suboptimal regions and (ii) the residual regret from the near-optimal region at the horizon-dependent resolution.

\begin{restatable}{theorem}{InstanceDependentUniformlyExperts}\label{thm:full-info-instance}
Consider a uniformly Lipschitz experts instance $(\mathcal{X},d_\infty,P)$ with mean function $f$.
Assuming a boundary regularity condition on the level sets (akin to \autoref{assump: nice opti sets}), there exists a choice of confidence level $\delta=\delta(T)$ (e.g., $\delta=T^{-3}$) such that Algorithm~\ref{algorithm: full info} satisfies
\begin{equation}
\label{eq:regret full info instance dependant}
R_T
\;\le\;
C\,\log(T) \left (\int_{\mathcal{X}\setminus \mathcal{X}_{\epsilon_T}}
\frac{1}{\Delta(x) \text{vol}(\mathcal{X}_{\Delta(x)/2})}\,dx
\;+\;
\sum_{t=1}^T \frac{1}{\text{vol}(\mathcal{X}_{\epsilon_t})} \int_{\mathcal{X}_{\epsilon_T}} \Delta(x) dx \right ),
\end{equation}
where the integral is with respect to Lebesgue measure on $[0,1]^d$ restricted to $\mathcal{X}$,
$C>0$ is a universal constant, and $\epsilon_t := \tilde{\mathcal{O}}(t^{-1/2})$ for all $t$.
\end{restatable}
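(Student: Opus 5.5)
We will prove \autoref{thm:full-info-instance} by working on a good event and then decomposing the per-round regret of uniform sampling over $\mathcal{A}_t$ according to level sets of $\Delta$.

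\textbf{Step 1 (uniform concentration).} We aim for a good event $\mathcal{E}$ on which, for all $t\ge 1$, $\sup_{x\in\mathcal{X}}|\hat f_t(x)-f(x)|\le \epsilon_t/2$. Fix $t$ and a grid $G_t$ of $\mathcal{X}$ at scale $1/(tL)$ in $\ell_\infty$, so $|G_t|\le (tL)^d$. Hoeffding's inequality for $[0,1]$-valued $f_k(x)$, combined with a union bound over $G_t$ and the budget $\delta_t$, controls the grid points. Lipschitzness of every $f_k$ and of $f$ then extends the bound to all of $\mathcal{X}$ at an additive cost $O(1/t)$, which is absorbed into $\epsilon_t$; this is why the $2d\log(tL)$ term appears. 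Summing $\delta_t$ over $t$ gives $\mathbb{P}(\mathcal{E}^c)\le\delta=T^{-3}$, so the regret on $\mathcal{E}^c$ contributes $O(1)$.

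\textbf{Step 2 (sandwiching the active set).} On $\mathcal{E}$, with $\epsilon_{t-1}$ as the threshold, every maximizer survives. Moreover, $\mathcal{X}_{\epsilon_{t-1}/c}\subseteq\mathcal{A}_t\subseteq \mathcal{X}_{2\epsilon_{t-1}}$ up to constants: a point with $\Delta(x)>2\epsilon_{t-1}$ has $\hat f_{t-1}(x)<\max\hat f_{t-1}-\epsilon_{t-1}$, and conversely. Since $x_t$ is uniform on $\mathcal{A}_t$,
\[
\mathbb{E}[\Delta(x_t)\mid\mathcal{E}]\le \frac{1}{\mathrm{vol}(\mathcal{X}_{\epsilon_{t-1}/c})}\int_{\mathcal{X}_{2\epsilon_{t-1}}}\Delta(x)\,dx .
\]
The boundary-regularity condition (analogous to \autoref{assump: nice opti sets main}) is used here to guarantee that these volumes are positive and comparable at constant rescalings of $\epsilon$. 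This gives $\mathrm{vol}(\mathcal{X}_{\epsilon/c})\gtrsim \mathrm{vol}(\mathcal{X}_{\epsilon})$, modulo constants absorbed into $C$.

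\textbf{Step 3 (splitting and Fubini).} We split the integral over $\mathcal{X}_{2\epsilon_{t-1}}$ into $\mathcal{X}_{\epsilon_T}$ and $\mathcal{X}_{2\epsilon_{t-1}}\setminus\mathcal{X}_{\epsilon_T}$. The first piece, summed over $t$, is exactly the second term of \eqref{eq:regret full info instance dependant}. For the second piece we exchange sum and integral:
\[
\sum_t \frac{\mathbf 1\{\Delta(x)\le 2\epsilon_{t-1}\}}{\mathrm{vol}(\mathcal{X}_{\epsilon_{t-1}})}\le \frac{N(x)}{\mathrm{vol}(\mathcal{X}_{\Delta(x)/2})},
\]
using monotonicity of $\epsilon\mapsto\mathrm{vol}(\mathcal{X}_\epsilon)$. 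Here $N(x)=\#\{t:\epsilon_{t-1}\ge\Delta(x)/2\}$, and since $\epsilon_t=\tilde O(t^{-1/2})$ we get $N(x)\lesssim \log(T)/\Delta(x)^2$. Multiplying by $\Delta(x)$ yields the integrand $\log T/(\Delta(x)\mathrm{vol}(\mathcal{X}_{\Delta(x)/2}))$, which gives the first term.

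\textbf{Main obstacle.} The delicate step is Step 2's volume comparison. Threshold constants shift $\epsilon$ by factors of $2$ to $4$, so we need $\mathrm{vol}(\mathcal{X}_{a\epsilon})$ and $\mathrm{vol}(\mathcal{X}_{\epsilon})$ to be comparable at the relevant scales. We first try to bypass this entirely by tuning the threshold so that the inner inclusion $\mathcal{X}_{\epsilon_{t-1}}\subseteq\mathcal{A}_t$ holds exactly; this is possible because concentration at level $\epsilon_{t-1}/2$ yields it. That places all constant rescalings on the outer set, where they only affect the integration domain, which is harmless. The logarithmic factor from $N(x)$, the $d\log(tL)$ dependence, and the $\log$ in $\epsilon_t$ then combine into $C\log T$.
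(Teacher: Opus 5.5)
Your skeleton is the paper's: uniform concentration on a net plus Lipschitz extension, sandwiching $\mathcal{A}_t$ between level sets, the volume-ratio bound for uniform sampling, and Fubini with $N(x)\lesssim \log T/\Delta(x)^2$. However, the sandwich step fails with your constants, and the ``bypass'' in your last paragraph is false as stated. If $\sup_x|\hat f_{t-1}(x)-f(x)|\le\eta$ and the threshold is $\epsilon_{t-1}$, the correct inclusions are $\mathcal{X}_{\epsilon_{t-1}-2\eta}\subseteq\mathcal{A}_t\subseteq\mathcal{X}_{\epsilon_{t-1}+2\eta}$. At your Step~1 level $\eta=\epsilon_{t-1}/2$, the inner set is only $\mathcal{X}^\star$, which may be Lebesgue-null (e.g.\ a unique maximizer). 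Then $\mathrm{vol}(\mathcal{A}_t)$ has no usable lower bound, the ``conversely'' half of Step~2 breaks, and no volume-comparability assumption can repair it. More generally, no $\eta>0$ gives $\mathcal{X}_{\epsilon_{t-1}}\subseteq\mathcal{A}_t$ with threshold $\epsilon_{t-1}$. The paper instead uses $\eta=\epsilon_{t-1}/4$ (\autoref{lem:uniform_concentration}) and gets $\mathcal{X}_{\epsilon_{t-1}/2}\subseteq\mathcal{A}_t\subseteq\mathcal{X}_{2\epsilon_{t-1}}$. This produces denominators $\mathrm{vol}(\mathcal{X}_{\Delta(x)/4})$ and $\mathrm{vol}(\mathcal{X}_{\epsilon_{t-1}/2})$, which the paper then compares with $\mathrm{vol}(\mathcal{X}_{\Delta(x)/2})$ and $\mathrm{vol}(\mathcal{X}_{\epsilon_t})$ by appealing to ``standard scaling assumptions''.

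Your claim that outer rescalings are harmless is also wrong. In the Fubini step, $x$ is counted at round $t$ whenever $\Delta(x)\le c_{\mathrm{out}}\epsilon_{t-1}$, while the guaranteed denominator is only $\mathrm{vol}(\mathcal{X}_{c_{\mathrm{in}}\epsilon_{t-1}})\ge\mathrm{vol}(\mathcal{X}_{(c_{\mathrm{in}}/c_{\mathrm{out}})\Delta(x)})$. So obtaining $\mathrm{vol}(\mathcal{X}_{\Delta(x)/2})$ requires $c_{\mathrm{out}}\le 2c_{\mathrm{in}}$. For instance, retuning the threshold to $2\epsilon$ with $\eta=\epsilon/2$ makes the inner set $\mathcal{X}_{\epsilon}$ but the outer one $\mathcal{X}_{3\epsilon}$, which gives $\mathrm{vol}(\mathcal{X}_{\Delta(x)/3})$.

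The idea is nevertheless salvageable. If the uniform deviation is at most one sixth of the algorithm's threshold $\theta_t$, then $\mathcal{X}_{2\theta_{t-1}/3}\subseteq\mathcal{A}_t\subseteq\mathcal{X}_{4\theta_{t-1}/3}$. Defining the theorem's $\epsilon_t:=\tfrac23\theta_t$ is allowed, since the statement only fixes $\epsilon_t=\tilde{\mathcal{O}}(t^{-1/2})$, and it gives exactly $\mathcal{X}_{\epsilon_{t-1}}\subseteq\mathcal{A}_t\subseteq\mathcal{X}_{2\epsilon_{t-1}}$. Your Step~3 then yields both terms of the statement verbatim, with no doubling property of $r\mapsto\mathrm{vol}(\mathcal{X}_r)$ needed. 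That would be cleaner than the paper's argument: its comparability step is an extra assumption, not implied by a ball-thickness condition like \autoref{assump: nice opti sets}, and plateaus of $f$ can violate it. The catch is that you must choose the threshold constant so the deviation is at most $\theta_t/6$, possibly enlarging the constant in SOUS's $\epsilon_t$. Your Step~1 as written does not provide this.
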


\noindent
As in \autoref{theorem:instance dependant integral regret bounds}, the regret can be written as a truncated integral, and the same interpretation as in Remark~\ref{remark : regret two part} applies. 

\begin{proof}[Proof sketch]
    The proof of \autoref{thm:full-info-instance} leverages the same ideas as those of \autoref{theorem:instance dependant integral regret bounds}. Yet, it is more straightforward: in the full-information case, there is no need to use pointwise estimates of $f$, which allows us to directly sample from $\mathcal{A}_t$, making the link with integral-type bounds straightforward. The full proof is detailed in \autoref{app:proof-thm-expert}. 
\end{proof}

As for the bandit case, we can use the above \autoref{thm:full-info-instance} in order to obtain improved zooming dimension-based bounds that account for the possibility of non-trivial $\mathcal{X}^\star$. We state it below: 
\begin{corollary}
\label{lem:full-info-zooming}
Under the same assumptions, one can upper bound $R_T$ in terms of the zooming dimension $d_z$ (and, in refined forms, the intrinsic dimension $d^\star$ of $\mathcal{X}^\star$), yielding a worst-case rate of the form
\[
R_T \;\le\; \tilde{\mathcal{O}}\!\bigl(T^{\frac{1+ d_z - \max\left(d^\star,d_z\right)}{2}}\bigr).
\]
\end{corollary}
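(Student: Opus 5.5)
We start from the two-term bound \eqref{eq:regret full info instance dependant} of \autoref{thm:full-info-instance} and control each term separately. Only two geometric facts are needed. The first is the zooming-dimension bound $\mathcal{M}(\mathcal{X}_r\setminus\mathcal{X}_{r/2},r/16)\le C r^{-d_z}$. The second is a lower bound on the volume of near-optimal sets coming from $\mathcal{X}^\star$: by the boundary regularity assumption (the analogue of \autoref{assump: nice opti sets main}), every $y\in\mathcal{X}^\star$ satisfies $B_\infty(y,r/L)\cap\mathcal{X}\subset\mathcal{X}_r$, and this intersection has volume at least $\gamma v_{r/L}$. Take a maximal $r/L$-separated subset of $\mathcal{X}^\star$; the balls of radius $r/(2L)$ around its points are disjoint, so
\[
\mathrm{vol}(\mathcal{X}_r)\;\gtrsim\;\mathcal{M}(\mathcal{X}^\star,r/L)\,r^{d}\;\gtrsim\; r^{d-d^\star}.
\]
Here I use $d^\star$ as the exact growth exponent ($\mathcal{M}(\mathcal{X}^\star,r)\asymp r^{-d^\star}$), as in the worst-case reading of the bandit corollary's sketch. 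In the same way, the zooming bound gives the matching upper bound $\mathrm{vol}(\mathcal{X}_r\setminus\mathcal{X}_{r/2})\lesssim r^{d-d_z}$, since each covering ball has volume of order $r^d$.

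\textbf{First term.} Peel $\mathcal{X}\setminus\mathcal{X}_{\epsilon_T}$ into dyadic annuli $\mathcal{X}_{2^{-k}}\setminus\mathcal{X}_{2^{-(k+1)}}$ with $2^{-k}\ge\epsilon_T$. On the $k$-th annulus the integrand is at most
\[
\frac{2^{k+1}}{\mathrm{vol}(\mathcal{X}_{2^{-(k+2)}})}\;\lesssim\;2^{k}\,2^{k(d-d^\star)},
\]
and the annulus has volume at most $2^{-k(d-d_z)}$. The $k$-th summand is therefore $\lesssim 2^{k(1+d_z-d^\star)}$. Summing up to $2^{k}\approx\epsilon_T^{-1}=\tilde{\mathcal{O}}(T^{1/2})$ gives $\tilde{\mathcal{O}}(T^{(1+d_z-d^\star)/2})$ when the exponent is nonnegative, and $\mathcal{O}(\log T)$ otherwise.

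\textbf{Second term.} Split $\int_{\mathcal{X}_{\epsilon_T}}\Delta$ dyadically below $\epsilon_T$. The $j$-th shell contributes $\lesssim(\epsilon_T2^{-j})^{1+d-d_z}$, so the sum is dominated by its first term:
\[
\int_{\mathcal{X}_{\epsilon_T}}\Delta(x)\,dx\;\lesssim\;\epsilon_T^{1+d-d_z}.
\]
For the prefactor, the volume lower bound gives $\sum_{t\le T}\mathrm{vol}(\mathcal{X}_{\epsilon_t})^{-1}\lesssim\sum_t\epsilon_t^{-(d-d^\star)}=\tilde{\mathcal{O}}(\sum_t t^{(d-d^\star)/2})$, which is $\tilde{\mathcal{O}}(T^{1+(d-d^\star)/2})$. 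Multiplying,
\[
T^{1+(d-d^\star)/2}\cdot T^{-(1+d-d_z)/2}\;=\;T^{(1+d_z-d^\star)/2}.
\]

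\textbf{Choice of $\max(d^\star,d_z)$.} When $d^\star<d_z$, replace $d^\star$ by $d_z$ in the volume lower bounds. This is legitimate: the zooming dimension is the exact growth exponent of the annuli (worst-case instance, as in the bandit sketch), so the annulus $\mathcal{X}_r\setminus\mathcal{X}_{r/2}\subset\mathcal{X}_r$ itself gives $\mathrm{vol}(\mathcal{X}_r)\gtrsim r^{d-d_z}$. Using the larger of the two lower bounds, $\mathrm{vol}(\mathcal{X}_r)\gtrsim r^{d-\max(d^\star,d_z)}$, in both terms yields exactly $T^{(1+d_z-\max(d^\star,d_z))/2}$ up to logs.

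\textbf{Main obstacle.} The hardest step is the volume lower bound $\mathrm{vol}(\mathcal{X}_r)\gtrsim r^{d-\max(d^\star,d_z)}$. It requires the regularity assumption to hold at all relevant scales, and it requires reading $d^\star$ and $d_z$ as actual growth rates rather than merely as upper exponents. I would make this explicit as a hypothesis, mirroring the "worst-case instance" convention of the proof of \autoref{lemma:asymptotic}. Everything else is geometric-series bookkeeping, with care about the borderline exponent $0$, which produces an extra $\log T$ absorbed in $\tilde{\mathcal{O}}$.
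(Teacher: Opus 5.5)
Your proposal is correct and follows the paper's proof essentially step for step. It uses the same split of the bound of Theorem~\ref{thm:full-info-instance} into a dyadically peeled integral and a residual time-sum, the same annulus bound $\mathrm{vol}(\mathcal{X}_r\setminus\mathcal{X}_{r/2})\lesssim r^{d-d_z}$, and the same lower bound $\mathrm{vol}(\mathcal{X}_r)\gtrsim r^{d-\max(d_z,d^\star)}$. The paper also obtains that lower bound only by tacitly reading $d_z,d^\star$ as exact growth exponents; you make this explicit and derive it from the thickness assumption.

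Your dyadic-shell bound on $\int_{\mathcal{X}_{\epsilon_T}}\Delta$ is more careful than the paper's intermediate step $\epsilon_T\,\mathrm{vol}(\mathcal{X}_{\epsilon_T})\lesssim\epsilon_T^{1+d-d_z}$, which is false once $\mathrm{vol}(\mathcal{X}^\star)>0$. Under the thickness assumption, a positive-volume $\mathcal{X}^\star$ is the only way $d^\star>d_z$ can occur. Finally, where you fall back on $\mathcal{O}(\log T)$ when the exponent is negative, the paper instead invokes Lemma~\ref{lemma : relating zooming and optimal dimension} to get $\max(d_z,d^\star)\le d_z+1$, under that lemma's extra hypotheses.
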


\begin{proof}[Proof sketch]
    The bound is obtained by evaluating both terms of \autoref{thm:full-info-instance} over the instance geometry. As in \autoref{lemma:asymptotic}, by using the definition of $d_z$ and $d^\star$ we can lower bound $\text{vol}(\mathcal{X}_{r})$ by $c' r^{d-\max \left ( d_z,d^\star\right )} $. 
    We can then directly bound the second term of the regret bound by $\tilde{\mathcal{O}} \left ( T^{\frac{1+ d_z-\max\left ( d_z,d^\star \right )}{2}}\right )  $. 
    For the first term, cutting the integral in dyadic annuli $\mathcal{X}_{2^{-j}} \setminus \mathcal{X}_{2^{-(j+1)}}$, and then bounding the volume of each annulus by its covering number via the zooming dimension allows us to get $\tilde{\mathcal{O}} \left ( T^{\frac{1+ d_z-\max\left ( d_z,d^\star \right )}{2}}\right )$.
\end{proof}

\begin{remark}
Note that, as long as $d^\star\leq d_z$, the regret bounds match those expected from the literature. However, as soon as $d_z < d^\star$, we gain a polynomial factor on the regret. Note that thanks to \autoref{lemma : relating zooming and optimal dimension}, we have $d^\star \le d_z+1$, which ensures that the exponent $\frac{1+d_z-\max(d_z,d^\star)}{2}$ remains non-negative, so the bound never predicts a regret that decreases with~$T$.
\end{remark}

\section{Conclusion}
We developed non-asymptotic, instance-dependent regret bounds for Lipschitz bandits, expressed as truncated integrals that capture the geometry of near-optimal regions. The analysis also yields almost matching lower bounds for the outer integral term, up to logarithmic factors, and refines classical zooming-type rates when the maximizer set has nontrivial dimension.

Two main possible extensions of these results come to mind. First, as we mention in our discussion, our analysis cannot be directly applied to other algorithms for the Lipschitz bandit problem such as the zooming algorithm or other tree-based exploration techniques. Ensuring that the instance-dependent bounds we introduced in this work are also valid for these algorithms is a promising direction. This would be especially useful as some of these algorithms are implemented and in use in practical applications.

Second, it would be interesting to see how these more precise bounds can be incorporated into bandit settings that are derived from Lipschitz bandits (such as non-stationary Lipschitz bandits) and what this increased precision in the bounds can bring to these.

Another natural direction for future work is to extend these techniques to more structured continuous-armed settings, such as convex or unimodal bandits, and to develop analogous instance-dependent bounds that exploit additional geometry.

\nocite{*}
\newpage

\printbibliography
\newpage

\appendix

% (no filepath)
\appendix

\section*{Appendix Overview}
\label{app:overview}

This appendix collects the full proofs of all results stated in the main text, together with supporting lemmas and background material. We describe the contents of each appendix section below to help the reader navigate the material.

\begin{itemize}
  \item \textbf{Appendix~\ref{app:concentration-and-lemmas}: Concentration lemmas and elimination analysis.}
    This section establishes the probabilistic backbone of our analysis for the bandit setting.
    It provides anytime confidence sequences (Lemmas~\ref{lem:cs-one-arm} and~\ref{lem:cs-one-phase}),
    defines the global good event $\mathcal{E}$ (Definition~\ref{def:global-good}), and proves the key
    structural lemmas: near-optimal points are never eliminated (Lemma~\ref{lem:nearopt-never-elim}),
    the optimal set stays active (Lemma~\ref{lem:opt-stays-active-complete}), gap control in the next
    active region (Lemma~\ref{lem:gap-next-active-complete}), and per-arm pull bounds
    (Lemma~\ref{lem:pull-bound-complete}).

  \item \textbf{Appendix~\ref{app:proof-first-lemma-complete}: Regret derivation — proof of Lemma~\ref{lemma:first regret bounds}.}
    Using the concentration and elimination lemmas of the previous section, this section carries out
    the full regret derivation, yielding the packing-number-based bound of Lemma~\ref{lemma:first regret bounds}
    together with the characterization of $k_T$.

  \item \textbf{Appendix~\ref{app: proof theorem 1}: Proof of Theorem~\ref{theorem:instance dependant integral regret bounds} (integral regret bound).}
    This section bridges the packing-number bound and the integral form stated in the main theorem.
    It introduces the series-to-integral lemma (Lemma~\ref{lemma : sum integral eq}) and carries out
    the formal proof of Theorem~\ref{theorem:instance dependant integral regret bounds}.
    It also establishes the accompanying lower bound (Lemma~\ref{lemma : lower bound}) and the relation
    between zooming and optimal dimensions (Lemma~\ref{lemma : relating zooming and optimal dimension}).

  \item \textbf{Appendix~\ref{sec:discussion-expert}: Discussion — Lipschitz expert modeling.}
    This section discusses the modeling choices for the full-information (expert) setting,
    clarifying the distinction between the Lipschitz expert and uniformly Lipschitz expert models.

  \item \textbf{Appendix~\ref{sec:expert-proofs}: Expert setting background and proofs.}
    This section contains the concentration results specific to the full-information setting
    (Lemma~\ref{lem:uniform_concentration}) and the full proofs of the expert regret theorem
    (Theorem~\ref{thm:full-info-instance}, in Appendix~\ref{app:proof-thm-expert})
    and its zooming-dimension corollary (Corollary~\ref{lem:full-info-zooming}, in Appendix~\ref{proof: corollary full info}).

  \item \textbf{Appendix~\ref{sec : relaxed}: The one-sided Lipschitz setting.}
    This section describes the modifications required to handle the one-sided Lipschitz assumption
    (Assumption~\ref{assump : Lipschitz one sided}), including the adapted algorithm
    (Algorithm~\ref{algorithm: high level one sided}), the structural propositions that remain
    valid, and the positive-gap regime result (Theorem~\ref{thm:onesided-positive-gap}).

  \item \textbf{Appendix~\ref{app : restated from litt}: Results restated from the literature.}
    This section restates the key lower-bound building blocks from \cite{shekhar2022instance}
    (Definition~\ref{def:lipschitz-complexity} and Proposition~\ref{prop:lipschitz-regret})
    in the notation of the present paper.
\end{itemize}

\bigskip

\section{Concentration lemmas and elimination analysis}
\label{app:concentration-and-lemmas}

This section gives the probabilistic and structural lemmas used in the proof of Lemma~\ref{lemma:first regret bounds}.
We work with $\|\cdot\|_\infty$ throughout and assume the reward noise is conditionally $1$-sub-Gaussian.

\paragraph{Normalization.}
To avoid carrying Lipschitz constants, we first present the proof for $L=1$.
For general $L\ge l$, replace each radius $r$ by $r/L$ in the metric arguments (equivalently, analyze $f/L$).

% ============================================================
\subsection{ Concentration lemmas}

Fix a phase $k$ and an arm $a\in S_k$. Let $\widehat\mu_{k,\ell}(a)$ be the empirical mean of $a$
after it has been pulled $\ell$ times \emph{during phase $k$} (such as at the end of round $\ell$).

\begin{lemma}[Anytime confidence sequence for one arm]\label{lem:cs-one-arm}
Let $(\xi_i)_{i\ge1}$ be conditionally $1$-sub-Gaussian martingale differences.
For any $\delta\in(0,1)$ define
\[
u(\ell,\delta)\ :=\ \sqrt{\frac{2}{\ell}\log\!\Big(\frac{\pi^2\ell^2}{6\delta}\Big)}\,,\qquad \ell\ge 1.
\]
Then for any fixed arm $a$ (and any fixed phase index),
\[
\Proba \!\Big(\exists \ell\ge 1:\ \big|\widehat\mu_{k,\ell}(a)-f(a)\big|>u(\ell,\delta)\Big)\ \le\ \delta.
\]
\end{lemma}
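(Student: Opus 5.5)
The plan is to combine a sub-Gaussian Chernoff bound with a union bound over $\ell$, weighted by the $\tfrac{6}{\pi^2\ell^2}$ factor built into $u(\ell,\delta)$. As written, this gives $2\delta$, not $\delta$. To remove the factor $2$ I would replace the plain union bound by a single nonnegative supermartingale and Ville's maximal inequality. That step is not yet closed.

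\textbf{Setup.} Let $a$ be fixed and write $S_\ell:=\sum_{i=1}^\ell \xi_i$, where $\xi_i$ is the noise of the $i$-th pull of $a$ in phase $k$. Then $\widehat\mu_{k,\ell}(a)-f(a)=S_\ell/\ell$, and the bad event is
\[
E:=\{\exists \ell\ge 1:\ |S_\ell|>\ell\,u(\ell,\delta)\}.
\]
By conditional $1$-sub-Gaussianity, $\mathbb{E}[e^{\lambda\xi_i}\mid\mathcal{F}_{i-1}]\le e^{\lambda^2/2}$. Hence, for every fixed $\lambda$, $\exp(\pm\lambda S_\ell-\lambda^2\ell/2)$ is a nonnegative supermartingale started at $1$. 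This also holds for the pulls of $a$ re-indexed by the stopping times at which $a$ is played.

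\textbf{Reference computation (gives $2\delta$).} For a single $\ell$, Markov's inequality with $\lambda=u(\ell,\delta)$ on each tail gives
\[
\Proba\big(|S_\ell|>\ell u(\ell,\delta)\big)\le 2\exp\!\big(-\ell u(\ell,\delta)^2/2\big)=\frac{12\,\delta}{\pi^2\ell^2}.
\]
Summing with $\sum_\ell \ell^{-2}=\pi^2/6$ yields $2\delta$. This confirms the calibration of $u$ but loses exactly the factor $2$ coming from the two tails.

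\textbf{Step I would try first: one mixture supermartingale.} Define
\[
M_n:=\sum_{\ell\ge1}\frac{6}{\pi^2\ell^2}\Big(\tfrac12 e^{u_\ell S_n-u_\ell^2 n/2}+\tfrac12 e^{-u_\ell S_n-u_\ell^2 n/2}\Big),\qquad u_\ell:=u(\ell,\delta).
\]
Each bracket is a supermartingale of mean at most $1$ and the weights sum to $1$, so $M$ is a nonnegative supermartingale with $M_0=1$. Ville's inequality then gives $\Proba(\exists n:\ M_n\ge 1/\delta)\le\delta$.

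The key claim to verify is that $|S_n|>n u_n$ forces $M_n\ge 1/\delta$. The $\ell=n$ term alone contributes at least
\[
\frac{6}{\pi^2n^2}\cdot\tfrac12\, e^{n u_n^2/2}=\frac{1}{2\delta},
\]
so the missing factor $2$ must come from the remaining terms $\ell\neq n$. These are all positive, and for $\ell$ close to $n$ each is of the same order as the main term.

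\textbf{Main obstacle.} The whole difficulty is showing that the off-diagonal terms sum to at least $1/(2\delta)$ whenever $|S_n|>nu_n$, uniformly in $n$ and $\delta$.

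For large $n$ this should follow because the neighbours $\ell=n\pm1,\dots$ have comparable exponents $u_\ell|S_n|-u_\ell^2n/2$. The delicate case is $n=1$, where the only neighbours are $\ell\ge2$. There I would use $|S_1|>u_1$ together with $u_\ell<u_1$, and compare
\[
\sum_{\ell\ge2}\frac{6}{\pi^2\ell^2}\exp\!\big(u_\ell u_1-u_\ell^2/2\big)
\]
against $1/(2\delta)$ directly.

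If this comparison fails for some $(n,\delta)$, the fallback is to use a one-sided bound on each tail with confidence $\delta/2$. That amounts to conceding that the statement holds with $u(\ell,\delta/2)$, and I would flag it as such rather than claim the stated constant.
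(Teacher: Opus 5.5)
\textbf{How your argument relates to the paper's.} Your reference computation is exactly the paper's proof: a fixed-$\ell$ sub-Gaussian tail bound at level $\delta_\ell=6\delta/(\pi^2\ell^2)$, followed by a union bound over $\ell$. The paper simply asserts that the \emph{two-sided} event $\{|\widehat\mu_{k,\ell}(a)-f(a)|>\sqrt{(2/\ell)\log(1/\delta_\ell)}\}$ has probability at most $\delta_\ell$. That drops precisely the factor $2$ you flagged; invoking Hoeffding does not rescue it, since the noise is only sub-Gaussian, not bounded. What that argument actually proves is the bound $2\delta$, or equivalently the stated bound with $u(\ell,\delta/2)$ in place of $u(\ell,\delta)$. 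That is your fallback.

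\textbf{Where your proposal fails.} The step that fails is the mixture/Ville repair. No sharper estimate can fix it, because the lemma with $\delta$ on the right-hand side is false for general $1$-sub-Gaussian noise. Let $\xi_1$ be symmetric on $\{-c,0,c\}$ with $\mathbb{P}(|\xi_1|=c)=p:=2e^{-c^2/2}(1-e^{-c^2/8})$.
\begin{itemize}
\item Checking the ranges $\lambda\le 1$, $1\le\lambda\le c/2$ and $\lambda\ge c/2$ separately shows $1-p+p\cosh(\lambda c)\le e^{\lambda^2/2}$ for all $\lambda$ once $c\ge 4$, so $\xi_1$ is $1$-sub-Gaussian.
\item Choose $\delta$ so that $u(1,\delta)$ sits just below $c$. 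Then $\mathbb{P}(|\xi_1|>u(1,\delta))=p\approx\tfrac{12}{\pi^2}(1-e^{-c^2/8})\,\delta$.
\item This already exceeds $\delta$ at $c=4$ and tends to $\tfrac{12}{\pi^2}\delta\approx 1.22\,\delta$. So the $\ell=1$ event alone violates the claim.
\end{itemize}
Hence your key implication $|S_n|>nu_n\Rightarrow M_n\ge 1/\delta$ must fail somewhere, since combined with Ville it would yield a false bound.

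You can see the failure directly at $n=1$. For $|S_1|$ just above $u_1$, the $\ell$-th term of $M_1$ is essentially $\frac{1}{2\delta}\,\ell^{-2}e^{-(u_1-u_\ell)^2/2}$. Since $u_\ell^2=u_1^2/\ell+4\log(\ell)/\ell$, the off-diagonal terms total $o(1/\delta)$ as $\delta\to0$, and only about half of $1/(2\delta)$ even at $\delta=0.1$. Thus $M_1$ stays near $1/(2\delta)$, not $1/\delta$.

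\textbf{What to write instead.} The correct write-up is the two-tail Chernoff bound plus the weighted union bound, with the conclusion corrected to $2\delta$ (or the radius corrected to $u(\ell,\delta/2)$). Downstream this only turns $\mathbb{P}(\mathcal{E})\ge 1-\delta$ into $\mathbb{P}(\mathcal{E})\ge 1-2\delta$, which is absorbed into constants everywhere the lemma is used.
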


\begin{proof}
For each fixed $\ell\ge 1$, conditional sub-Gaussianity and Hoeffding's inequality yield
\[
\Proba \!\Big(\big|\widehat\mu_{k,\ell}(a)-f(a)\big|>\sqrt{\tfrac{2}{\ell}\log(1/\delta_\ell)}\Big)\ \le\ \delta_\ell.
\]
Choose $\delta_\ell:=\frac{6\delta}{\pi^2\ell^2}$ so that $\sum_{\ell\ge 1}\delta_\ell=\delta$.
With this choice, the deviation threshold equals $u(\ell,\delta)$.
A union bound over $\ell\ge 1$ gives the claim.
\end{proof}

\begin{lemma}[Anytime concentration within one phase]\label{lem:cs-one-phase}
Fix a phase $k$ and its discretization $S_k$ with $n_k:=|S_k|$.
For any $\delta_k\in(0,1)$, the following inequality holds:
\[ \Proba \left ( \exists l\geq 1, \exists a \in S_k, \big|\widehat\mu_{k,\ell}(a)-f(a)\big|
\ > \
u\!\Big(\ell,\frac{\delta_k}{n_k}\Big) \right ) \leq \delta_k
\]
\end{lemma}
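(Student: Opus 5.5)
The plan is a union bound over the arms of $S_k$, applying Lemma~\ref{lem:cs-one-arm} to each arm at the reduced confidence level $\delta_k/n_k$.

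\textbf{Conditioning on the discretization.} The set $S_k=\mathbf{O}(\mathcal{A}_k,r_k/L)$ is random, since it depends on the observations from earlier phases. I would therefore first condition on $\mathcal{F}_{s_{k-1}}$, the history up to the end of phase $k-1$. Given this history, $S_k$ and $n_k$ are fixed.

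\textbf{Per-arm bound.} Fix an arm $a\in S_k$. The within-phase rewards of $a$ are $Y_t=f(a)+\eta_t$. The noise terms, indexed by the successive pulls of $a$ during phase $k$, form a sequence of conditionally $1$-sub-Gaussian martingale differences with respect to the filtration generated by the history. This holds because the decision to pull $a$ at round $\ell$ depends only on the past (elimination decisions are predictable).

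There is one subtlety: $a$ may be eliminated, or the phase may stop, before $\ell$ pulls occur. In that case the event for that $\ell$ simply does not arise. Formally, I would use the standard "stack of rewards" construction. Attach to each arm an i.i.d.-style sequence of noise draws, so that its $\ell$-th pull in phase $k$ reveals the $\ell$-th element. Then $\widehat\mu_{k,\ell}(a)$ coincides with the average of the first $\ell$ stack elements whenever it is defined, and Lemma~\ref{lem:cs-one-arm} applies to this average. Applying that lemma at level $\delta_k/n_k$ gives
\[
\Proba\Bigl(\exists \ell\ge1:\ |\widehat\mu_{k,\ell}(a)-f(a)|>u(\ell,\delta_k/n_k)\ \Big|\ \mathcal{F}_{s_{k-1}}\Bigr)\le \delta_k/n_k .
\]

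\textbf{Union bound and unconditioning.} A union bound over the $n_k$ arms of $S_k$ gives a conditional probability of at most $\delta_k$. Taking expectation over $\mathcal{F}_{s_{k-1}}$ yields the unconditional statement. I would also note that $u(\ell,\delta_k/n_k)$ equals the confidence radius $u_{k,\ell}$ used in Algorithm~\ref{algo : successive elimination}, since both equal $\sqrt{\tfrac{2}{\ell}\log\bigl(\pi^2\ell^2 n_k/(6\delta_k)\bigr)}$.

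\textbf{Main obstacle.} The only delicate step is justifying the martingale and conditional structure, given that both the arm set and the number of pulls are random. The stack-of-rewards coupling resolves this cleanly, so I would present it briefly rather than grind through filtration details.
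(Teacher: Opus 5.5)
Your proposal is correct and follows the paper's proof: apply Lemma~\ref{lem:cs-one-arm} to each arm of $S_k$ at level $\delta_k/n_k$, then take a union bound over the $n_k$ arms. The paper does this in one line. Your extra steps fill in details the paper leaves implicit without changing the route: conditioning on $\mathcal{F}_{s_{k-1}}$ so that $S_k$ and $n_k$ are fixed, and handling the random number of pulls through the martingale-difference structure (the stack-of-rewards coupling is optional, since Lemma~\ref{lem:cs-one-arm} is already stated for martingale differences).
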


\begin{proof}
Applying Lemma~\ref{lem:cs-one-arm} to each $a\in S_k$ with confidence level $\delta_k/n_k$, and then doing a union bound provides the desired result. 
\end{proof}

\begin{definition}[Global good event $\mathcal{E}$]\label{def:global-good}
Let $(\delta_k)_{k\ge 1}$ satisfy $\sum_{k\ge 1}\delta_k\le \delta$ (e.g.\ $\delta_k=\frac{6\delta}{\pi^2k^2}$).
Let $\mathcal{E}$ be the event that Lemma~\ref{lem:cs-one-phase} holds for every phase $k\ge 1$.
\end{definition}

Notice that by applying \autoref{lem:cs-one-phase} to each phase and then doing a union bound, it is straightforward that $\Proba(\mathcal{E})\ge 1-\delta$.

In the remainder, we condition on $\mathcal{E}$.

% ============================================================
\subsection{ Successive elimination and invariants}

Recall that in phase $k$ PACO uses accuracy $r_k:=2^{-k}$ and the elimination rule 
(from Algorithm~\ref{algo : successive elimination}) yields:
\[
A_{\ell+1}\ :=\
\Bigl\{a\in A_\ell:\ \widehat\mu_{k,\ell}(a)+u_{k,\ell}\ \ge\
\max_{j\in A_\ell}\widehat\mu_{k,\ell}(j)-u_{k,\ell}\ -\ r_k\Bigr\},
\]
where
\[
u_{k,\ell}\ :=\ \sqrt{\frac{2}{\ell}\log\!\Big(\frac{\pi^2\,\ell^2\,|S_k|}{6\,\delta_k}\Big)}.
\]
The phase stops at the first $\ell_k$ such that $u_{k,\ell_k}\le r_k/4$ and outputs $\widehat S_k:=A_{\ell_k}$.
The next active region is
\[
\mathcal{A}_{k+1}\ :=\ \bigcup_{a\in \widehat S_k} B_\infty(a,r_k).
\]

\begin{lemma}[A near-optimal discretization point is never eliminated]\label{lem:nearopt-never-elim}
Condition on $\mathcal{E}$.
Fix phase $k$ and let $a\in S_k$ such that $\Delta(a)\le r_k$.
Then $a\in A_\ell$ for all $\ell \geq 1$, and therefore $a\in\widehat S_k$.
\end{lemma}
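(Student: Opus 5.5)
We argue by induction on the round index $\ell$ within phase $k$. The base case is immediate, since $A_1=S_k$ contains $a$. For the inductive step, suppose $a\in A_\ell$. We must show that $a$ passes the elimination test:
\[
\widehat\mu_{k,\ell}(a)+u_{k,\ell}\ \ge\ \max_{j\in A_\ell}\widehat\mu_{k,\ell}(j)-u_{k,\ell}-r_k .
\]
The main tool is the good event $\mathcal{E}$. By Lemma~\ref{lem:cs-one-phase} applied with confidence $\delta_k/n_k$, the threshold satisfies $u(\ell,\delta_k/n_k)=u_{k,\ell}$. Hence on $\mathcal{E}$ we have $|\widehat\mu_{k,\ell}(j)-f(j)|\le u_{k,\ell}$ simultaneously for all $j\in S_k$ and all $\ell\ge1$. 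This covers every arm in $A_\ell\subseteq S_k$, since each such arm has been pulled exactly $\ell$ times in the phase.

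Using these bounds, we chain the inequalities. The confidence bound gives $\widehat\mu_{k,\ell}(a)+u_{k,\ell}\ge f(a)$. The hypothesis $\Delta(a)\le r_k$ gives $f(a)=f^\star-\Delta(a)\ge f^\star-r_k$. Every $j\in A_\ell$ satisfies $f(j)\le f^\star$, so $f^\star\ge \widehat\mu_{k,\ell}(j)-u_{k,\ell}$, and taking the maximum over $j$ yields $f^\star\ge\max_{j\in A_\ell}\widehat\mu_{k,\ell}(j)-u_{k,\ell}$. Combining these three facts shows that the elimination condition holds for $a$, so $a\in A_{\ell+1}$.

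The loop stops at some $\ell_k$ (either because $u_{k,\ell}\le r_k/4$ or because $t>T$), and $\widehat S_k=A_{\ell_k}$. The induction therefore gives $a\in\widehat S_k$.

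The only delicate point is bookkeeping. We must check that the radius $u_{k,\ell}$ used by Algorithm~\ref{algo : successive elimination} coincides exactly with the per-arm threshold $u(\ell,\delta_k/|S_k|)$ from Lemma~\ref{lem:cs-one-phase}; this follows by direct substitution. We must also note that the comparison uses $f^\star$ as an upper bound on every $f(j)$, so it does not matter whether any maximizer lies in $S_k$. If the phase aborts mid-round because the horizon is reached, the claim is unaffected, since every completed round preserves $a$.

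The Lipschitz assumption is not needed for this particular statement. Under the normalization $L=1$, the argument uses only the gap condition and concentration.
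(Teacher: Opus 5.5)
Your proposal is correct and follows essentially the same argument as the paper: induction over rounds, using the good event to get $\widehat\mu_{k,\ell}(a)+u_{k,\ell}\ge f^\star-r_k$ and $\max_{j\in A_\ell}\widehat\mu_{k,\ell}(j)-u_{k,\ell}\le f^\star$, which together give the retention condition. Your added bookkeeping (checking that $u_{k,\ell}=u(\ell,\delta_k/|S_k|)$ and handling an aborted final round) is accurate, but it does not change the route.
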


\begin{proof}
Fix $\ell\ge 1$ and suppose $a\in A_\ell$.
Let $m_\ell\in\arg\max_{j\in A_\ell}\widehat\mu_{k,\ell}(j)$.
On $\mathcal{E}$ we have $\widehat\mu_{k,\ell}(a)\ge f(a)-u_{k,\ell}\ge f^\star-r_k-u_{k,\ell}$.
Also for any $j\in A_\ell$, $\widehat\mu_{k,\ell}(j)\le f(j)+u_{k,\ell}\le f^\star+u_{k,\ell}$, so
$\max_{j\in A_\ell}\widehat\mu_{k,\ell}(j)-u_{k,\ell}\le f^\star$.
Therefore
\[
\widehat\mu_{k,\ell}(a)+u_{k,\ell}
\ \ge\ f^\star-r_k
\ \ge\ \max_{j\in A_\ell}\widehat\mu_{k,\ell}(j)-u_{k,\ell}-r_k,
\]
so $a$ satisfies the retention condition and belongs to $A_{\ell+1}$.
By induction over $\ell$, $a$ is never eliminated and thus is in $\widehat S_k$.
\end{proof}

\begin{lemma}[Optimal set remains active]\label{lem:opt-stays-active-complete}
Condition on $\mathcal{E}$.
Then for every phase $k\le k_T$,
\[
\mathcal{X}^\star\subseteq \mathcal{A}_k.
\]
\end{lemma}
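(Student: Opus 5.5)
We argue by induction on the phase index $k$, working on the good event $\mathcal{E}$ and with the normalization $L=1$ adopted above.

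\emph{Base case.} For $k=1$ we have $\mathcal{A}_1=\mathcal{X}\supseteq\mathcal{X}^\star$ by initialization.

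\emph{Induction step.} Assume $\mathcal{X}^\star\subseteq\mathcal{A}_k$ for some $k<k_T$, and fix $x^\star\in\mathcal{X}^\star$. Since $S_k=\mathbf{O}(\mathcal{A}_k,r_k)$ covers $\mathcal{A}_k$ with $\ell_\infty$-balls of radius $r_k$, there is a point $a\in S_k$ with $\|a-x^\star\|_\infty\le r_k$. By Assumption~\ref{assump:Lipschitz} (with $L=1$),
\[
\Delta(a)=f(x^\star)-f(a)\le \|a-x^\star\|_\infty\le r_k .
\]
Lemma~\ref{lem:nearopt-never-elim} therefore applies to $a$: it is never eliminated during phase $k$, and $a\in\widehat S_k$. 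Since $x^\star\in\mathcal{X}$ and $x^\star\in B_\infty(a,r_k)$, we get
\[
x^\star\in\mathcal{X}\cap\bigcup_{a'\in\widehat S_k}B_\infty(a',r_k)=\mathcal{A}_{k+1}.
\]
As $x^\star$ was arbitrary, $\mathcal{X}^\star\subseteq\mathcal{A}_{k+1}$, which closes the induction. For general $L$ the same argument holds with radius $r_k/L$: the covering point satisfies $\|a-x^\star\|_\infty\le r_k/L$, hence $\Delta(a)\le l\,r_k/L\le r_k$.

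\emph{Aborted phases.} Phase $k$ may be cut short by the horizon, in which case $\widehat S_k=A_\ell$ for whatever round $\ell$ had been reached. Lemma~\ref{lem:nearopt-never-elim} states that $a\in A_\ell$ for every $\ell\ge1$, so $a\in\widehat S_k$ still holds. In any case only phases $k\le k_T$ are concerned, and these all start before $T$.

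\emph{Main subtlety.} The covering radius of the oracle must be matched to the elimination slack $r_k$ in Lemma~\ref{lem:nearopt-never-elim}. This works because the discretization radius is $r_k/L$ and the Lipschitz constant is at most $L$, so the covering point of any maximizer is exactly $r_k$-near-optimal.
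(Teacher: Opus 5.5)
Your proof is correct and follows essentially the same route as the paper: induction on $k$, then the oracle's covering of $\mathcal{A}_k$ gives $a\in S_k$ within $r_k/L$ of $x^\star$, Lipschitzness gives $\Delta(a)\le r_k$, Lemma~\ref{lem:nearopt-never-elim} keeps $a$ in $\widehat S_k$, and so $x^\star\in\mathcal{A}_{k+1}$. Your extra remarks on aborted phases and on using $l\le L$ are harmless refinements that the paper leaves implicit.
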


\begin{proof}
We prove the above statement by induction on $k$. For $k=1$, the statement is trivially true as $\mathcal{A}_1=\mathcal{X}$.

Let $1 \leq k < k_T$, and assume $\mathcal{X}^\star\subseteq \mathcal{A}_k$.

Let $x^\star\in\mathcal{X}^\star$.
By the oracle covering property (Assumption~\ref{assump:good oracle}), there exists $a\in S_k\subseteq\mathcal{A}_k$
such that $\|a-x^\star\|_\infty\le \frac{r_k}{L}$.

The Lipschitzness of $f$ (Assumption~\autoref{assump:Lipschitz}) then yields,
\[
f(a)\ \ge\ f(x^\star)-L \|a-x^\star\|_\infty\ \ge\ f^\star-r_k,
\]
By definition of $\Delta$, we obtain $\Delta(a)\le r_k.$

Lemma~\ref{lem:nearopt-never-elim} then implies that $a$ \textit{survives}, i.e. $a\in\widehat S_k$.
Therefore, since $x^\star\in B_\infty(a,r_k/L)\subseteq \mathcal{A}_{k+1}$, we have $ x^\star \in \mathcal{A}_{k+1}$.

Since $x^\star$ was arbitrary, $\mathcal{X}^\star\subseteq \mathcal{A}_{k+1}$ which concludes the proof. 
\end{proof}

\begin{lemma}[Gap control in the next active region]\label{lem:gap-next-active-complete}
Condition on $\mathcal{E}$.
For every phase $k\le k_T-1$ and every $x\in\mathcal{A}_{k+1}$,
\[
\Delta(x)\ \le\ 4r_k.
\]
\end{lemma}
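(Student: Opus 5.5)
We will argue on the good event $\mathcal{E}$, normalizing $L=1$ as in the rest of this appendix. Fix $k\le k_T-1$ and $x\in\mathcal{A}_{k+1}$. By the definition of the next active region, there is a survivor $a\in\widehat S_k$ with $\|x-a\|_\infty\le r_k$. Assumption~\ref{assump:Lipschitz} then gives $\Delta(x)\le \Delta(a)+r_k$. So it suffices to show that every survivor satisfies $\Delta(a)\le 3r_k$.

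\textbf{Step 1: a near-optimal comparator survives throughout the phase.} Since $k<k_T$, phase $k$ runs to completion and is not aborted by the horizon. By Lemma~\ref{lem:opt-stays-active-complete}, $\mathcal{X}^\star\subseteq\mathcal{A}_k$. Pick $x^\star\in\mathcal{X}^\star$. The oracle covering property gives $a'\in S_k$ with $\|a'-x^\star\|_\infty\le r_k$, hence $\Delta(a')\le r_k$. Lemma~\ref{lem:nearopt-never-elim} then shows that $a'\in A_\ell$ for every round $\ell$ of the phase.

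\textbf{Step 2: the last elimination test.} Let $\ell^\ast$ be the last round executed in phase $k$. Because the while loop exits, $u_{k,\ell^\ast}\le r_k/4$, and the output is $\widehat S_k=A_{\ell^\ast+1}$. Hence every $a\in\widehat S_k$ passed the retention test at round $\ell^\ast$, comparing against a maximum taken over a set that contains $a'$. Writing $u=u_{k,\ell^\ast}$ and using the confidence bounds from $\mathcal{E}$ (Lemma~\ref{lem:cs-one-phase}), we get
\[
f(a)+2u\ \ge\ \widehat\mu_{k,\ell^\ast}(a)+u\ \ge\ \widehat\mu_{k,\ell^\ast}(a')-u-r_k\ \ge\ f(a')-2u-r_k\ \ge\ f^\star-2r_k-2u .
\]
Therefore $\Delta(a)\le 2r_k+4u\le 3r_k$, and so $\Delta(x)\le 4r_k$.

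The only delicate point is index bookkeeping in Algorithm~\ref{algo : successive elimination}. The loop condition tests the radius computed in the previous iteration, and before the first pull that radius is treated as $+\infty$. We must therefore check that the output set $A_{\ell^\ast+1}$ was produced by a test at a round where $u_{k,\ell^\ast}\le r_k/4$ already holds. We must also check that $a'$ is present in $A_{\ell^\ast}$, so that it enters the maximum; this is exactly what Step 1 guarantees. Finally, the restriction $k\le k_T-1$ is what ensures the phase was not truncated by $T$; without it the final radius need not be below $r_k/4$.
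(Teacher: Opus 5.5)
Your proof is correct and follows the paper's argument. A covering point of $S_k$ with gap at most $r_k$ is never eliminated, the retention test at the last round (where $u_{k,\ell}\le r_k/4$) forces $\Delta\le 2r_k+4u\le 3r_k$ for every survivor, and Lipschitzness adds the final $r_k$. Using an arbitrary near-optimal comparator $a'$ instead of the paper's $x^{\mathrm{best}}\in\arg\max_{z\in S_k} f(z)$ changes nothing. Your bookkeeping (output $A_{\ell^\ast+1}$, produced by the test at the round where $u\le r_k/4$) is in fact more careful than the paper's, which writes $\widehat S_k=A_{\ell_k}$ while invoking the round-$\ell_k$ retention inequality.
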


\begin{proof}

By Lemma~\ref{lem:opt-stays-active-complete}, $\mathcal{X}^\star\subseteq \mathcal{A}_k$. Hence there exists $(x^\star,z) \in \mathcal{X}^\star\times S_k$ such that $\|x^\star-z\|_\infty \leq \frac{r_k}{L}$, which by Lipschitzness of $f$ implies $\Delta(z) \leq r_k$. 

Let $x^{\mathrm{best}}\in\arg\max_{z\in S_k} f(z)$. By the above statement, we have $\Delta(x^{\mathrm{best}})\le r_k$.

For any $x\in\mathcal{A}_{k+1}$, by definition of $\mathcal{A}_{k+1}$ there exists $y\in \widehat S_k$
such that $\|x-y\|_\infty\le \frac{r_k}{L}$. By Lipschitzness of $f$,
\[
\Delta(x)=f^\star-f(x)\ \le\ f^\star-f(y)+L \|x-y\|_\infty\ =\ \Delta(y)+r_k.
\]

Therefore, at the final phase $\ell_k$ we have $u_{k,\ell_k}\le r_k/4$.
Since $y\in\widehat S_k=A_{\ell_k}$, it was not eliminated, so the conservative rule implies
\[
\widehat\mu_{k,\ell_k}(y)+u_{k,\ell_k}
\ \ge\
\max_{j\in A_{\ell_k}}\widehat\mu_{k,\ell_k}(j)-u_{k,\ell_k}-r_k
\ \ge\
\widehat\mu_{k,\ell_k}(x^{\mathrm{best}})-u_{k,\ell_k}-r_k,
\]
where the last step uses $x^{\mathrm{best}}\in A_{\ell_k}$ (it is never eliminated because $\Delta(x^{\mathrm{best}})\le r_k$,
by Lemma~\ref{lem:nearopt-never-elim}).

On $\mathcal{E}$ we have $\widehat\mu_{k,\ell_k}(y)\le f(y)+u_{k,\ell_k}$
and $\widehat\mu_{k,\ell_k}(x^{\mathrm{best}})\ge f(x^{\mathrm{best}})-u_{k,\ell_k}$, hence
\[
f(y)+2u_{k,\ell_k}\ \ge\ f(x^{\mathrm{best}})-2u_{k,\ell_k}-r_k
\quad\Rightarrow\quad
f(y)\ \ge\ f(x^{\mathrm{best}})-4u_{k,\ell_k}-r_k.
\]
Thus
\[
\Delta(y)=f^\star-f(y)
\ \le\ (f^\star-f(x^{\mathrm{best}})) + 4u_{k,\ell_k}+r_k
\ \le\ r_k + 4(r_k/4) + r_k\ =\ 3r_k.
\]
Plugging into $\Delta(x)\le \Delta(y)+r_k$ yields $\Delta(x)\le 4r_k$.
\end{proof}

\subsection{Pull-count bound within a phase}

We now move to the following standard step in any stochastic regret proof: upper-bounding how many times an arm is pulled. In our case, we will bound, within a phase $k$, how many times each point in $S_k$ is played.

Let us denote $N_k(a)$ the number of times a point $a\in S_k$ is played during phase $k$.

\begin{lemma}[Per-arm pull bound]\label{lem:pull-bound-complete}
For every phase $k\le k_T$ and every $a\in S_k$,
\[
N_k(a)
\ \le\
c_0\,\frac{\log\!\big(\frac{|S_k|}{\delta_k}\big)}{\max\{\Delta(a),r_k\}^2}
\]
for a universal constant $c_0>0$.
\end{lemma}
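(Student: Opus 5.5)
We condition on the good event $\mathcal{E}$, as in the rest of the section, and split the argument into two caps on $N_k(a)$: one from the stopping rule of the phase and one from the elimination rule.

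\textbf{The $r_k$ cap.} Every arm still active at round $\ell$ is pulled exactly once in that round, so $N_k(a)\le \ell_k$, where $\ell_k$ is the first round with $u_{k,\ell}\le r_k/4$. Write $\Lambda:=\log\bigl(\pi^2|S_k|/(6\delta_k)\bigr)$, so that $u_{k,\ell}^2=\frac{2}{\ell}(\Lambda+2\log \ell)$. A standard inversion, using $\log \ell\le \ell^{1/2}$ or an explicit check at $\ell=C\Lambda/r_k^2$, shows that $\ell\ge c\,\Lambda/r_k^2$ forces $u_{k,\ell}\le r_k/4$. Since $\log(1/r_k)$ can be absorbed into $\log(|S_k|/\delta_k)$ up to constants, this gives $\ell_k\le 1+c\,\log(|S_k|/\delta_k)/r_k^2$. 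This already handles every arm with $\Delta(a)\le C r_k$.

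\textbf{The $\Delta(a)$ cap.} Suppose $\Delta(a)> 8 r_k$ (any constant works). By the argument in Lemma~\ref{lem:gap-next-active-complete}, some $x^{\mathrm{best}}\in S_k$ has $\Delta(x^{\mathrm{best}})\le r_k$. By Lemma~\ref{lem:nearopt-never-elim}, $x^{\mathrm{best}}$ stays in $A_\ell$ for all $\ell$. On $\mathcal{E}$,
\[
\max_{j}\widehat\mu_{k,\ell}(j)-u_{k,\ell}-r_k \;\ge\; f^\star-2r_k-2u_{k,\ell},
\qquad
\widehat\mu_{k,\ell}(a)+u_{k,\ell}\;\le\; f^\star-\Delta(a)+2u_{k,\ell}.
\]
So $a$ is eliminated as soon as $4u_{k,\ell}<\Delta(a)-2r_k$. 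Since $\Delta(a)-2r_k\ge\frac{3}{4}\Delta(a)$, it suffices that $u_{k,\ell}<\frac{3}{16}\Delta(a)$. Inverting the confidence radius as above, this holds once $\ell\ge c'\,\log(|S_k|/\delta_k)/\Delta(a)^2$, where the $\log\ell$ term is absorbed because $\ell\le \ell_k$ is at most polynomial in $|S_k|/(\delta_k r_k)$. Hence $N_k(a)\le 1+c'\log(|S_k|/\delta_k)/\Delta(a)^2$.

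\textbf{Combining.} For $\Delta(a)\le 8r_k$ we use the first cap, and for $\Delta(a)>8r_k$ the second. Taking the minimum gives $N_k(a)\lesssim \log(|S_k|/\delta_k)/\max\{\Delta(a),r_k\}^2$. The additive $+1$ is absorbed because $\Delta\le 1$ and the logarithm is bounded below by a constant, since $\delta_k<1$.

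The abort at $t>T$ only decreases $N_k(a)$, so it does not affect the bound.

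The main technical nuisance is the inversion of $u_{k,\ell}$ in the presence of the $\log \ell^2$ term. I would handle it with the elementary fact that $\ell\ge 2A\log(2A)+2B$ implies $\ell\ge A\log\ell+B$, and then absorb $\log(1/r_k)$, or $\log(1/\Delta)$ for gaps above $r_k$, into $\log(|S_k|/\delta_k)$. This absorption is valid because $|S_k|/\delta_k\gtrsim k^2\ge \log^2(1/r_k)$, at the cost of a constant, or otherwise one carries an extra $\log(1/r_k)$ factor.
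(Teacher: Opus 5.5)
Your proof follows the paper's argument. The stopping rule gives $N_k(a)\le \ell_k\lesssim \log(|S_k|/\delta_k)/r_k^2$. For $\Delta(a)$ above a constant multiple of $r_k$, you compare $a$ with the never-eliminated benchmark of gap at most $r_k$ and get elimination once $4u_{k,\ell}<\Delta(a)-2r_k$. Your single threshold $8r_k$ is cleaner than the paper's, which splits at $3r_k$ but then uses $6r_k$.

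One justification is wrong, and it is exactly the step the paper dismisses as ``absorbed into the constant by standard bounds''. The term to absorb is $\log(1/r_k)=k\log 2$: it is the $A\log A$ term with $A\asymp r_k^{-2}$, equivalently $\log \ell_k$. But $|S_k|/\delta_k\gtrsim k^2$ only gives $\log(|S_k|/\delta_k)\gtrsim 2\log k$, which absorbs $\log\log(1/r_k)$, not $\log(1/r_k)$.

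This is not a technicality. Take $\delta$ fixed and an instance with $|S_k|=O(1)$, e.g.\ $f(x)=-|x-1/2|$ on $[0,1]$. In every completed phase the benchmark arm is pulled $\ell_k\gtrsim k/r_k^2$ times, since $\log \ell_k\ge k\log 4$. Meanwhile $\log(|S_k|/\delta_k)=O(\log k)$, so no universal $c_0$ exists in that regime.

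The correct absorption goes through $k\le k_T$ and the choice $\delta=T^{-3}$. The completed phase $k_T-1$ lasts at least $\ell_{k_T-1}\gtrsim 4^{k_T}$ time steps and ends before $T$, so $k\lesssim \log T$. At the same time, $\log(1/\delta_k)\ge 3\log T$, so $\log(1/r_k)\lesssim\log(|S_k|/\delta_k)$. The same fix applies to your $\Delta$-cap, where $\log \ell_k$ again contains $\log(1/r_k)$. With this replacement, or by keeping the extra $\log(1/r_k)$ factor as your fallback suggests, the argument is complete and at least as careful as the paper's.
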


\begin{proof}
Fix phase $k$. Algorithm~\autoref{algo : successive elimination} plays each point $a \in S_k$ at most once per round; let $\ell_k$ be the last round of phase $k$, therefore for all $a \in S_k$, $N_k(a) \leq \ell_k$. 

Since the phase ends at the first $\ell_k$ such that $u_{k,\ell_k}\le r_k/4$, we can obtain a bound on $N_k(a)$. 

Solving $u_{k,\ell}\le r_k/4$ yields
\[
\ell_k \ \le\ c\,r_k^{-2}\log\!\Big(\frac{|S_k|}{\delta_k}\Big)
\]
for a universal $c$ (the extra $\log(\ell)$ term is absorbed into the constant by standard bounds).

Thus $N_k(a)\le \ell_k\le c\,\log(|S_k|/\delta_k)/r_k^2$. 

For all $a \in S_k$ such that $\Delta(a)\le 3 r_k$, the above inequality matches the claim up to a constant factor.

\emph{Case $\Delta(a) > 3 r_k$}
\newline 
Let $b\in\arg\max_{z\in S_k} f(z)$.
As in the proof of Lemma~\ref{lem:gap-next-active-complete}, by oracle covering $\Delta(b)\le r_k$ and 
by Lemma~\ref{lem:nearopt-never-elim}, $b$ is never eliminated, so for any $\ell$, $b\in A_\ell$.

Let us denote by $\tau$ the first round at which
\[
\Delta(a) > 2 r_k + 4 u_{k,\tau}.
\]
By conditioning on $\mathcal{E}$, we ensure that at round $\tau$ we have
\[
\widehat\mu_{k,\tau}(a) \le f(a) + u_{k,\tau}
\]
and 
\[
\widehat\mu_{k,\tau}(b) \ge f(b) - u_{k,\tau}.
\]
Since $f(b) - f(a) = \Delta(a) - \Delta(b) \ge \Delta(a) - r_k$, our condition $\Delta(a) > 2 r_k + 4 u_{k,\tau}$ implies:
\[
f(b) - f(a) > r_k + 4 u_{k,\tau}.
\]
Combining these, we obtain:
\[
\widehat\mu_{k,\tau}(a) + u_{k,\tau} \le f(a) + 2 u_{k,\tau} < f(b) - 2 u_{k,\tau} - r_k \le \widehat\mu_{k,\tau}(b) - u_{k,\tau} - r_k.
\]
This triggers the elimination rule of Algorithm~\ref{algo : successive elimination}, meaning $a$ is eliminated at round $\tau$. Thus $N_k(a) \le \tau$.

Solving for $\tau$ such that $u_{k,\tau} < \frac{\Delta(a) - 2r_k}{4}$, and using $\Delta(a) > 6 r_k$ (which yields $\Delta(a) - 2r_k > \frac{2}{3} \Delta(a)$), we obtain:
\[
\tau\ \le\ c'\,\frac{\log(|S_k|/\delta_k)}{\Delta(a)^2},
\]
and since here $\max\{\Delta(a),r_k\}=\Delta(a)$, this is the desired bound.
\end{proof}

\section{Regret derivation}\label{app:proof-first-lemma-complete}

With the previous technical lemmas, we are fully equipped to derive Lemma~\ref{lemma:first regret bounds}.
We restate it here for completeness :

\LemmaInstanceRegret*

\begin{proof} 
The expected cumulative regret is defined as $R_T = \mathbb{E}[\sum_{t=1}^T (f^\star - f(x_t))]$. We decompose this expectation over the global good event $\mathcal{E}$ and its complement $\mathcal{E}^c$:
\begin{equation}\label{eq: expectation decomposition}
R_T \;\le\; \mathbb{E}\left[\sum_{t=1}^T \Delta(x_t)\,\mathbf{1}_{\mathcal{E}}\right] + \mathbb{P}(\mathcal{E}^c) T \max_{x \in \mathcal{X}} \Delta(x).
\end{equation}
By the union bound, $\mathbb{P}(\mathcal{E}^c) \le \delta = T^{-3}$. Under Assumption~\ref{assump:Lipschitz}, the suboptimality gap is bounded by $\max_{x \in \mathcal{X}} \Delta(x) \le L \operatorname{diam}_\infty(\mathcal{X}) \le L$ (with our normalization $L=1$). Therefore, the failure term is bounded by $L/T^2 \le o(1)$, which is a negligible constant that can be absorbed into the final term.

We now bound the cumulative pseudo-regret on the good event $\mathcal{E}$. Algorithm~\ref{algorithm: high level} works in multiple phases, each having its own level of approximation of $f$. Let $k_T$ be the active phase index containing the horizon $T$, so that $s_{k_T-1} < T \le s_{k_T}$. The total pseudo-regret incurred on $\mathcal{E}$ satisfies:
\begin{equation}\label{eq: sum phase based regret}
\sum_{t=1}^T \Delta(x_t) \;\le\; \sum_{k=1}^{k_T} R_k,
\end{equation}
where $R_k$ is the regret incurred if phase $k$ is run to completion (for the last, incomplete phase $k_T$, the actual number of pulls of any arm $a$ is bounded by $N_{k_T}(a)$, so this bound holds conservatively).

Fix a phase index $k \leq k_T$, we can decompose the regret based on how many times each point in $S_k$ is played during phase $k$ as follows :
\begin{equation}\label{eq : regret phase k, count points}
R_k\ =\ \sum_{a\in S_k} N_k(a)\Delta(a).
\end{equation}

Combining \eqref{eq : regret phase k, count points} with Lemma~\ref{lem:pull-bound-complete} yields 
\begin{equation}\label{eq : temporary instance delta}
R_k
\ \le\
c_0\log\!\Big(\frac{|S_k|}{\delta_k}\Big)
\sum_{a\in S_k}\frac{\Delta(a)}{\max\{\Delta(a),r_k\}^2}.
\end{equation}

To account more tightly for the fact that within $S_k$, $\Delta(a)$ can still vary a lot, we define the following subsets of $S_k$, based on the values of $\Delta$: $S_k^{l} := S_k \cap  (\mathcal{X}_{r_l}\setminus \mathcal{X}_{r_{l+1}})$ . We can thus rewrite \eqref{eq : temporary instance delta}
\begin{equation} \label{eq: temporary instance delta 2}
    R_k \leq c_0 \log \left ( \frac{|S_k|}{\delta_k} \right ) \sum_{k' \geq \max \left (1,k-3 \right )} \sum_{a\in S_k^{k'}} \frac{\Delta(a)}{\max\{\Delta(a),r_k\}^2}
\end{equation}
Note that it is sufficient to take the sum starting at $\max \left (1,k-3 \right )$ by \autoref{lem:gap-next-active-complete}.

We can now upper-bound the values of $\frac{\Delta(a)}{\max\{\Delta(a),r_k\}^2}$ and get:
\begin{align} \label{eq:temporary instance delta 3}
    R_k &\leq 4c_0 \log \left ( \frac{|S_k|}{\delta_k} \right ) \left [ \sum_{k'=\max(1,k-3)}^k  \frac{\left |S_k^{k'}\right |}{r_{k}}
    + \sum_{k'=k+1}^{k_T} \frac{r_{k'} \left |S_k^{k'}\right |}{r_{k}^2} + \frac{r_{k_T+1} |S_k \cap \mathcal{X}_{r_{k_T+1}}|}{r_{k}^2} \right]
\end{align}

In order to bound $\left |S_k^{k'}\right |$, we use Assumption~\ref{assump:good oracle} on our oracle. Indeed, remember that as an output of our oracle, $S_k$ is $(c_{\mathrm{sep}}r_k)$-separated. Hence for any $\mathcal{Y}\subseteq\mathcal{X}$,
\[
|S_k\cap\mathcal{Y}|
\ \le\
\mathcal{N}(\mathcal{Y},c_{\mathrm{sep}}r_k)
\ \le\
C_{\mathrm{double}} \mathcal{N}(\mathcal{Y},r_k),
\]
where the last inequality follows from the doubling property of $\mathbb{R}^d$
(since $c_{\mathrm{sep}}r_k\le r_k$).
Applying this to $S_k^{k'}$ gives
$\left |S_k^{k'}\right |\le \mathcal{N}(\mathcal{X}_{r_{k'}}\setminus \mathcal{X}_{r_{k'+1}},r_k)$.

We combine the same bounding techniques with the definition of $\mathcal{X}_{r_{k'}}$ and the fact that $r_k=2^{-k}$ on the second and third terms of the inequality to get:
\begin{equation}\label{eq : complicated but necessary}
\begin{split}
    R_k \leq 4c_0 \log \left ( \frac{|S_k|}{\delta_k} \right )
    \Bigg[
    &\sum_{k'=\max(1,k-3)}^k
    \frac{\mathcal{N}(\mathcal{X}_{r_{k'}}\setminus \mathcal{X}_{r_{k'+1}},r_k)}{r_{k}}
    + \sum_{k'=k+1}^{k_T}
    \frac{\mathcal{N}(\mathcal{X}_{r_{k'}}\setminus \mathcal{X}_{r_{k'+1}},r_k)}{2^{k'-k} r_{k}} \\
    &\qquad + 4\, \mathcal{N}\left( \mathcal{X}_{r_{k_T+1}}, r_k \right )
    \frac{r_{k_T+1}}{r_{k}^2}
    \Bigg]
\end{split}
\end{equation}

To prepare for summing the $R_k$, we compute the following \begin{align*}
    &\sum_{k=1}^{k_T} \sum_{k'=\max(1,k-3)}^k  \frac{\mathcal{N}(\mathcal{X}_{r_{k'}}\setminus \mathcal{X}_{r_{k'+1}},r_k)}{r_{k}} +\sum_{k=1}^{k_T} \sum_{k'=k+1}^{k_T} \frac{\mathcal{N}(\mathcal{X}_{r_{k'}}\setminus \mathcal{X}_{r_{k'+1}},r_k)}{2^{k'-k} r_{k}} \\
     & \leq \sum_{k'=1}^{k_T} \sum_{k=k'}^{\min(k_T,k'+2)}  \frac{\mathcal{N}(\mathcal{X}_{r_{k'}}\setminus \mathcal{X}_{r_{k'+1}},r_k)}{r_{k}} + \sum_{k=1}^{k_T}\sum_{k'=k+1}^{k_T} \frac{\mathcal{N}(\mathcal{X}_{r_{k'}}\setminus \mathcal{X}_{r_{k'+1}},r_k)}{2^{k'-k} r_{k}} \label{eq:Before packing arg} \numberthis \\
     &  \leq(1 + 2^{d+1}C_d + 4^{d+1} C_d) \sum_{k'=1}^{k_T} \frac{\mathcal{N}(\mathcal{X}_{r_{k'}}\setminus \mathcal{X}_{r_{k'+1}},r_{k'})}{r_{k'}} \label{eq :after packing arg} \numberthis
\end{align*}

We used, to get from \eqref{eq:Before packing arg} to \eqref{eq :after packing arg}, the fact that $\mathbb{R}^d$ is a doubling space of dimension $d$ and $C_d$ is the doubling constant of $\mathbb{R}^d$ (as in \cite{heinonen2001lectures} 10.13 Doubling spaces)Using the above bound and \eqref{eq : complicated but necessary}, we can now go back to \eqref{eq: sum phase based regret} and sum over all phases $k \le k_T$ to get
\begin{align}
    \sum_{k=1}^{k_T} R_k
    &\leq \log \left ( \frac{\max_k |S_k|}{\min_k \delta_k} \right )
    \Bigg[
    c_1\sum_{k=1}^{k_T}
    \frac{\mathcal{N}(\mathcal{X}_{r_{k}}\setminus \mathcal{X}_{r_{k+1}},r_{k})}{r_{k}}
    \notag \\
    &\qquad\qquad + c_2 \sum_{k=1}^{k_T} \mathcal{N}\left( \mathcal{X}_{r_{k_T+1}}, r_k \right )
    \frac{r_{k_T+1}}{r_{k}^2}
    \Bigg] \\
    &\leq \log \left ( \frac{\max_k |S_k|}{\min_k \delta_k} \right )
    \Bigg[
    c_1 \sum_{k=1}^{k_T} 2^k\mathcal{N}(\mathcal{X}_{2^{-k}}\setminus \mathcal{X}_{2^{-(k+1)}},2^{-k})
    \notag \\
    &\qquad\qquad + c_2 2^{-(k_T+1)} \sum_{k=1}^{k_T} 2^{2k}
    \mathcal{N}\left( \mathcal{X}_{2^{-(k_T+1)}}, 2^{-k} \right )
    \Bigg] \\
    &\leq \log \left ( \frac{k_T^2 \pi^2 T}{6 \delta} \right )
    \Bigg[
    c_1 \sum_{k=1}^{k_T} 2^k\mathcal{N}(\mathcal{X}_{2^{-k}}\setminus \mathcal{X}_{2^{-(k+1)}},2^{-k})
    \notag \\
    &\qquad\qquad + c_2 2^{-(k_T+1)} \sum_{k=1}^{k_T} 2^{2k}
    \mathcal{N}\left( \mathcal{X}_{2^{-(k_T+1)}}, 2^{-k} \right )
    \Bigg]
    \label{eq : the second term of regret}
    \end{align}

where $c_1=8c_0(1+2^{d+1} C_d + 4^{d+1} C_d)$ and $c_2=16c_0$.
This concludes the regret part of the proof.

Let us now characterize $k_T$. Our first two equations describe two facts: the sum of the durations of all completed phases up to $k_T-1$ is at most $T$, and each phase duration is the sum of the number of times each point in $S_k$ was pulled in that phase.
\begin{align}
    & s_{k_T-1} = \sum_{k=1}^{k_T-1} \tau_k \le T \le \sum_{k=1}^{k_T} \tau_k = s_{k_T} \label{eq: time equal sum phases} \\
    \forall k\le k_T\ ,\ & \tau_k = \sum_{a \in S_k} N_k(a) \label{eq: phases equal sum N} 
\end{align}

We provide below a first, simple approach that allows us to get an upper bound on $k_T$, which does not depend on the geometry of $\mathcal{X}^\star$ or on the level sets $\mathcal{X}_{r}$.

By \autoref{lem:nearopt-never-elim}, at phase $k$, $\mathcal{X}^\star \subseteq \mathcal{A}_k$ and by Assumption~\autoref{assump:good oracle} and because $f$ is $L$-Lipschitz, we know that at least one point of $S_k$, that we denote $x_k^{best}$, is such that $\Delta(x_k^{best}) \leq r_k$. By Lemma~\autoref{lem:nearopt-never-elim} this point is never eliminated in the phase, hence $N_k(x_k^{best}) \ge \ell_k$ where $\ell_k$ is the final round of phase $k$. By \autoref{algo : successive elimination}, $\ell_k$ is the smallest integer $i$ such that:
\begin{equation*}
    u_{k,i}\le r_k/4
\end{equation*}

We therefore have $\ell_k \ge \lceil \frac{32c \log (\frac{T}{\delta})}{r_k^2} \rceil$ hence
\begin{equation}
    \ell_k \ge 2^{2k} c'\log\left ( \frac{T}{\delta} \right )
\end{equation}

When $k_T \ge 2$, phase $k_T-1$ is fully completed. Since phase lengths get exponentially longer as $k$ grows, it is enough (up to constants) to consider only this last finished phase:
\begin{align}
    2^{2(k_T-2)} \le \ell_{k_T-1} \le N_{k_T-1}(x_{k_T-1}^{best}) \le \tau_{k_T-1} \le T \\
    k_T \le 2 + \frac{1}{2} \log_2 (T)
\end{align}

In order to obtain a more precise bound, one should focus on refining the bound $N_k(x_k^{best}) \le \tau_k$ and account for the fact that the number of points which are guaranteed to not be eliminated during phase $k$ is at least $c_{\mathrm{sep}} \mathcal{M} \left ( \mathcal{X}_{r_{k+1}}, r_k/L \right )$. The above argument can then be applied to each of these points as it was to $x_k^{best}$, which yields the desired bound linking $k_T$ and $T$:
\begin{equation*}
    T \ge \tau_{k_T-1} \ge C_3 2^{2(k_T-2)} \mathcal{M} \left ( \mathcal{X}_{r_{k_T}}, 2^{-(k_T-1)}/L \right ) \log \left ( \frac{T}{\delta} \right )
\end{equation*}
where $C_3 = c_{\mathrm{sep}}c'$. Since $\mathcal{X}^\star \subseteq \mathcal{X}_{r_{k_T}}$ and by monotonicity of the covering number with respect to the radius, we naturally deduce the bound presented in Lemma~\ref{lemma:first regret bounds}.

\end{proof}

\subsection{Theorem 1:  Instance-dependent bounds as integrals}\label{app: proof theorem 1}

In this section, we provide the necessary steps to obtain instance-dependent regret bounds in integral forms as in \autoref{theorem:instance dependant integral regret bounds}.

The proof of \autoref{theorem:instance dependant integral regret bounds} builds on \autoref{lemma:first regret bounds} as well as on the following lemmas. These are essentially series integral equivalence lemmas; in essence they allow us to show that, despite our use of discretization and of an oracle $\mathbf{O}$, up to constants, the regret still scales as in the continuous problem.

To be able to link packing number-based regret bounds to integrals, we require the action sets and the optimal sets to be somewhat well behaved. These are mild geometric assumptions. At a high level, we only require that a small fraction of each ball centered around any close-to-optimal point be included in $\mathcal{X}$. Such assumptions are common in Lipschitz function analysis \citep{bachoc2021instance,hu2020smooth}.

Let us denote, for a set $\mathcal{Y} \subset \mathbb{R}^d$, and a radius $r>0$, the set $\mathcal{B}\left (\mathcal{Y},r\right )$ of all points at distance at most $r$ from $\mathcal{Y}$. We also denote $v_r$ the volume of an $\ell_\infty$ ball of radius $r$.

\begin{assumption}\label{assump: nice opti sets}
There exists $l \in \mathbb{N}$ and $\gamma \in (0,1]$ such that for all $k\geq l$ and all $x\in \mathcal{X}_{2^{-k}}$,
\[
\mathrm{vol}\left(B_\infty\left(x,\frac{2^{-(k+3)}}{L}\right)\cap \mathcal{X}\right)\ \ge\ \gamma\, v_{2^{-(k+3)}/L}.
\]
\end{assumption}

\begin{remark}
\autoref{assump: nice opti sets} uses radius $2^{-(k+3)}/L$, while its counterpart in the main text (\autoref{assump: nice opti sets main}) is stated with radius $2^{-k}/L$. These two formulations are equivalent up to universal constants, since $\mathcal{X}\subset[0,1]^d$ is a doubling metric space and the volumetric condition at any scale $r$ implies the same condition at scale $cr$ (for any fixed constant $c>0$) with a modified constant $\gamma$.
\end{remark}

\begin{lemma} [Series Integral inequality] \label{lemma : sum integral eq}
    Under \autoref{assump: nice opti sets} there exists a constant $C>0$ (depending on $d,\gamma,L$) such that
    \[
    \sum_{k=l}^{k_T-1} 2^{k} \mathcal{N}\left( \mathcal{X}_{2^{-k}} \setminus \mathcal{X}_{2^{-(k+1)}}, \frac{2^{-k}}{L}\right )
    \ \leq\ C\int_{\mathcal{X}_{2^{-l}} \setminus \mathcal{X}_{2^{-k_T}}} \frac{1}{\left ( f^\star - f(x) \right )^{d+1}} dx.
    \]
\end{lemma}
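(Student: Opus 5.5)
The plan is the standard volumetric peeling argument of \cite{bachoc2021instance,de2024certified}. I will work with the normalization $L=1$ used in the appendix, and reinstate $L$ by rescaling radii at the end. Fix $k\in\{l,\dots,k_T-1\}$ and let $P_k$ be a maximal $2^{-k}$-packing of the annulus $\mathcal{X}_{2^{-k}}\setminus\mathcal{X}_{2^{-(k+1)}}$, so that $|P_k|=\mathcal{N}(\mathcal{X}_{2^{-k}}\setminus \mathcal{X}_{2^{-(k+1)}},2^{-k})$. Around each $x\in P_k$ place the ball $B_x:=B_\infty(x,\rho_k)\cap\mathcal{X}$ with $\rho_k=2^{-(k+3)}$.

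\textbf{Step 1 (disjointness and volume).} Points of $P_k$ are more than $2^{-k}$ apart, so the balls $B_x$, $x\in P_k$, are pairwise disjoint. Since $x\in\mathcal{X}_{2^{-k}}$ and $k\ge l$, Assumption~\ref{assump: nice opti sets} gives $\mathrm{vol}(B_x)\ge\gamma v_{\rho_k}=\gamma 2^{d}2^{-(k+3)d}$.

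\textbf{Step 2 (integrand on each ball).} By Lipschitzness, every $y\in B_x$ satisfies $|\Delta(y)-\Delta(x)|\le 2^{-(k+3)}$. Hence $\Delta(y)\in[\tfrac38 2^{-k},\tfrac98 2^{-k}]$. On $B_x$ we therefore have $\Delta(y)^{-(d+1)}\ge (8/9)^{d+1}2^{k(d+1)}$. Multiplying by the volume bound from Step~1,
\[
\int_{B_x}\Delta(y)^{-(d+1)}\,dy\ \ge\ c_d\,\gamma\,2^{k}.
\]

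\textbf{Step 3 (overlap across scales).} Each $B_x$ with $x\in P_k$ lies in the region $\{\Delta\in[\tfrac38 2^{-k},\tfrac98 2^{-k}]\}$, and a given $\Delta$-value belongs to at most three of these dyadic windows. The families $\{B_x\}_{x\in P_k}$ over different $k$ therefore overlap with multiplicity at most $3$. Summing Step~2 over $x\in P_k$ and over $k$ gives
\[
\sum_k 2^k|P_k|\ \le\ \frac{3}{c_d\gamma}\int_{\bigcup_{k,x}B_x}\Delta^{-(d+1)}.
\]

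\textbf{Step 4 (integration range: the main obstacle).} The union of the balls need not lie inside $\mathcal{X}_{2^{-l}}\setminus\mathcal{X}_{2^{-k_T}}$. There are two leaks.
\begin{itemize}
\item For $k=l$, $\Delta$ can reach $\tfrac98 2^{-l}$.
\item For $k=k_T-1$, $\Delta$ can drop to $\tfrac34 2^{-k_T}$.
\end{itemize}
For all interior scales the windows stay within the stated range.

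For the two extreme scales I would split $P_k$ into two subsets:
\begin{itemize}
\item \emph{Centrally located points.} These have $\Delta(x)\in[\tfrac58 2^{-k},\tfrac78 2^{-k}]$ (or the analogous one-sided condition away from the offending endpoint). Their $\rho_k$-balls stay inside the range, so Step~2 applies verbatim.
\item \emph{Boundary-hugging points.} For these I would shrink the ball to radius $\rho'_k=2^{-(k+5)}$. I would re-invoke Assumption~\ref{assump: nice opti sets} at the finer index $k+2$, which is allowed because $x\in\mathcal{X}_{2^{-k}}\subseteq\mathcal{X}_{2^{-(k+2)}}$ is false in general. So instead I would apply it at index $k$ and pass to radius $\rho'_k$ through the doubling remark after Assumption~\ref{assump: nice opti sets}.
\end{itemize}
This shrinking only loses a factor $4^{d}$ in volume. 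It still leaves a residual sliver of width $2^{-(k+5)}$ around the thresholds $2^{-l}$ and $2^{-k_T}$.

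To close this sliver I would try the following. Along the segment from a boundary-hugging $x$ toward a point of $\mathcal{X}^\star$ (for the lower leak) or away from it (for the upper leak), continuity of $f$ gives a point $x'$ at distance $O(2^{-k})$ with $\Delta(x')$ strictly inside the window. I would then use the ball around $x'$ instead, and charge a bounded number of packing points to each such ball. If this re-centering cannot be made rigorous, the fallback is to absorb the two extreme terms into the constant $C$ by comparing them with their neighbouring scales.

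Steps~1 to~3 are routine. The constant $C$ collects $\gamma^{-1}$, the factor $(9/8)^{d+1}$, the multiplicity $3$, and the doubling constants. I expect the endpoint handling in Step~4 to be the only delicate point of the proof.
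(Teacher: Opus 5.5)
Your Steps 1--3 are essentially the paper's argument: packing each annulus, placing balls of radius $2^{-(k+3)}/L$, using the Lipschitz window to put each ball in $\mathcal{X}_{2^{-(k-1)}}\setminus\mathcal{X}_{2^{-(k+2)}}$, invoking the thickness assumption, and bounding overlap across scales. You also correctly locate the difficulty at the endpoints. But Step 4 cannot be completed by any re-centering or absorption, because the inequality is false as stated, already at the lower end.

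Take $L=1$ and $\mathcal{X}=[0,1]^d$, so the thickness assumption holds for every $l$ with $\gamma=2^{-d}$. Fix integers $l<k_T$, set $a:=2^{-k_T}$, pick $\epsilon\in(0,a)$, and let $x_0=0$, $z=(1,0,\dots,0)$. Define
\[
\Delta(x):=\max\bigl\{\min(a,\|x-x_0\|_\infty),\ a+\epsilon-\|x-z\|_\infty\bigr\},\qquad f:=f^\star-\Delta .
\]
Then $\Delta$ is $1$-Lipschitz, $\Delta(x_0)=0$, and $\Delta\le a+\epsilon<2a\le 2^{-l}$. Hence $\mathcal{X}_{2^{-l}}\setminus\mathcal{X}_{2^{-k_T}}=\{x\in\mathcal{X}:\|x-z\|_\infty<\epsilon\}$. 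The $k=k_T-1$ annulus contains $z$, so the left-hand side is at least $2^{k_T-1}$. The right-hand side is at most $C\,a^{-(d+1)}(2\epsilon)^d$, which tends to $0$ as $\epsilon\to 0$.

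The same example shows why your repairs fail:
\begin{itemize}
\item The intermediate value theorem along a segment toward $\mathcal{X}^\star$ only produces gaps between $0$ and $\Delta(x)$, i.e.\ below the window.
\item Lipschitzness gives no upper bound on the distance to a point with a prescribed larger gap, and here no point has $\Delta$ inside the window with any margin.
\item The neighbouring terms you would compare with all vanish.
\item Shrinking the radius through the doubling remark is also unjustified: the volumetric condition transfers to larger radii, not to smaller ones.
\end{itemize}

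The paper's proof has the same defect, only hidden. For $k=k_T-1$ it lower-bounds $\int_{\mathcal{X}_{2^{-l}}\setminus\mathcal{X}_{2^{-k_T}}}\Delta^{-(d+1)}$ by a term containing $\mathrm{vol}(\mathcal{X}_{2^{-(k_T-2)}}\setminus\mathcal{X}_{2^{-(k_T+1)}})$. That set includes the annulus $\mathcal{X}_{2^{-k_T}}\setminus\mathcal{X}_{2^{-(k_T+1)}}$, which lies outside the domain of integration. It also disposes of the $k=l$ term by observing that it is nonnegative, which is the wrong direction.

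The remedy is to change the statement rather than refine Step 4. Every one of your balls has $\Delta\in(\tfrac38 2^{-k},\tfrac98 2^{-k}]\subset(2^{-(k+2)},2^{-(k-1)}]$, so Steps 1--3 already prove
\[
\sum_{k=l}^{k_T-1}2^{k}\,\mathcal{N}\Bigl(\mathcal{X}_{2^{-k}}\setminus\mathcal{X}_{2^{-(k+1)}},\frac{2^{-k}}{L}\Bigr)\ \le\ C\int_{\mathcal{X}_{2^{-(l-1)}}\setminus\mathcal{X}_{2^{-(k_T+1)}}}\frac{dx}{\Delta(x)^{d+1}}.
\]
This is all Theorem~\ref{theorem:instance dependant integral regret bounds} needs. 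There the top of the range is all of $\mathcal{X}$. Moreover, $2^{-k_T}\le\tfrac12 2^{-k}$ for $k\le k_T-1$, so $\max(\Delta,2^{-k_T})\le\tfrac98 2^{-k}$ on every ball. Your Step 2 therefore holds verbatim for the truncated integrand $\max(\Delta,2^{-k_T})^{-(d+1)}$ integrated over $\mathcal{X}\setminus\mathcal{X}^\star$. The balls avoid $\mathcal{X}^\star$ because $\Delta>\tfrac38 2^{-k}>0$ on them.
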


\begin{proof}
Recall that by definition, $\forall x \in \mathcal{X}_{2^{-k}} \setminus \mathcal{X}_{2^{-(k+1)}}, 2^{-k} \geq \Delta(x) \geq 2^{-(k+1)}$.
    \begin{align}
        \int_{\mathcal{X}_{2^{-l}} \setminus \mathcal{X}_{2^{-k_T}}} \frac{1}{\left ( \Delta(x) \right )^{d+1}} dx & \geq \sum_{k=l}^{k_T-1}  \int_{\mathcal{X}_{2^{-k}} \setminus \mathcal{X}_{2^{-(k+1)}}} \frac{1}{\left ( \Delta(x) \right )^{d+1}} dx \\
        & \geq \sum_{k=l}^{k_T-1} \frac{1}{2^{-(d+1)(k+1)}} \int_{\mathcal{X}_{2^{-k}} \setminus \mathcal{X}_{2^{-(k+1)}}} dx \\
        & \geq \frac{1}{2^{3d}} \sum_{k=l+1}^{k_T-1} 2^{(d+1)(k+1)}  \int_{\mathcal{X}_{2^{-(k-1)}} \setminus \mathcal{X}_{2^{-(k+2)}}} dx \\
        & \geq \frac{1}{2^{3d}}  \sum_{k=l+1}^{k_T-1} 2^{(d+1)(k+1)}  vol (\mathcal{X}_{2^{-(k-1)}} \setminus \mathcal{X}_{2^{-(k+2)}})\end{align}

We can then bound, for $k\geq l$, $vol(\mathcal{X}_{2^{-(k-1)}} \setminus \mathcal{X}_{2^{-(k+2)}} )$, as follows. Let
$(x_i)_{i \in \mathcal{N} \left ( \mathcal{X}_{2^{-k}} \setminus \mathcal{X}_{2^{-(k+1)}} , 2^{-(k+2)}/L \right )}$ be a packing for
$\mathcal{X}_{2^{-k}} \setminus \mathcal{X}_{2^{-(k+1)}}$.
By definition, these points are at distance at least $2^{-(k+2)}/L$ from one another, so the balls
$B_\infty\left (x_i,2^{-(k+3)}/L \right )$ are disjoint. Furthermore, since $f$ is $L$-Lipschitz, for all $x_i$, we know that for any point $y \in B_\infty(x_i, \frac{2^{-(k+3)}}{L})$ we have
$2^{-(k+1)} - 2^{-(k+3)}  \leq \Delta(y)\leq  2^{-k} + 2^{-(k+3)} $, which ensures that they belong to $\mathcal{X}_{2^{-(k-1)}} \setminus \mathcal{X}_{2^{-(k+2)}}$. Hence we can write:  
\begin{equation*}
vol(\mathcal{X}_{2^{-(k-1)}} \setminus \mathcal{X}_{2^{-(k+2)}} ) \geq \sum_{(x_i)_{i \in \mathcal{N} \left ( \mathcal{X}_{2^{-k}} \setminus \mathcal{X}_{2^{-(k+1)}} , 2^{-(k+2)}/L \right )}} vol \left (B_\infty\left (x_i,\frac{2^{-(k+3)}}{L} \right ) \cap \mathcal{X} \right ).
\end{equation*}

This then yields the following inequality and allows us to use \autoref{assump: nice opti sets} :

        \begin{align}
        \int_{\mathcal{X}_{2^{-l}} \setminus \mathcal{X}_{2^{-k_T}}} \frac{1}{\left ( \Delta(x) \right )^{d+1}} dx
        & \geq \frac{1}{2^{3d}}  \sum_{k=l+1}^{k_T-1} 2^{(d+1)(k+1)}
        \mathcal{N} \left ( \mathcal{X}_{2^{-k}} \setminus \mathcal{X}_{2^{-(k+1)}} , \frac{2^{-(k+2)}}{L} \right )
        \notag \\
        &\qquad\qquad \times vol \left ( B_\infty\left(0,\frac{2^{-(k+3)}}{L}\right )\right ) \\
        & \geq \frac{v_1 \gamma }{2^{4d+d} L^d}  \sum_{k=l+1}^{k_T-1} 2^{(k+1)}
        \mathcal{N} \left ( \mathcal{X}_{2^{-k}} \setminus \mathcal{X}_{2^{-(k+1)}} , \frac{2^{-(k+1)}}{L} \right )
    \end{align}

    The missing term $k=l$ is nonnegative and can be absorbed into the constant $C$, which yields the stated sum from $k=l$. Here $v_1 := vol\left(B_\infty(0,1)\right )$, and we used monotonicity of packing numbers with respect to the radius. This concludes the proof.
\end{proof}

\subsection{Proof of Theorem 1}

With all the previous results, we are now equipped to prove \autoref{theorem:instance dependant integral regret bounds}.

Let us first restate the results :
\theoremRegretInstanceIntegral*

For this proof, we require \autoref{assump: nice opti sets} with $l=1$. 

\begin{proof}
Fix $\delta=T^{-3}$ and take $\delta_k=6\delta/(\pi^2k^2)$ so that $\sum_k\delta_k\le\delta$.
Condition on the global good event from Appendix~\ref{app:proof-first-lemma-complete}.

By Lemma~\ref{lemma:first regret bounds}, there exists a constant $C>0$ such that
\begin{equation}
\label{eq:thm1-start}
\begin{split}
R_T\ \le\ C\log(T)\Bigg[
&\sum_{k=1}^{k_T}2^k\,\mathcal{N}\bigl(\mathcal{X}_{2^{-k}}\setminus\mathcal{X}_{2^{-(k+1)}},2^{-k}/L\bigr)
\\
&\quad + 2^{-(k_T+1)}\sum_{k=1}^{k_T}2^{2k}\,\mathcal{N}\bigl(\mathcal{X}_{2^{-(k_T+1)}}\setminus\mathcal{X}^\star,2^{-k}/L\bigr)
\Bigg].
\end{split}
\end{equation}

While the two terms are complementary, we work on them separately for simplicity.

\paragraph{First term (up to phase $k_T-1$).}
Assumption~\ref{assump: nice opti sets} allows us to apply Lemma~\ref{lemma : sum integral eq}, yielding
\begin{equation}
\label{eq:thm1-first-term}
\sum_{k=1}^{k_T-1}2^k\,\mathcal{N}\bigl(\mathcal{X}_{2^{-k}}\setminus\mathcal{X}_{2^{-(k+1)}},2^{-k}/L\bigr)
\ \le\ C\int_{\mathcal{X}\setminus\mathcal{X}_{2^{-k_T}}}\frac{1}{\Delta(x)^{d+1}}\,dx.
\end{equation}

\paragraph{Terminal annulus and near-optimal term.}
The terminal annulus term corresponding to $k=k_T$ in the first sum of \eqref{eq:thm1-start} is:
\[
2^{k_T}\,\mathcal{N}\bigl(\mathcal{X}_{2^{-k_T}}\setminus\mathcal{X}_{2^{-(k_T+1)}}, 2^{-k_T}/L\bigr).
\]
Since $\mathcal{X}_{2^{-k_T}}\setminus\mathcal{X}_{2^{-(k_T+1)}} \subseteq \mathcal{X}_{2^{-k_T}}\setminus\mathcal{X}^\star$, the monotonicity of the packing number implies:
\[
2^{k_T}\,\mathcal{N}\bigl(\mathcal{X}_{2^{-k_T}}\setminus\mathcal{X}_{2^{-(k_T+1)}}, 2^{-k_T}/L\bigr)
\ \le\
2^{k_T}\,\mathcal{N}\bigl(\mathcal{X}_{2^{-k_T}}\setminus\mathcal{X}^\star, 2^{-k_T}/L\bigr).
\]
For the second term in \eqref{eq:thm1-start} (the near-optimal term), we can use the monotonicity of packing numbers to upper bound it as follows:
\begin{align}
    2^{-(k_T+1)}\sum_{k=1}^{k_T}2^{2k}\,\mathcal{N}\bigl(\mathcal{X}_{2^{-(k_T+1)}}\setminus\mathcal{X}^\star,2^{-k}/L\bigr) & \leq 2^{-k_T}\sum_{k=1}^{k_T}2^{2k}\,\mathcal{N}\bigl(\mathcal{X}_{2^{-k_T}}\setminus\mathcal{X}^\star,2^{-k}/L\bigr) \\
    & \leq \frac{4}{3}\,2^{-k_T} 2^{2k_T}\mathcal{N}\bigl(\mathcal{X}_{2^{-k_T}}\setminus\mathcal{X}^\star,2^{-k_T}/L\bigr) \\
    & = \frac{4}{3}\,2^{k_T} \mathcal{N}\bigl(\mathcal{X}_{2^{-k_T}}\setminus\mathcal{X}^\star,2^{-k_T}/L\bigr). \label{eq true for reg second sum}
\end{align}
Therefore, both the terminal annulus term $k=k_T$ and the near-optimal term are upper bounded (up to constant factors) by $2^{k_T} \mathcal{N}\bigl(\mathcal{X}_{2^{-k_T}}\setminus\mathcal{X}^\star,2^{-k_T}/L\bigr)$.

As in the proof of \autoref{lemma : sum integral eq}, we use the volumetric condition in \autoref{assump: nice opti sets} to relate this packing number to the volume integral over the near-optimal region. Indeed, let $(x_i)_{i \in \mathcal{N}\bigl(\mathcal{X}_{2^{-k_T}}\setminus\mathcal{X}^\star, 2^{-k_T}/L\bigr)}$ be a maximal packing of $\mathcal{X}_{2^{-k_T}}\setminus\mathcal{X}^\star$ at scale $2^{-k_T}/L$. The disjoint balls $B_\infty(x_i, 2^{-(k_T+3)}/L)$ are all contained within $\mathcal{X}_{2^{-(k_T-1)}} \setminus \mathcal{X}^\star$. Summing their volumes and applying \autoref{assump: nice opti sets} yields:
 \begin{equation} \label{eq: link integ and sum rest}
     \int_{\mathcal{X}_{2^{-k_T}} \setminus \mathcal{X}^\star} \frac{dx}{\left (2^{-k_T}\right )^{d+1}} \geq \frac{v_1 \gamma}{2^{4d+d} L^d} 2^{k_T}\mathcal{N}\bigl(\mathcal{X}_{2^{-k_T}}\setminus\mathcal{X}^\star,2^{-k_T}/L\bigr).
 \end{equation}

Combining \eqref{eq: link integ and sum rest}, \eqref{eq true for reg second sum}, and the terminal annulus bound, we can sum them with the first term's integral bound \eqref{eq:thm1-first-term} to obtain:
\begin{equation}
    R_T \leq \log(T)\left[ C_1 \int_{\mathcal{X}\setminus\mathcal{X}_{2^{-k_T}}}\frac{1}{\Delta(x)^{d+1}}\,dx + C_2 \int_{\mathcal{X}_{2^{-k_T}} \setminus \mathcal{X}^\star} \frac{dx}{\left (2^{-k_T}\right )^{d+1}} \right]
\end{equation}
where $C_2$ is a constant depending on $L, d, v_1, \gamma$.
This allows us to write overall:
\begin{equation*}
    R_T \leq C_3\log(T) \int_{\mathcal{X} \setminus \mathcal{X}^\star} \frac{dx}{\max\left(\Delta(x), 2^{-k_T} \right )^{d+1}} 
\end{equation*}

\end{proof}

We can also derive a refined version of \autoref{lemma : sum integral eq} which accounts for the local geometry of the set $\mathcal{X}$. The main advantage is that this allows us to get rid of the assumption that $\exists l, \mathcal{B} ( \mathcal{X}_{2^{-l}}, 2^{-l}) \subseteq \mathcal{X}$. Intuitively this is done by defining $d(x)$ and replacing $d$ with it in the integral formula. This is discussed in more detail in \autoref{app : Improved theorem 1}.

\subsection{Improved instance-dependent bound with tighter set-geometry dependence}\label{app : Improved theorem 1}

In this section, we both require lower assumptions on the geometry of the set $\mathcal{X}$ and provide tighter regret bounds, which account for the local geometry of the set and $f$ better. 

\begin{assumption}
    \label{assump : weak continuous set }
    For every $x \in \mathcal{X}$, the following integral is strictly positive $\int_{y \in B(x, \Delta(x)/4)} \cap \mathcal{X} dy$. 
\end{assumption}

\begin{lemma} [Series Integral inequality average dimension]
    \( \sum_{k=1}^{k_T} 2^{k} \mathcal{N} \left ( \mathcal{X}_{2^{-k}} \setminus \mathcal{X}_{2^{-(k+1)}}, 2^{-(k+1)}\right )  \leq \int_{\mathcal{X} \setminus \mathcal{X}_{k_T}} \frac{1}{\left ( f^\star - f(x) \right )^{d+1}} dx  \)
\end{lemma}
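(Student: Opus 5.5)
The plan is to repeat the peeling argument of Lemma~\ref{lemma : sum integral eq}. I would use the normalization $L=1$ of Appendix~\ref{app:concentration-and-lemmas} and read $\mathcal{X}_{k_T}$ as $\mathcal{X}_{2^{-k_T}}$. The difference from Lemma~\ref{lemma : sum integral eq} is that the uniform volumetric Assumption~\ref{assump: nice opti sets} is replaced by the pointwise Assumption~\ref{assump : weak continuous set }, so disjoint balls are built around packing points at radius $\Delta(x)/4$.

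\textbf{Step 1 (disjoint, annulus-localized balls).} Fix $k\le k_T$ and a maximal $2^{-(k+1)}$-packing $(x_i)_i$ of $\mathcal{X}_{2^{-k}}\setminus\mathcal{X}_{2^{-(k+1)}}$. Since $\Delta(x_i)\le 2^{-k}$, the balls $B_i:=B_\infty(x_i,\Delta(x_i)/4)$ have radius at most $2^{-(k+2)}$, so they are pairwise disjoint. For $y\in B_i$, Lipschitzness gives
\[
\tfrac34\Delta(x_i)\le\Delta(y)\le\tfrac54\Delta(x_i),
\]
hence $\Delta(y)\in[3\cdot2^{-(k+3)},\,5\cdot 2^{-(k+2)}]$. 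Each point $y$ therefore lies in balls coming from at most three consecutive values of $k$, and the total overlap across all scales is bounded by an absolute constant.

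\textbf{Step 2 (local contribution).} On $B_i\cap\mathcal{X}$ the integrand satisfies $\Delta(y)^{-(d+1)}\ge (5/4)^{-(d+1)}\Delta(x_i)^{-(d+1)}$. It follows that
\[
\int_{B_i\cap\mathcal{X}}\Delta^{-(d+1)}\ \ge\ (4/5)^{d+1}\,\Delta(x_i)^{-(d+1)}\,\mathrm{vol}(B_i\cap\mathcal{X}),
\]
and the right-hand side is strictly positive by Assumption~\ref{assump : weak continuous set }. Summing over $i$ and $k$, and using the bounded overlap from Step~1, I would then compare $2^k\,\#\{i\}$ with $\sum_i \Delta(x_i)^{-(d+1)}\mathrm{vol}(B_i\cap\mathcal{X})$.

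\textbf{Step 3 (the obstacle).} This comparison is where I expect the argument to break as literally worded. It needs, for each $i$,
\[
\mathrm{vol}(B_i\cap\mathcal{X})\gtrsim \Delta(x_i)^{d+1}2^{k}\asymp\Delta(x_i)^{d}.
\]
Assumption~\ref{assump : weak continuous set } gives only strict positivity, which is qualitative. Even the full ball, of volume $(\Delta(x_i)/2)^d$, misses the required bound by a factor $(5/4)^{d+1}2^{d}$, so no constant-free version can hold.

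I would therefore prove the inequality with a multiplicative constant $C$, as in Lemma~\ref{lemma : sum integral eq}. I would obtain $\mathrm{vol}(B_i\cap\mathcal{X})\ge\gamma' (\Delta(x_i)/4)^d$ at the packing points, either by compactness of each closed annulus together with continuity of $x\mapsto\mathrm{vol}(B_\infty(x,\Delta(x)/4)\cap\mathcal{X})$ (this gives an annulus-dependent $\gamma'$), or by a uniform lower bound if one is available. Such a $\gamma'$ is exactly the quantity that an ``average local dimension'' refinement would keep inside the integral.

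Finally, the terminal index $k=k_T$ puts mass at $\Delta\in[2^{-(k_T+1)},2^{-k_T}]$, which is partly outside $\mathcal{X}\setminus\mathcal{X}_{2^{-k_T}}$. I would either stop the sum at $k_T-1$, as in Lemma~\ref{lemma : sum integral eq}, or enlarge the domain of the integral by one dyadic level.
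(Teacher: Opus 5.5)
\textbf{The literal statement.} You are right that the displayed inequality cannot hold as written. However, your Step~3 only shows that your own method loses a constant, not that the statement is false. A direct counterexample is $\mathcal{X}=[0,1]^2$, $f(x)=-x_1$. There the $k$-th annulus has packing number exactly $2^{k+1}$ at scale $2^{-(k+1)}$, so the left side equals $\frac{8}{3}(4^{k_T}-1)$. By contrast, $\int_{\{x_1>2^{-k_T}\}}x_1^{-3}\,dx=\frac{1}{2}(4^{k_T}-1)$, and even the enlarged domain $\{x_1>2^{-(k_T+1)}\}$ only gives $2\cdot 4^{k_T}-\frac{1}{2}$. The paper's own argument does not establish the displayed inequality either. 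It ends with $\frac{8}{v_1}\int_{\mathcal{X}\setminus\mathcal{X}_{2^{-(k_T+1)}}}\Delta(y)^{-(d_\Delta(y)+1)}\,dy$: a constant, a domain enlarged by one dyadic level (your second option for the terminal index), and a pointwise exponent instead of $d$.

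\textbf{The gap is in your repair.} Keeping the exponent $d+1$ and allowing a constant $C$ needs $\mathrm{vol}(B_i\cap\mathcal{X})\ge\gamma'\Delta(x_i)^d$ with $\gamma'$ independent of $k$. Assumption~\ref{assump : weak continuous set } is purely qualitative and cannot give this. Your compactness route gives one $\gamma'_k$ per annulus, so the factor $\max_{k\le k_T}1/\gamma'_k$ may grow with $k_T$. A $k_T$-dependent constant makes the lemma empty, since for fixed $k_T$ any finite sum is some multiple of a positive integral. In fact no $k_T$-independent constant exists under this assumption. Take the cusp $\mathcal{X}=\{x\in[0,1]^2:\ x_2\le x_1^2\}$ with $f(x)=-x_1$. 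Each annulus has $\ell_\infty$-diameter at most $2^{-(k+1)}$, hence packing number $1$, so the left side is $2^{k_T+1}-2$. Meanwhile $\int_{\mathcal{X}\setminus\mathcal{X}_{2^{-k_T}}}\Delta^{-3}\,dx=\int_{2^{-k_T}}^{1}x_1^{-1}\,dx_1=k_T\log 2$, even though every non-optimal point has a ball of positive volume. Your other branch, a uniform lower bound ``if one is available'', is exactly Assumption~\ref{assump: nice opti sets}, so it would only reprove Lemma~\ref{lemma : sum integral eq}.

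\textbf{The missing idea.} What is missing is the idea the paper uses, which is what ``average dimension'' refers to. Instead of lower-bounding the local volume uniformly, normalize each packing point by its own local volume, writing $1=\mathrm{vol}(B\cap\mathcal{X})/\mathrm{vol}(B\cap\mathcal{X})$ for each packing ball $B$. Then push that volume into the integrand through a pointwise exponent, defined (up to the choice of radius) by $\mathrm{vol}(B(y,\Delta(y)/2)\cap\mathcal{X})=v_1\Delta(y)^{d_\Delta(y)}$. The integrand becomes $\Delta(y)^{-(d_\Delta(y)+1)}=v_1/\bigl(\Delta(y)\,\mathrm{vol}(B(y,\Delta(y)/2)\cap\mathcal{X})\bigr)$. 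The qualitative positivity assumption is exactly what makes $d_\Delta$ finite. On the cusp, $d_\Delta\to 3$ at the tip and this integral has the correct order $2^{k_T}$.

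Your Steps~1--2 are good scaffolding for that version: disjoint balls of radius $\Delta(x_i)/4$, bounded overlap across scales, and $\frac{3}{4}\Delta(x_i)\le\Delta(y)\le\frac{5}{4}\Delta(x_i)$. The step still to be supplied is transferring the local volume from the centre $x_i$ to the points $y$ of its ball. The paper does this through the inclusion $B(y,\Delta(y)/12)\subseteq B(x,\Delta(x)/2)$. That inclusion only holds for $y$ away from the boundary of $B(x,\Delta(x)/2)$, so this transfer needs care in any write-up.
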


\begin{proof}
To ease the notations, we denote $N_k$ the set corresponding to the packing number $\mathcal{N} \left ( \mathcal{X}_{2^{-k}} \setminus \mathcal{X}_{2^{-(k+1)}}, 2^{-k}\right ) $ (if multiple such sets exist, we require $N_k$ be one of these sets).
We also use the following notation : $N_k^l := \{x \in N_k | 2^{l} \leq \nicefrac{\text{vol} \left ( B \left ( x,2^{-k} \right ) \cap \mathcal{X} \right )}{v_1 2^{-dk}} < 2^{l+1} \}$.

\begin{align}
    \sum_{k=1}^{k_T} 2^{k} \mathcal{N} \left ( \mathcal{X}_{2^{-k}} \setminus \mathcal{X}_{2^{-(k+1)}}, 2^{-k}\right )& \leq \sum_{k=1}^{k_T} 2^{k} \sum{x \in N_k} \frac{\text{vol}\left( B(x,2^{-k}) \cap \mathcal{X} \right)}{\text{vol}\left( B(x,2^{-k}) \cap \mathcal{X} \right)} \\
    & \leq \sum_{k=1}^{k_T} 2^{k} \sum_{l\geq 0} \sum_{x \in N_k^l} \frac{\text{vol}\left( B(x,2^{-k}) \cap \mathcal{X} \right)}{v_1 2^{-dk} 2^{l}} \\
    & \leq \sum_{k=1}^{k_T} 2^{k} \sum_{l\geq 0} \sum_{x \in N_k^l} \frac{\text{vol}\left( B(x,2^{-k}) \cap \mathcal{X} \right)}{v_1 \Delta(x)^d 2^{l}}\\
    & \leq \sum_{k=1}^{k_T} \sum_{l\geq 0}  \sum_{x \in N_k^l}  \frac{\text{vol}\left( B(x,\Delta(x)/2) \cap \mathcal{X} \right)}{v_1 \Delta(x)^{d+1} 2^{l}}\\
\end{align}

To match the above term we define $l_{\Delta}(x):= \sup \{ l  \in \mathbb{R}^+ | \frac{ vol\left( B(x,\Delta(x)/2) \cap \mathcal{X} \right)}{v_1 \Delta(x)^{d} 2^l} \geq 1 \} \cup \{ \infty \} $. 
Notice $l_{\Delta}(x) = \log_2 \left ( \frac{ vol\left( B(x,\Delta(x)/2) \cap \mathcal{X} \right)}{v_1 \Delta(x)^{d}} \right ).$

Given this, we can define directly  $d_{\Delta}(x):= \inf \{ d>0 | \frac{ vol\left( B(x,\Delta(x)/2) \cap \mathcal{X} \right)}{v_1 \Delta(x)^{d}} \leq 1 \}$.
By \autoref{assump : weak continuous set }, we have $d_{\Delta}(x) = \frac{\log \left ( vol(B(x,\Delta(x)/2) \cap \mathcal{X}) \right )-\log(v_1)}{\log (\Delta(x))}$. 

With this definition, we can do the following computations, let $ y \in B(x,\Delta(x)/2) \cap  \mathcal{X}$. Notice that, since $f$ is Lipschitz, $\Delta(y) \leq \nicefrac{3\Delta(x)}{2} $. Therefore, we have that $B(y, \Delta(y)/12) \subseteq B(x,\Delta(x)/2)$. These combined with the definition of $d_{\Delta}(y)$, gives us, for all $y \in  B(x,\Delta(x)/2) \cap  \mathcal{X}$: 
\begin{equation}
\frac{vol (B(x,\Delta(x)/2) \cap \mathcal{X})}{vol(B(x,\Delta(x)/2) \cap \mathcal{X})} \leq \frac{vol(B(x,\Delta(x)/2) \cap \mathcal{X})}{vol(B(y,\Delta(y)/12) \cap \mathcal{X})} \end{equation}

By using the definition of $d_{\Delta}(y)$, we have 
\begin{align}
    \frac{vol (B(x,\Delta(x)/2) \cap \mathcal{X})}{vol(B(x,\Delta(x)/2) \cap \mathcal{X})} & \leq \frac{vol (B(x,\Delta(x)/2) \cap \mathcal{X})}{v_1 \nicefrac{\Delta}{3}(y)^{d_{\nicefrac{\Delta}{3}}(y)}} \\
    & \leq \frac{3}{v_1} \int_{B(x,\Delta(x)/2) \cap \mathcal{X}} \frac{dy}{\Delta(y)^{d_{\nicefrac{\Delta}{3}}(y)}}
\end{align}

And obtain by summing  \begin{align}
    \sum_{k=1}^{k_T} 2^{k} \mathcal{N} \left ( \mathcal{X}_{2^{-k}} \setminus \mathcal{X}_{2^{-(k+1)}}, 2^{-k}\right ) & \leq \sum_{k=1}^{k_T} \int_{\mathcal{X}_{2^{-k}} \setminus \mathcal{X}_{2^{-(k+1)}}} \frac{dy}{\Delta(y)^{d_{\Delta(y)/8}+1}} \\
    & \leq \frac{8}{v_1} \int_{\mathcal{X} \setminus \mathcal{X}_{2^{-(k_T+1)}}} \frac{dy}{\Delta(y)^{d_{\Delta(y)/8}+1}}
\end{align}

Where we use the fact that points from the packing define disjoint ball of radius $2^{-k}$.

Outside of this proof, we \textbf{redefine} $d_{\Delta}(x)$ to be $d_{\nicefrac{\Delta}{8}}(x)$ for convenience.
\end{proof}

\subsection{Accompanying results in \autoref{sec:algo and inst bounds}}\label{appendix : further bandits results}

We provide below the complete proof of \autoref{lemma : lower bound}, which accompanies the main result in \autoref{sec:algo and inst bounds}.

We begin with the proof of \autoref{lemma : lower bound}. 

\begin{proof}
We leverage existing results from \cite{shekhar2022instance}, restated in \autoref{app : restated from litt}, which yield the following lower bound on the regret incurred by any algorithm.

We assume an $a_0$-consistent algorithm. We take
\[
a_0=\frac{d+1}{d+2},\qquad \Delta_T:=T^{-(1-a_0)}=T^{-\frac{1}{d+2}}.
\]
This choice of $a_0$ is the correct one; indeed the family of algorithms for which we want to control the minimum amount of regret on each specific instance $(\mathcal{X},f)$ are algorithms that already exhibit the worst-case regret bounds.
We know from the literature \cite{slivkins2019introduction} that the worst-case regret bound in the Lipschitz bandit setting is $\tilde{\Theta} \left ( T^{\nicefrac{d+1}{d+2}} \right )$.
We set $k_T := \lfloor \log_2 (T) / (d+2) \rfloor  $ such that, up to multiplicative constants, $2^{(d+2)k_T}\asymp T$.

We apply \autoref{prop:lipschitz-regret} with $\Delta=\Delta_T$ and define
\[
\mathcal{Z}_j:=\{x\in\mathcal{X}:2^j\Delta_T\le \Delta(x)<2^{j+1}\Delta_T\},\qquad
m_j:=\mathcal{N}(\mathcal{Z}_j,c\,2^j\Delta_T/L).
\]
The lower bound gives
\begin{equation}
\label{eq:lb-deltaT}
\Expectation[R_T]\ \ge\ c\sum_{j\ge 0}\frac{m_j}{2^{j+2}\Delta_T}.
\end{equation}

We reindex by $k=k_T-j-1$. Since $k_T=\lfloor \log_2(T)/(d+2) \rfloor$ and $\Delta_T=T^{-1/(d+2)}$, we have $2^{-(k_T+1)}<\Delta_T\le 2^{-k_T}$, hence
\[
2^{-(k+2)}<2^j\Delta_T\le 2^{-(k+1)},\qquad
2^{-(k+1)}<2^{j+1}\Delta_T\le 2^{-k}.
\]
Hence $\mathcal{X}_{2^{-k}}\setminus\mathcal{X}_{2^{-(k+1)}} \subseteq \mathcal{Z}_{j-1} \cup \mathcal{Z}_j \cup \mathcal{Z}_{j+1}$ and by subadditivity and monotonicity of packing numbers,
\[
m_{j-1} + m_j + m_{j+1} \ \ge\ \mathcal{N}(\mathcal{X}_{2^{-k}}\setminus\mathcal{X}_{2^{-(k+1)}},c'\,2^{-k}/L).
\]
Moreover $2^{j+2}\Delta_T\le 2^{-(k-1)}$, so $1/(2^{j+2}\Delta_T)\ge 2^{k-1}$. Summing over adjacent indices and adjusting the constant factor yields
\begin{equation}
\label{eq:lb-sum-outer}
\Expectation[R_T]\ \ge\ c\sum_{k=1}^{k_T}2^{k}\,\mathcal{N}(\mathcal{X}_{2^{-k}}\setminus\mathcal{X}_{2^{-(k+1)}},c'\,2^{-k}/L).
\end{equation}

For each $k$, let $A_k:=\mathcal{X}_{2^{-k}}\setminus\mathcal{X}_{2^{-(k+1)}}$ and $r_k:=c'2^{-k}/L$. Since a maximal $r_k$-packing covers $A_k$,
\[
\mathcal{N}(A_k,r_k)\ \ge\ \frac{\mathrm{vol}(A_k)}{v_1 r_k^d}\ \ge\ c\,2^{kd}\,\mathrm{vol}(A_k).
\]
On $A_k$ we have $\Delta(x)\ge 2^{-(k+1)}$, hence
\[
2^{k}\mathcal{N}(A_k,r_k)\ \ge\ c\,2^{(d+1)k}\,\mathrm{vol}(A_k)\ \ge\ c\,2^{-(d+1)}\int_{A_k}\frac{dx}{\Delta(x)^{d+1}}.
\]
Summing over $k\le k_T$ gives the first integral term
\[
R_T\ \ge\ c' \int_{\mathcal{X}\setminus\mathcal{X}_{2^{-k_T}}}\frac{dx}{\Delta(x)^{d+1}}.
\]

\end{proof}

\begin{proof}[Proof of Lemma \ref{lemma : relating zooming and optimal dimension}]
Let $d_z$ and $d^\star$ be the zooming dimension and optimal dimension of the instance $(\mathcal{X},f)$. By definition of the zooming dimension, for all $r>0$, we have $\mathcal{M}(\mathcal{X}_r \setminus \mathcal{X}^\star, r) \le \mathcal{O}(r^{-d_z})$. Let $y \in \mathcal{X}^\star$. Since $\mathcal{X}^\star$ is connected and there exists $r_0 > 0$ such that $\mathcal{B}(\mathcal{X}^\star, r_0) \subset \mathcal{X}$, the affine hull of $\mathcal{X}_r \setminus \mathcal{X}^\star$ contains $\mathcal{X}^\star$. 

For any small $r>0$, let $\mathbf{O}$ be an optimal $r$-covering of $\mathcal{X}_r \setminus \mathcal{X}^\star$. We construct a valid covering of $\mathcal{X}^\star$ at scale $r$ by adding points along the segments between each covering point in $\mathbf{O}$ and $y$. Since the maximum distance is bounded, this requires adding at most $\mathcal{O}(1/r)$ points per segment. This new set of points covers $\mathcal{X}^\star$ at scale $r$, so its size is bounded below by the minimal covering number $\mathcal{M}(\mathcal{X}^\star, r)$. Thus, we have:
\[ \mathcal{M}(\mathcal{X}^\star, r) \le \mathcal{O}(1/r) \cdot \mathcal{M}(\mathcal{X}_r \setminus \mathcal{X}^\star, r) \]
Substituting the dimension bounds gives $r^{-d^\star} \le \mathcal{O}(r^{-(d_z+1)})$. For this to hold as $r \to 0$, we must have $d^\star \le d_z + 1$.
\end{proof}

\section{Discussion}\label{sec:discussion-expert}

\subsection{The Lipschitz Expert Setting}

We clarify here the modeling choice underlying the full-information counterpart of the bandit setting introduced in \autoref{sec : introduction}. In the main paper, we described a setting that matches the uniformly Lipschitz expert model. We explain below the distinction between the Lipschitz expert and the uniformly Lipschitz expert models and how these two settings arise.

In the full-feedback analogue of the bandit setting in \autoref{sec : introduction}, it is more natural to reveal a random function $f_t:\mathcal X\to\mathbb R$ than a scalar perturbation $f(x)+\epsilon_t$, since an $x$-independent perturbation would preserve the maximizer of $f$ and therefore trivialize the problem. Following \cite{kleinberg2019bandits}, this leads to two related models.

In the \emph{Lipschitz experts} model, only the mean payoff
\(
f(x)=\mathbb E[f_t(x)]
\)
is assumed to be Lipschitz. In the \emph{uniformly Lipschitz experts} model, each realization $f_t$ is itself Lipschitz almost surely.

The distinction matters in the concentration step. In the Lipschitz experts model, concentration is obtained on a finite $\delta$-hitting set $S\subset\mathcal X$, and one must balance two competing effects: a smaller $\delta$ gives a better approximation of $\mathcal X$, but it increases $|S|$ and thus weakens the union-bound term in the deviation bound (cf \cite{kleinberg2019bandits} proof of Theorem~8.1 ).

In the uniformly Lipschitz experts model, this tradeoff is less severe. Since each $f_t$ is Lipschitz, one can control deviations of differences $f_t(x)-f_t(y)$, with bounds that scale with $D(x,y)$ rather than with a global union bound over all points. This allows a more local analysis and leads to sharper rates. 

\section{Expert setting background and proofs}\label{sec:expert-proofs}

In this section we focus on providing the necessary background and proofs of the Expert part of the paper. 

\subsection{Concentration bounds}

As in the bandit feedback analysis, the backbone of our algorithm and our proofs are concentration bounds which ensure that our empirical estimate concentrate towards their true average with high probability.

A notable improvement compared to the bandit feedback case is that we can use as many points as we want in our concentration as their observation is free (because of the full observation of each $f_t$). Yet, selecting too many such points might come at the cost of the tightness of our concentration bounds if we need to use union bounds on each point. 

This dilemma was notably addressed in \cite{kleinberg2019bandits} for generic metric spaces. In their setting, the log-covering number scales polynomially as $\log \mathcal{N}_\epsilon = \mathcal{O}(\epsilon^{-b})$, and a direct union bound over an $\epsilon$-net yields heavily suboptimal rates. They circumvent this by employing a covering tree and bounding the differences of empirical estimates using Chernoff bounds. 

However, in our setting where $\mathcal{X} \subset [0, 1]^d$, the metric geometry is much more benign. The log-covering number scales only logarithmically with the radius, i.e., $\log \mathcal{N}_\epsilon \le d \log(1/\epsilon)$. As we show below, this allows us to bypass the complex covering tree construction. By exploiting the $L$-Lipschitz property of both the empirical estimates and the mean reward function, we can directly extend the concentration from a carefully chosen $\epsilon$-net to the whole space $\mathcal{X}$ while preserving the optimal $\mathcal{O}(1/\sqrt{t})$ rate up to logarithmic factors.

\begin{lemma}[Uniform Concentration]\label{lem:uniform_concentration}
Let $f(x) = \mathbb{E}[f_t(x)]$ and $\hat{f}_t(x) = \frac{1}{t}\sum_{k=1}^t f_k(x)$. Under Assumption \ref{assump:Lipschitz}, for any $\delta \in (0, 1)$, with probability at least $1-\delta$, we have uniformly over all $t \ge 1$ and all $x \in \mathcal{X}$:
\[
\sup_{x \in \mathcal{X}} |\hat{f}_t(x) - f(x)| \le \frac{1}{4} \epsilon_t,
\]
where $\epsilon_t = 2L \sqrt{\frac{2d \log(t L) + \log(1/\delta_t)}{t}}$ with $\delta_t = \frac{6\delta}{\pi^2 t^2}$.
\end{lemma}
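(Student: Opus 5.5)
We use the standard net-plus-Lipschitz extension: concentrate on a fine finite grid with a union bound, then transfer the bound to all of $\mathcal{X}$ using Lipschitzness. Throughout we use that rewards lie in $[0,1]$, so each $f_k(x)$ is bounded and Hoeffding applies. We also use that under the uniformly Lipschitz experts model every $f_k$ is $L$-Lipschitz. Consequently both $\hat f_t$ (an average of $L$-Lipschitz functions) and $f$ are $L$-Lipschitz, so $\hat f_t-f$ is $2L$-Lipschitz.

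\textbf{Step 1 (fixed $t$, finite net).} Fix $t\ge 1$ and a scale $\rho_t := 1/(tL)$. Let $G_t\subset\mathcal{X}$ be a $\rho_t$-cover of $\mathcal{X}$ in $\|\cdot\|_\infty$, so that $|G_t|\le \mathcal{M}(\mathcal{X},\rho_t)\le (tL)^d$. For each fixed $x\in G_t$, the variables $f_1(x),\dots,f_t(x)$ are i.i.d.\ in $[0,1]$ with mean $f(x)$. Hoeffding's inequality therefore gives
\[
\Proba\bigl(|\hat f_t(x)-f(x)|>s\bigr)\le 2e^{-2ts^2}.
\]
A union bound over $G_t$ at confidence level $\delta_t$ yields, with probability at least $1-\delta_t$,
\[
\max_{x\in G_t}|\hat f_t(x)-f(x)|\le \sqrt{\frac{d\log(tL)+\log(2/\delta_t)}{2t}}.
\]

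\textbf{Step 2 (extension to all of $\mathcal{X}$).} For arbitrary $x\in\mathcal{X}$, pick $g\in G_t$ with $\|x-g\|_\infty\le\rho_t$. The $2L$-Lipschitzness of $\hat f_t-f$ gives
\[
|\hat f_t(x)-f(x)|\le |\hat f_t(g)-f(g)|+2L\rho_t = |\hat f_t(g)-f(g)|+\frac{2}{t}.
\]
The discretization error $2/t$ is of lower order than $t^{-1/2}$, so it can be absorbed into the main deviation term.

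\textbf{Step 3 (uniformity in $t$ and constants).} Since $\sum_t\delta_t=\delta$, a union bound over $t\ge1$ makes the Step 1 bound hold simultaneously for all $t$ with probability at least $1-\delta$. It remains to check that
\[
\sqrt{\frac{d\log(tL)+\log(2/\delta_t)}{2t}}+\frac{2}{t}\le \frac{\epsilon_t}{4}=\frac{L}{2}\sqrt{\frac{2d\log(tL)+\log(1/\delta_t)}{t}}.
\]

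The main obstacle is this constant bookkeeping, not any conceptual step. The factor $2$ inside $\log(2/\delta_t)$ and the additive $2/t$ must both be dominated by the slack in the definition of $\epsilon_t$. For the first, $\log(1/\delta_t)\ge\log(\pi^2/6)$ absorbs the $\log 2$. For the second, we use $\sqrt{\log(1/\delta_t)/t}\ge 1/t$, and we may assume $L\ge1$ as a normalization of the reward scale. If the constants do not close exactly, we will instead take the finer scale $\rho_t=1/(tL)^2$. This only changes $d\log(tL)$ to $2d\log(tL)$, which is precisely the factor $2d$ appearing in $\epsilon_t$, while shrinking the discretization error to $2/(t^2L)$, which is then negligible.
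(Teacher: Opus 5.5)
Your Steps 1--2 and the union bound over $t$ are the paper's argument: Hoeffding on a finite $\ell_\infty$-net, a union bound over the net, transfer to all of $\mathcal{X}$ with error $2L\rho$ because $\hat f_t$ and $f$ are $L$-Lipschitz, and $\sum_t\delta_t=\delta$. Your scale $\rho_t=1/(tL)$ is a better choice than the paper's $1/(L\sqrt t)$, whose discretization error $2/\sqrt t$ is of the same order as the main term.

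The gap is the final comparison in Step~3, and the obstruction is not the $\log 2$. The deviation on the net does not depend on $L$, because rewards lie in $[0,1]$. By contrast, $\epsilon_t/4$ is linear in $L$. After squaring and multiplying by $t$, your Hoeffding term contributes $\tfrac12\log(1/\delta_t)$, while the target contributes only $\tfrac{L^2}{4}\log(1/\delta_t)$. For $L=1$,
\[
t\cdot\frac{d\log t+\log(2/\delta_t)}{2t}\;-\;t\Bigl(\frac{\epsilon_t}{4}\Bigr)^2\;=\;\frac{\log 2}{2}+\frac{\log(1/\delta_t)}{4}\;>\;0
\]
for every $t\ge1$ and every $\delta$. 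So the inequality already fails before the $2/t$ term is added. Refining the net cannot repair this, since it only acts on the $d\log(tL)$ term and on the discretization error. With $\rho_t=1/(tL)^2$ and $L=1$ you would compare $d\log t$ with $\tfrac12 d\log t$, which is worse. Normalizing to $L\ge1$ is not available either, because the reward range is fixed to $[0,1]$; and $L=1$ would not suffice anyway.

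In fact no bookkeeping can close this step: the statement is false without a lower bound on $L$. Let $f_t\equiv B_t$ be constant functions with $B_t$ i.i.d.\ Bernoulli$(1/2)$, and let $\bar B_t$ be their empirical mean. These functions are $0$-Lipschitz, hence admissible for every $L$, and $\sup_x|\hat f_t(x)-f(x)|=|\bar B_t-\tfrac12|$. At $t=1$ this equals $\tfrac12$ surely. For $L=1$, however, $\epsilon_1/4=\tfrac12\sqrt{\log(\pi^2/(6\delta))}<\tfrac12$ whenever $\delta>\pi^2/(6e)$. For small $\delta$ (e.g.\ $\delta=T^{-3}$, as used later), the same example violates the bound at moderately large $t$ with probability of order $\sqrt{\delta}$ up to polylogarithmic factors, because Hoeffding's exponent is sharp for Bernoulli$(1/2)$.

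What is missing is an explicit hypothesis $L\ge L_0$ for an absolute constant $L_0$, or a leading constant in $\epsilon_t$ that does not vanish with $L$. With $L\ge4$ your Step~3 does close:
\begin{itemize}
\item Since $d\log(tL)\ge 2\log 2$, the $\log 2$ is absorbed, and the net term is at most $\tfrac{1}{\sqrt2}\sqrt{(2d\log(tL)+\log(1/\delta_t))/t}$.
\item Since $2d\log(tL)+\log(1/\delta_t)\ge 2\log 4+\log(\pi^2/6)$, we get $2/t\le 2/\sqrt t\le 1.11\sqrt{(2d\log(tL)+\log(1/\delta_t))/t}$.
\item The sum is therefore below $\tfrac{L}{2}\sqrt{(2d\log(tL)+\log(1/\delta_t))/t}=\epsilon_t/4$.
\end{itemize}
The paper's own proof ends with the same unverified inequality and needs the same extra hypothesis.

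A minor point: Hoeffding needs the net points to lie in $\mathcal{X}$. Such a $\rho_t$-net is only guaranteed with $\lceil tL\rceil^d$ points, obtained from a cover at radius $\rho_t/2$. The same slack absorbs this when $L\ge4$.
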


\begin{proof}
For any fixed time step $t \ge 1$, let $\delta_t = \frac{6 \delta}{\pi^2 t^2}$, so that $\sum_{t=1}^{T} \delta_t \leq \delta$. By the union bound over all $t \ge 1$, if we establish that for each $t$, uniform concentration holds with probability at least $1 - \delta_t$, then concentration holds uniformly over all $t \ge 1$ with probability at least $1 - \delta$.

Fix $t \ge 1$. Let $\epsilon > 0$ be a resolution scale to be chosen later, and let $S_\epsilon$ be an $\epsilon$-net of $\mathcal{X}$ with respect to the $\ell_\infty$ norm. Since $\mathcal{X} \subset [0, 1]^d$, we can construct such a net with cardinality $|S_\epsilon| \le \lceil 1/\epsilon \rceil^d$.

For any fixed $y \in S_\epsilon$, the random variables $f_k(y)$ are i.i.d., bounded in $[0, 1]$, and have expectation $f(y)$. By Hoeffding's inequality, for any $u > 0$:
\[
\mathbb{P}\left( |\hat{f}_t(y) - f(y)| > u \right) \le 2 \exp(-2 t u^2).
\]
Applying a union bound over all $y \in S_\epsilon$, we have with probability at least $1-\delta_t$:
\begin{equation}\label{eq:net_concentration}
\max_{y \in S_\epsilon} |\hat{f}_t(y) - f(y)| \le \sqrt{\frac{\log(2|S_\epsilon|/\delta_t)}{2t}} \le \sqrt{\frac{d \log(1/\epsilon + 1) + \log(2/\delta_t)}{2t}}.
\end{equation}

We now extend this concentration to the entire continuous space $\mathcal{X}$. For any arbitrary $x \in \mathcal{X}$, let $y(x) \in S_\epsilon$ be its closest point in the net, such that $\|x - y(x)\|_\infty \le \epsilon$. 
Since each $f_k$ is $L$-Lipschitz, the empirical average $\hat{f}_t$ is also $L$-Lipschitz. Similarly, the expected function $f$ is $L$-Lipschitz. We can thus bound the error at $x$ using the triangle inequality:
\begin{align*}
|\hat{f}_t(x) - f(x)| &\le |\hat{f}_t(x) - \hat{f}_t(y(x))| + |\hat{f}_t(y(x)) - f(y(x))| + |f(y(x)) - f(x)| \\
&\le L \|x - y(x)\|_\infty + |\hat{f}_t(y(x)) - f(y(x))| + L \|x - y(x)\|_\infty \\
&\le 2L\epsilon + \max_{y \in S_\epsilon} |\hat{f}_t(y) - f(y)|.
\end{align*}

Conditioning on the high-probability event of Equation \eqref{eq:net_concentration}, we obtain:
\[
\sup_{x \in \mathcal{X}} |\hat{f}_t(x) - f(x)| \le 2L\epsilon + \sqrt{\frac{d \log(1/\epsilon + 1) + \log(2/\delta_t)}{2t}}.
\]
To minimize this upper bound, we match the uncertainty level with the radius of the net by choosing $\epsilon = \frac{1}{L\sqrt{t}}$. Substituting this choice yields:
\[
\sup_{x \in \mathcal{X}} |\hat{f}_t(x) - f(x)| \le \frac{2}{\sqrt{t}} + \sqrt{\frac{d \log(L\sqrt{t} + 1) + \log(2/\delta_t)}{2t}} \le \frac{1}{4} \epsilon_t,
\]
where we set $\epsilon_t = 2L \sqrt{\frac{2d \log(t L) + \log(1/\delta_t)}{t}}$. This concludes the proof.
\end{proof}

\subsection{Proof of Theorem \ref{thm:full-info-instance}}
\label{app:proof-thm-expert}

We begin by restating the theorem. 

\InstanceDependentUniformlyExperts*

\begin{proof}
Let $\mathcal{E}$ be the uniform concentration event from Lemma \ref{lem:uniform_concentration}, which holds with probability at least $1-\delta$. On the complement event $\mathcal{E}^c$, we bound the regret crudely by $T \max \Delta \le T$. Since $\delta = T^{-3}$, this failure term contributes at most $1/T^2 \le o(1)$, which is negligible and can be accounted for by a constant term in the regret. We now conduct the proof conditioning on $\mathcal{E}$.

Recall that at each round $t \ge 2$, the algorithm chooses $x_t$ uniformly from the active set $\mathcal{A}_t$ defined using the information available up to round $t-1$:
\[
\mathcal{A}_t := \{ x \in \mathcal{X} \mid \hat{f}_{t-1}(x) \ge \max \hat{f}_{t-1} - \epsilon_{t-1} \}.
\]
At $t=1$, we set $\mathcal{A}_1 = \mathcal{X}$. The expected regret is bounded as:
\begin{align}\label{eq:regret_expert_base}
R_T &\le C_{ini} + \sum_{t=2}^T \mathbb{E}[f^\star - f(x_t) \mid \mathcal{F}_{t-1}] \notag \\
&\le C_{ini} + \sum_{t=2}^T \frac{1}{\text{vol}(\mathcal{A}_t)} \int_{\mathcal{A}_t \setminus \mathcal{X}^\star} \Delta(a) da.
\end{align}

Under the concentration event $\mathcal{E}$, the empirical means satisfy $\sup_{x} |\hat{f}_{t-1}(x) - f(x)| \le \frac{1}{4} \epsilon_{t-1}$ for all $t \ge 2$. This allows us to establish the two following properties:
For any $x \in \mathcal{X}_{\epsilon_{t-1}/2}$ (i.e., $\Delta(x) \le \epsilon_{t-1}/2$), we have:
    \[
    \hat{f}_{t-1}(x) \ge f(x) - \frac{1}{4}\epsilon_{t-1} \ge f^\star - \Delta(x) - \frac{1}{4}\epsilon_{t-1} \ge f^\star - \frac{3}{4}\epsilon_{t-1} \ge \max \hat{f}_{t-1} - \epsilon_{t-1}.
    \]
    To see why the last inequality holds, note that under the concentration event $\mathcal{E}$, we have $\hat{f}_{t-1}(y) \le f(y) + \frac{1}{4}\epsilon_{t-1} \le f^\star + \frac{1}{4}\epsilon_{t-1}$ for all $y \in \mathcal{X}$, which implies $\max \hat{f}_{t-1} \le f^\star + \frac{1}{4}\epsilon_{t-1}$ (or equivalently $f^\star \ge \max \hat{f}_{t-1} - \frac{1}{4}\epsilon_{t-1}$). This implies $\mathcal{X}_{\epsilon_{t-1}/2} \subseteq \mathcal{A}_t$, and thus $\text{vol}(\mathcal{A}_t) \ge \text{vol}(\mathcal{X}_{\epsilon_{t-1}/2})$.
Conversely, for any $x \in \mathcal{A}_t$, we have:
    \[
    f(x) \ge \hat{f}_{t-1}(x) - \frac{1}{4}\epsilon_{t-1} \ge \max \hat{f}_{t-1} - \frac{5}{4}\epsilon_{t-1} \ge \hat{f}_{t-1}(x^\star) - \frac{5}{4}\epsilon_{t-1} \ge f^\star - \frac{3}{2}\epsilon_{t-1},
    \]
    which implies $\Delta(x) \le 2\epsilon_{t-1}$. Thus, $\mathcal{A}_t \subseteq \mathcal{X}_{2\epsilon_{t-1}}$.

Substituting back into \eqref{eq:regret_expert_base}, we obtain:
\begin{equation}\label{eq:expert_deterministic_sum}
R_T \;\le\; 1 + \sum_{t=2}^T \frac{1}{\text{vol}(\mathcal{X}_{\epsilon_{t-1}/2})} \int_{\mathcal{X}_{2\epsilon_{t-1}} \setminus \mathcal{X}^\star} \Delta(a) da.
\end{equation}

To evaluate this sum, we partition the domain $\mathcal{X}_{2\epsilon_1} \setminus \mathcal{X}^\star$ into time-based regions $\mathcal{X}_{2\epsilon_{t'-1}} \setminus \mathcal{X}_{2\epsilon_{t'}}$ for $t' \ge 2$, and a residual region $\mathcal{X}_{2\epsilon_T} \setminus \mathcal{X}^\star$.

The sum of integrals in \eqref{eq:expert_deterministic_sum} can be rewritten by exchanging the order of summation:
\begin{align*}
\sum_{t=2}^T \frac{1}{\text{vol}(\mathcal{X}_{\epsilon_{t-1}/2})} \int_{\mathcal{X}_{2\epsilon_{t-1}} \setminus \mathcal{X}^\star} \Delta(a) da
&= \sum_{t=2}^T \frac{1}{\text{vol}(\mathcal{X}_{\epsilon_{t-1}/2})} \left( \sum_{t'=t}^T \int_{\mathcal{X}_{2\epsilon_{t'-1}} \setminus \mathcal{X}_{2\epsilon_{t'}}} \Delta(a) da + \int_{ \mathcal{X}_{2\epsilon_{T}} \setminus \mathcal{X}^\star} \Delta(a) da \right) \\
&= \sum_{t'=2}^{T} \int_{\mathcal{X}_{2\epsilon_{t'-1}} \setminus \mathcal{X}_{2\epsilon_{t'}}} \Delta(a) \left( \sum_{t=2}^{t'} \frac{1}{\text{vol}(\mathcal{X}_{\epsilon_{t-1}/2})} \right) da + \int_{ \mathcal{X}_{2\epsilon_{T}} \setminus \mathcal{X}^\star} \Delta(a) \left( \sum_{t=2}^T \frac{1}{\text{vol}(\mathcal{X}_{\epsilon_{t-1}/2})} \right) da.
\end{align*}

For any $a \in \mathcal{X}_{2\epsilon_{t'-1}} \setminus \mathcal{X}_{2\epsilon_{t'}}$, we know that $2\epsilon_{t'} < \Delta(a) \le 2\epsilon_{t'-1}$. Therefore, for any $t \le t'$, we have $\epsilon_{t-1} \ge \epsilon_{t'-1} \ge \Delta(a)/2$, which implies $\epsilon_{t-1}/2 \ge \Delta(a)/4$. By the monotonicity of the level sets, we have $\text{vol}(\mathcal{X}_{\epsilon_{t-1}/2}) \ge \text{vol}(\mathcal{X}_{\Delta(a)/4})$.

Moreover, from the definition of $\epsilon_{t'} = \tilde{\Theta}((t')^{-1/2})$, the condition $2\epsilon_{t'} < \Delta(a)$ implies that $t' \le C_0 \frac{\log T}{\Delta(a)^2}$ for some constant $C_0 > 0$. The inner sum over $t$ is therefore bounded by:
\[
\sum_{t=2}^{t'} \frac{1}{\text{vol}(\mathcal{X}_{\epsilon_{t-1}/2})} \;\le\; \frac{t'}{\text{vol}(\mathcal{X}_{\Delta(a)/4})} \;\le\; C_0 \frac{\log T}{\Delta(a)^2 \text{vol}(\mathcal{X}_{\Delta(a)/4})}.
\]

Multiplying by $\Delta(a)$ and integrating over each region $\mathcal{X}_{2\epsilon_{t'-1}} \setminus \mathcal{X}_{2\epsilon_{t'}}$, we obtain for the first part of the sum:
\[
\int_{\mathcal{X}_{2\epsilon_{t'-1}} \setminus \mathcal{X}_{2\epsilon_{t'}}} \Delta(a) \left( \sum_{t=2}^{t'} \frac{1}{\text{vol}(\mathcal{X}_{\epsilon_{t-1}/2})} \right) da \;\le\; C_0 \log T \int_{\mathcal{X}_{2\epsilon_{t'-1}} \setminus \mathcal{X}_{2\epsilon_{t'}}} \frac{1}{\Delta(a) \text{vol}(\mathcal{X}_{\Delta(a)/4})} da.
\]

Summing this bound over all $t'$ from $2$ to $T$ yields the integral bound over the resolved region:
\[
C_1 \log T \int_{\mathcal{X}_{2\epsilon_1} \setminus \mathcal{X}_{2\epsilon_T}} \frac{dx}{\Delta(x) \text{vol}(\mathcal{X}_{\Delta(x)/4})} \;\le\; C_1 \log T \int_{\mathcal{X} \setminus \mathcal{X}_{2\epsilon_T}} \frac{dx}{\Delta(x) \text{vol}(\mathcal{X}_{\Delta(x)/4})}.
\]
Note that the volumes $\text{vol}(\mathcal{X}_{\Delta(x)/4})$ and $\text{vol}(\mathcal{X}_{\Delta(x)/2})$ are equivalent up to constant factors under standard scaling assumptions on the level sets, and this constant is absorbed by $C_1$.

For the residual region $\mathcal{X}_{2\epsilon_T} \setminus \mathcal{X}^\star$, which corresponds to the second part of the sum, we can rewrite it back as:
\[
\int_{ \mathcal{X}_{2\epsilon_{T}} \setminus \mathcal{X}^\star} \Delta(a) \left( \sum_{t=2}^T \frac{1}{\text{vol}(\mathcal{X}_{\epsilon_{t-1}/2})} \right) da = \sum_{t=2}^T \frac{1}{\text{vol}(\mathcal{X}_{\epsilon_{t-1}/2})} \int_{\mathcal{X}_{2\epsilon_T} \setminus \mathcal{X}^\star} \Delta(a) da.
\]

Combining these two bounds yields the stated regret guarantee of Theorem~\ref{thm:full-info-instance}.
\end{proof}

\subsection{Proof of Corollary \ref{lem:full-info-zooming}}
\label{proof: corollary full info}

We now provide the formal proof for the zooming-dimension upper bound in the full-information setting.

\begin{proof}
Let $M = \max(d_z, d^\star)$. The instance-dependent regret bound of Theorem~\ref{thm:full-info-instance} can be decomposed into two main components, $R_T \le \mathcal{O}(\log T) (I_1 + I_2)$, where
\begin{align*}
    I_1 &= \int_{\mathcal{X} \setminus \mathcal{X}_{\epsilon_T}} \frac{1}{\Delta(x) \text{vol}(\mathcal{X}_{\Delta(x)/2})} dx, \\
    I_2 &= \sum_{t=1}^T \frac{1}{\text{vol}(\mathcal{X}_{\epsilon_t})} \int_{\mathcal{X}_{\epsilon_T}} \Delta(x) dx.
\end{align*}
Recall from the proof of Theorem~\ref{thm:full-info-instance} that $\epsilon_t = \tilde{\Theta}(t^{-1/2})$.

\paragraph{Geometric volume bounds.}
By the definition of the zooming dimension $d_z$, the maximum number of disjoint $\ell_\infty$-balls of radius $r$ that can be packed in the region $\mathcal{X}_{2r} \setminus \mathcal{X}_r$ is bounded by $c_z r^{-d_z}$ for some constant $c_z > 0$. Since the action space $\mathcal{X}$ is embedded in $\mathbb{R}^d$, the volume of each such $\ell_\infty$-ball of radius $r$ is $(2r)^d \propto r^d$. Because these balls are mutually disjoint, the volume of their union is simply the sum of their individual volumes. Consequently, we can upper bound the volume of the annulus by the number of balls multiplied by their volume:
\begin{equation}
    \text{vol}(\mathcal{X}_{2r} \setminus \mathcal{X}_r) \le c_z r^{-d_z} \cdot (2r)^d = C_z r^{d - d_z},
\end{equation}
for some constant $C_z > 0$. Applying the exact same reasoning to the near-optimal dimension $d^\star$, we obtain an analogous bound for the near-optimal region: $\text{vol}(\mathcal{X}_r) \le C_\star r^{d - d^\star}$. By summing the volumes of dyadic annuli geometrically, we also naturally have $\text{vol}(\mathcal{X}_r) \le C_z' r^{d - d_z}$.

We can similarly derive a volume lower bound. Because $d_z$ is the \emph{smallest} exponent characterizing the packing number of the annuli, and $d^\star$ is the smallest exponent characterizing the near-optimal region, the packing number of $\mathcal{X}_r$ requires at least $\Omega(r^{-M})$ balls of radius $r$, where $M = \max(d_z, d^\star)$. This directly implies a lower bound on the volume of $\mathcal{X}_r$:
\begin{equation}
    \text{vol}(\mathcal{X}_r) \ge c_1 r^{d - M},
\end{equation}
for some universal constant $c_1 > 0$ independent of $T$.

\paragraph{Bounding the first term (Peeling argument).}
We decompose the domain of integration $\mathcal{X} \setminus \mathcal{X}_{\epsilon_T}$ into disjoint dyadic annuli. Let $J = \lceil \log_2(1/\epsilon_T) \rceil$. We define the annuli as $A_j = \mathcal{X}_{2^{-j+1}} \setminus \mathcal{X}_{2^{-j}}$ for $j = 1, \dots, J$. 
For any $x \in A_j$, we have $\Delta(x) \in (2^{-j}, 2^{-j+1}]$. In particular, $\Delta(x) > 2^{-j}$, which allows us to lower bound the volume of the inner optimal set using our established lower bound:
\[
    \text{vol}(\mathcal{X}_{\Delta(x)/2}) \ge \text{vol}(\mathcal{X}_{2^{-(j+1)}}) \ge c_1 2^{-(j+1)(d-M)}.
\]
Substituting this, along with the volume upper bound $\text{vol}(A_j) \le C_z 2^{-j(d-d_z)}$, into the integral over $A_j$, we obtain:
\begin{align*}
    \int_{A_j} \frac{1}{\Delta(x) \text{vol}(\mathcal{X}_{\Delta(x)/2})} dx 
    &\le \frac{\text{vol}(A_j)}{2^{-j} \cdot c_1 2^{-(j+1)(d-M)}} \\
    &\le \frac{C_z 2^{-j(d-d_z)}}{c_1 2^{-M - j(1+d-M)}} \\
    &= C' 2^{j(1 + d_z - M)}.
\end{align*}
Summing this bound over all annuli $j=1, \dots, J$, we have:
\begin{equation}
    I_1 \le C' \sum_{j=1}^J 2^{j(1 + d_z - M)} \le C'' 2^{J \max(0, 1 + d_z - M)}.
\end{equation}
Since $2^J \approx \epsilon_T^{-1} = \tilde{\mathcal{O}}(T^{1/2})$, we have $I_1 = \tilde{\mathcal{O}}\left(T^{\frac{1 + d_z - M}{2}}\right)$. Note that by Lemma~\ref{lemma : relating zooming and optimal dimension}, we naturally have $M \le d_z + 1$, meaning $1 + d_z - M \ge 0$, and thus the exponent is non-negative.

\paragraph{Bounding the residual term.}
For the second term $I_2$, we upper bound the inner integral by the maximum suboptimality gap in $\mathcal{X}_{\epsilon_T}$ multiplied by its volume:
\begin{equation}
    \int_{\mathcal{X}_{\epsilon_T}} \Delta(x) dx \le \epsilon_T \text{vol}(\mathcal{X}_{\epsilon_T}) \le C_z' \epsilon_T^{1 + d - d_z}.
\end{equation}
For the sum over time steps, we use the volume lower bound $\text{vol}(\mathcal{X}_{\epsilon_t}) \ge c_1 \epsilon_t^{d-M}$. Plugging these bounds into $I_2$ yields:
\begin{align*}
    I_2 &\le C_z' \epsilon_T^{1 + d - d_z} \sum_{t=1}^T \frac{1}{c_1 \epsilon_t^{d-M}} \\
    &= \mathcal{O}\left( \epsilon_T^{1 + d - d_z} \right) \sum_{t=1}^T \mathcal{O}\left( \epsilon_t^{-(d-M)} \right) \\
    &= \tilde{\mathcal{O}}\left( T^{-\frac{1 + d - d_z}{2}} \right) \sum_{t=1}^T \tilde{\mathcal{O}}\left( t^{\frac{d-M}{2}} \right) \\
    &= \tilde{\mathcal{O}}\left( T^{-\frac{1 + d - d_z}{2}} \cdot T^{1 + \frac{d-M}{2}} \right) \\
    &= \tilde{\mathcal{O}}\left( T^{\frac{2 - 1 - d + d_z + d - M}{2}} \right) \\
    &= \tilde{\mathcal{O}}\left( T^{\frac{1 + d_z - M}{2}} \right).
\end{align*}

\paragraph{Conclusion.}
Combining the bounds for $I_1$ and $I_2$, and recalling that $M = \max(d_z, d^\star)$, we conclude that the total regret is bounded by:
\[
    R_T \le \tilde{\mathcal{O}}\left( T^{\frac{1 + d_z - \max(d_z, d^\star)}{2}} \right),
\]
which rigorously completes the proof.
\end{proof}

\section{Relaxing the Lipschitz assumption}\label{sec : relaxed}

As we mentioned in the introduction, there is a growing literature which studies the ability of agents to interact with markets (or conversely), and models this problem as a continuum-armed online learning with partial feedback problem. Many of these problems exhibit local regularities which are weaker than the global Lipschitz condition we analyze above. Motivated by dynamic pricing and auction settings \citep{bu2022context,branzei2023learning}, we discuss below how the analysis and guarantees obtained under the global Lipschitz condition (Assumption~\ref{assump:Lipschitz}) extend to weaker regularity assumptions. Throughout, we keep the same action space $\mathcal{X}\subset[0,1]^d$ and metric $d_\infty(x,y)=\|x-y\|_\infty$. As elsewhere in the paper, we assume that $f$ attains its maximum on $\mathcal{X}$ so that $\mathcal{X}^\star\neq\emptyset$.

\subsection{A relaxation requiring a change: one-sided Lipschitzness}\label{app:one-sided-lipschitz}

We now consider a different relaxation in which regularity is required only along a \emph{forward} direction. In one dimension, this corresponds to a right-Lipschitz condition: the increase $f(y)-f(x)$ is controlled when $y\ge x$, while no constraint is imposed when moving to the left. In $\mathbb{R}^d$, we express the forward direction via the coordinate-wise partial order: for $x,y\in\mathbb{R}^d$ we write $x\preceq y$ if $x_j\le y_j$ for all $j\in[d]$.

\begin{assumption}[One-sided Lipschitzness (forward pairs only)]
\label{assump : Lipschitz one sided}
There exists $L>0$ such that for all $x,y\in\mathcal{X}$ with $x\preceq y$,
\[
f(y)-f(x) \;\le\; L\|y-x\|_\infty.
\]
\end{assumption}

\begin{remark}
Assumption~\ref{assump : Lipschitz one sided} only constrains increases of $f$ along the forward orthant; it imposes no condition on pairs $(x,y)$ that are not comparable under $\preceq$. 
\end{remark}

We focus below on showing how both our algorithm PACO (\autoref{algorithm: high level}) and the analysis we conducted can be adapted to this relaxed setting.
As before, our analysis relies on the quality of our discretization oracle. Yet, since we consider functions satisfying only \autoref{assump : Lipschitz one sided}, some of the ability to generalize the information from pointwise-estimation of $f$ neighbouring points is lost.
In fact we are only able to use estimates for points in the forward direction of the estimated point. 

This observation motivates the following change in the requirement to our Discretization oracle: instead of requiring a net which covers the entire space, we require a net which covers the queried set along the forward direction. Formally, : 

\subsection{Discretization oracle}

Define the forward closed balls $B_\infty^+(x,r)$ as the set of points "\emph{forward}" of $x$ at distance at most $r$. Formally, $B_\infty^+(x,r):=\{ y \in \mathcal{X}, \|x-y\|_\infty \leq r \text{ and } y \succeq x \}  $. 
With this notation, we can now define the property required for our discretization oracle: it needs to return a "forward covering" net.

Formally, we assume that given a set $\mathcal{Y}\subseteq[0,1]^d$ and a radius $r\in(0,1]$, the forward discretization oracle returns a finite set of points $\mathbf{O}_f(\mathcal{Y},r)=\{y_1,\ldots,y_n\}\subseteq \mathcal{Y}$ such that $\mathcal{Y}$ is covered by the above defined forward balls of radius $r$:
\[
\mathcal{Y}\subseteq \bigcup_{i=1}^n B_\infty^+(y_i,r).
\]

\begin{remark}
We introduced the above notations and defined forward balls to make the parallel with the approach in \autoref{sec:algo and inst bounds} evident, but the forward balls are nothing more than $\ell_\infty$ closed balls of radius $r/2$ centered in $x+\frac{r}{2}\mathbf{1}$.  
\end{remark}

Using the above remarks, most of the analysis performed in \autoref{sec:algo and inst bounds} remains valid, provided that we replace the standard discretization oracle by the one defined above.

For the sake of completeness, we describe here the modified version of our algorithm which accommodates the one-sided Lipschitz assumption instead of the global one (\autoref{assump : Lipschitz one sided}).

\begin{algorithm}[htbp]
\caption{PACO (one-sided, forward cover)}\label{algorithm: high level one sided}
\begin{algorithmic}[1]
\State \textbf{Input:} time horizon $T$, action set $\mathcal{X}$, confidence level $\delta\in(0,1)$, Lipschitz constant $L$
\State \textbf{Initialize:} $t\gets 1$, $k\gets 1$, active region $\mathcal{A}_1 \gets \mathcal{X}$
\While{$t \le T$}
    \State $r_k \gets 2^{-k}$
    \State $\delta_k \gets \frac{6\delta}{\pi^2 k^2}$
    \State Discretize: $S_k \gets \mathbf{O}_f(\mathcal{A}_k, \nicefrac{r_k}{L})$
    \State Run Algorithm~\ref{algo : successive elimination one sided} on $S_k$ with parameters $(r_k,\delta_k, t, T)$, obtain survivors $\widehat{S}_k$ and updated $t$
    \State Update active region: $\mathcal{A}_{k+1} \gets \mathcal{A}_k \cap \bigcup_{a\in\widehat{S}_k} B_\infty^+(a,\nicefrac{r_k}{L}) $
    \State \Comment{$t$ is updated inside Algorithm~\ref{algo : successive elimination one sided}}
    \State $k\gets k+1$
\EndWhile
\end{algorithmic}
\end{algorithm}

\begin{algorithm}[htbp]
\caption{Successive elimination (one-sided setting)}\label{algo : successive elimination one sided}
\begin{algorithmic}[1]
\State \textbf{Input:} finite set of arms $S_k$, accuracy $r_k\in(0,1]$, confidence budget $\delta_k\in(0,1)$, time $t$, horizon $T$
\State \textbf{Output:} surviving set $\widehat{S}_k$, updated $t$
\State \textbf{Initialize:} $\ell\gets 1$, $A_\ell \gets S_k$
\While{$u_{k,\ell} > r_k/4$ \textbf{and} $t \le T$}
    \State Pull each $a\in A_\ell$ once (abort if $t > T$); update empirical means $\widehat{\mu}_\ell(a)$ and $t \gets t + 1$
    \State \Comment{For any $a\in A_\ell$, it has been pulled exactly $\ell$ times so far.}
    \State Set confidence radius:
    \State $u_{k,\ell} \gets \sqrt{\frac{2}{\ell}\log\!\Big(\frac{\pi^2\,\ell^2\,|S_k|}{6\,\delta_k}\Big)} $
    \State Update the active set:
    \State $A_{\ell+1} \gets \left\{a\in A_\ell:\ \widehat{\mu}_\ell(a)+u_{k,\ell} \ge \max_{j\in A_\ell}\widehat{\mu}_\ell(j)-u_{k,\ell} -r_k\right\}$
    \State $\ell\gets \ell+1$
\EndWhile
\State \Return $\widehat{S}_k \gets A_\ell$
\end{algorithmic}
\end{algorithm}

\paragraph{Modified active-set update.}
In the one-sided setting, the successive-elimination subroutine remains unchanged, but the discretization and active-set update must be made directional. At phase $k$, with $r_k=2^{-k}$, we take a finite set
\[
S_k\subseteq\mathcal{A}_k
\qquad\text{such that}\qquad
\mathcal{A}_k\subseteq\bigcup_{a\in S_k} (B_\infty^+(a,r_k/L) \cap \mathcal{A}_k),
\]
and after elimination we define
\[
\mathcal{A}_{k+1}\ :=\ \mathcal{A}_k \cap \bigcup_{a\in\widehat S_k} B_\infty^+(a,r_k/L).
\]
This is the natural analogue of PACO for \autoref{assump : Lipschitz one sided}: if $x\in B_\infty^+(a,r_k/L) \cap \mathcal{A}_k$, then $a\preceq x$ and $\|x-a\|_\infty\le r_k/L$, hence
\[
f(x)\ \le\ f(a)+r_k.
\]
Therefore, once the value of $a$ is known up to accuracy $r_k$, the whole forward box anchored at $a$ can be treated conservatively.

\begin{proposition}[Optimal points remain active in the one-sided setting]
\label{prop:onesided-optimal-stays-active}
Consider the directional variant of PACO described above and condition on the global good event $\mathcal{E}$ from Definition~\ref{def:global-good}. Then for every completed phase $k\le k_T$ and every $x^\star\in\mathcal{X}^\star$, there exists a point $a\in \widehat S_k$ such that
\[
x^\star\in B_\infty^+(a,r_k/L) \cap \mathcal{A}_k
\qquad\text{and}\qquad
\Delta(a)\le r_k.
\]
In particular, $\mathcal{X}^\star\subseteq\mathcal{A}_{k+1}$ for every completed phase $k$.
\end{proposition}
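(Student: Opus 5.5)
We argue by induction on the phase index $k$, mirroring the proofs of Lemma~\ref{lem:opt-stays-active-complete} and Lemma~\ref{lem:nearopt-never-elim}, with forward balls in place of symmetric ones. The claim to propagate is $\mathcal{X}^\star\subseteq\mathcal{A}_k$. For $k=1$ it holds trivially, since $\mathcal{A}_1=\mathcal{X}$.

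For the inductive step, fix a completed phase $k$ with $\mathcal{X}^\star\subseteq\mathcal{A}_k$, and fix $x^\star\in\mathcal{X}^\star$.

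\textbf{Step 1 (forward covering).} The oracle guarantees $\mathcal{A}_k\subseteq\bigcup_{a\in S_k}(B_\infty^+(a,r_k/L)\cap\mathcal{A}_k)$. Hence there is $a\in S_k$ with $x^\star\in B_\infty^+(a,r_k/L)\cap\mathcal{A}_k$. By definition of the forward ball, this means $a\preceq x^\star$ and $\|x^\star-a\|_\infty\le r_k/L$.

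\textbf{Step 2 (gap bound via the forward pair).} Assumption~\ref{assump : Lipschitz one sided} applies to the pair $(a,x^\star)$. It gives
\[
f^\star-f(a)=f(x^\star)-f(a)\le L\|x^\star-a\|_\infty\le r_k ,
\]
so $\Delta(a)\le r_k$. The direction matters here: the one-sided condition controls exactly the increase from the anchor $a$ to a forward point. This is why the oracle must cover by forward balls anchored at points that are backward of the covered point.

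\textbf{Step 3 (survival).} The elimination subroutine (Algorithm~\ref{algo : successive elimination one sided}) is identical to the one analysed before. The proof of Lemma~\ref{lem:nearopt-never-elim} uses only two facts: the concentration event $\mathcal{E}$, and the global inequality $f(j)\le f^\star$ for all arms $j$. Neither uses Lipschitzness. That lemma therefore applies verbatim: any $a\in S_k$ with $\Delta(a)\le r_k$ is never eliminated, so $a\in\widehat S_k$.

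\textbf{Step 4 (conclusion of the step).} Since $x^\star\in\mathcal{A}_k$ and $x^\star\in B_\infty^+(a,r_k/L)$ with $a\in\widehat S_k$, we get
\[
x^\star\in\mathcal{A}_k\cap\bigcup_{a'\in\widehat S_k}B_\infty^+(a',r_k/L)=\mathcal{A}_{k+1}.
\]
As $x^\star$ was arbitrary, $\mathcal{X}^\star\subseteq\mathcal{A}_{k+1}$, which closes the induction and gives both displayed properties.

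The main point requiring care is Step 1. The inductive hypothesis $x^\star\in\mathcal{A}_k$ is what permits invoking the oracle's covering property for $x^\star$. Moreover, the intersection with $\mathcal{A}_k$ in the update rule is harmless precisely because $x^\star$ already lies in $\mathcal{A}_k$.

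A second subtlety is to check that nothing in Lemma~\ref{lem:nearopt-never-elim} secretly used two-sided regularity. I expect this to go through, since that proof compares empirical means only to $f(a)$ and to $f^\star$.
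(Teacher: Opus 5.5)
Your proof is correct and matches the paper's argument step for step. The forward covering gives an anchor $a\preceq x^\star$ with $\Delta(a)\le r_k$ via the one-sided condition; Lemma~\ref{lem:nearopt-never-elim}, which indeed uses no Lipschitzness, keeps $a$ in $\widehat S_k$; and the update rule then places $x^\star$ in $\mathcal{A}_{k+1}$. The only difference is that you spell out the induction on $k$, which the paper leaves implicit when it starts from $x^\star\in\mathcal{X}^\star\cap\mathcal{A}_k$.
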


\begin{proof}
We argue exactly as in Lemma~\ref{lem:opt-stays-active-complete}, replacing ordinary coverings by forward coverings. Let $x^\star\in\mathcal{X}^\star\cap\mathcal{A}_k$. By construction of $S_k$, there exists $a\in S_k$ such that $x^\star\in B_\infty^+(a,r_k/L) \cap \mathcal{A}_k$. Hence $a\preceq x^\star$ and $\|x^\star-a\|_\infty\le r_k/L$. \autoref{assump : Lipschitz one sided} then gives
\[
f^\star-f(a)\ =\ f(x^\star)-f(a)\ \le\ L\|x^\star-a\|_\infty\ \le\ r_k,
\]
so $\Delta(a)\le r_k$. By Lemma~\ref{lem:nearopt-never-elim}, the point $a$ is never eliminated and therefore belongs to $\widehat S_k$. Since $x^\star\in B_\infty^+(a,r_k/L) \cap \mathcal{A}_k$, it also belongs to $\mathcal{A}_{k+1}$. This proves the claim.
\end{proof}

\begin{proposition}[Per-arm pull bound remains valid]
\label{prop:onesided-pull-bound}
Under the same assumptions, for every completed phase $k\le k_T$ and every $a\in S_k$,
\[
N_k(a)
\ \le\
c_0\,\frac{\log\!\big(\frac{|S_k|}{\delta_k}\big)}{\max\{\Delta(a),r_k\}^2},
\]
where $c_0>0$ is the constant from Lemma~\ref{lem:pull-bound-complete}.
\end{proposition}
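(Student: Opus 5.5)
The proof repeats the argument of Lemma~\ref{lem:pull-bound-complete}. We then check that the only point where global Lipschitzness was used there, namely the existence of a never-eliminated reference arm $b\in S_k$ with $\Delta(b)\le r_k$, is supplied in the one-sided setting by Proposition~\ref{prop:onesided-optimal-stays-active}. The successive-elimination subroutine (Algorithm~\ref{algo : successive elimination one sided}) is identical to Algorithm~\ref{algo : successive elimination}, so the confidence radii $u_{k,\ell}$, the stopping rule and the elimination rule are unchanged. Moreover, the event $\mathcal{E}$ of Definition~\ref{def:global-good} involves only pointwise concentration of arms in $S_k$ and makes no regularity assumption on $f$.

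\emph{Step 1 (uniform bound).} Every arm is pulled at most once per round. The phase stops at the first $\ell_k$ with $u_{k,\ell_k}\le r_k/4$. Solving this inequality exactly as before gives $N_k(a)\le \ell_k\le c\,r_k^{-2}\log(|S_k|/\delta_k)$, which proves the claim whenever $\Delta(a)\le 6r_k$, up to a constant factor.

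\emph{Step 2 (reference arm).} Fix $x^\star\in\mathcal{X}^\star$. We must first ensure $x^\star\in\mathcal{A}_k$. This holds by induction on phases using the ``in particular'' part of Proposition~\ref{prop:onesided-optimal-stays-active}, with base case $\mathcal{A}_1=\mathcal{X}$. Proposition~\ref{prop:onesided-optimal-stays-active} then provides $b\in S_k$ with $b\preceq x^\star$ and $\Delta(b)\le r_k$, using Assumption~\ref{assump : Lipschitz one sided} along the forward pair $(b,x^\star)$. Lemma~\ref{lem:nearopt-never-elim} uses only $\mathcal{E}$ and $\Delta(b)\le r_k$, so it applies verbatim and shows that $b\in A_\ell$ for all $\ell$.

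\emph{Step 3 (large gaps).} Assume $\Delta(a)>6r_k$, and let $\tau$ be the first round with $u_{k,\tau}<(\Delta(a)-2r_k)/4$. On $\mathcal{E}$ the same chain of inequalities as before applies:
\[
\widehat\mu_{k,\tau}(a)+u_{k,\tau}\le f(a)+2u_{k,\tau}<f(b)-2u_{k,\tau}-r_k\le \widehat\mu_{k,\tau}(b)-u_{k,\tau}-r_k .
\]
Since $b\in A_\tau$, this triggers elimination of $a$, so $N_k(a)\le\tau$. Finally, $\Delta(a)-2r_k\ge \tfrac23\Delta(a)$ gives $\tau\le c'\log(|S_k|/\delta_k)/\Delta(a)^2$.

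The only genuinely new point, and the one to check carefully, is Step 2. The original proof derived $\Delta(b)\le r_k$ from a symmetric covering together with symmetric Lipschitzness, and this derivation is no longer available. The forward-cover property of $\mathbf{O}_f$, combined with the invariant $\mathcal{X}^\star\subseteq\mathcal{A}_k$, replaces it. No other step uses any regularity of $f$.
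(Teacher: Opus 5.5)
Your proposal is correct and follows the paper's argument. The paper likewise notes that the proof of Lemma~\ref{lem:pull-bound-complete} only needs a never-eliminated benchmark arm $b\in S_k$ with $\Delta(b)\le r_k$, which Proposition~\ref{prop:onesided-optimal-stays-active} supplies via the forward cover and one-sided Lipschitzness, and then concludes verbatim. Your version is more explicit: it spells out the induction giving $\mathcal{X}^\star\subseteq\mathcal{A}_k$, and it uses the threshold $6r_k$ consistently, whereas the original proof switches between $3r_k$ and $6r_k$.
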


\begin{proof}
The proof of Lemma~\ref{lem:pull-bound-complete} only uses the existence, in each phase, of at least one benchmark point $b\in S_k$ satisfying $\Delta(b)\le r_k$ and surviving until the end of the phase. Proposition~\ref{prop:onesided-optimal-stays-active} provides exactly such a point in the one-sided setting. Once this benchmark point is available, the elimination argument for any arm $a\in S_k$ is unchanged. The bound therefore follows verbatim.
\end{proof}

\paragraph{What no longer holds}

While the above described results continue to hold true under \autoref{assump : Lipschitz one sided}, the same is not true for \autoref{lem:gap-next-active-complete}. Indeed as there are no longer any constraints on how quickly the function can decrease, any new points in $S_{k+1}$ which did not belong to $S_k$ may be as suboptimal as one wants. 

One can nevertheless recover the results of \autoref{lemma:first regret bounds}. Indeed, \autoref{lem:gap-next-active-complete} is only used to ensure summing from $k-3$ is sufficient in \eqref{eq: temporary instance delta 2}. Yet, noticing that for all index $k'\leq k$ in \eqref{eq:temporary instance delta 3}, we can replace the upper bound $\frac{\left |S_k^{k'}\right |}{r_{k}}$ by $\frac{\left |S_k^{k'}\right |}{r_{k'}}$ allows to safeguard our analysis. 
Indeed, in the (slightly modified version of \eqref{eq:temporary instance delta 3}), each term $\frac{\mathcal{N}(\mathcal{X}_{r_{k'}}\setminus \mathcal{X}_{r_{k'+1}},r_k)}{r_{k'}}$ is dominated by $\frac{\mathcal{N}(\mathcal{X}_{r_{k'}}\setminus \mathcal{X}_{r_{k'+1}},r_k)}{r_{k}}$ by monotonicity of the covering number w.r.t the radius. 

The rest of the analysis is then identical (up to changed universal constants).

\paragraph{What must be changed in the proof of Theorem~\ref{theorem:instance dependant integral regret bounds}.}
The proof of Theorem~\ref{theorem:instance dependant integral regret bounds} does not carry over verbatim, but the obstruction is localized. The only genuinely symmetric step in Appendix~\ref{appendix : further bandits results} is the series-to-integral argument of Lemma~\ref{lemma : sum integral eq}, where each packed point in a gap annulus is thickened by an $\ell_\infty$ ball and global Lipschitzness is used to keep this whole ball inside a slightly enlarged near-optimal region. Under \autoref{assump : Lipschitz one sided}, the correct replacement is to work with \emph{backward boxes}. For $x\in\mathbb{R}^d$ and $r>0$, define
\[
B_\infty^{-}(x,r)\ :=\ \{y\in\mathbb{R}^d:\ \|x-y\|_\infty \le r \text{ and } y \preceq x\}.
\]
If $y\in B_\infty^{-}(x,r)$, then $y\preceq x$ and $\|x-y\|_\infty\le r$, so \autoref{assump : Lipschitz one sided} yields
\[
f(x)-f(y)\ \le\ Lr,
\qquad\text{hence}\qquad
\Delta(y)\ \le\ \Delta(x)+Lr.
\]
Thus one-sided Lipschitzness still propagates \emph{small gap values}, but only in the backward direction.

To recover the integral proof, the symmetric interior condition of Assumption~\ref{assump: nice opti sets} should therefore be replaced by a directional variant, for instance:
\begin{assumption}[Backward thickness near near-optimal points]
\label{assump:onesided-thickness}
There exist $l\in\mathbb{N}$ and $\gamma\in(0,1]$ such that for all $k\ge l$ and all $x\in\mathcal{X}_{2^{-k}}\setminus\mathcal{X}^\star$,
\[
\mathrm{vol}\Bigl(B_\infty^{-}\bigl(x,2^{-(k+3)}/L\bigr) \cap (\mathcal{X}\setminus\mathcal{X}^\star)\Bigr)
\ \ge\
\gamma\Bigl(2^{-(k+3)}/L\Bigr)^d.
\]
\end{assumption}

Under Assumption~\ref{assump:onesided-thickness}, the proof of Lemma~\ref{lemma : sum integral eq} can be repeated with only one substantive modification. Instead of associating a symmetric ball to each point of a packing of
\[
\mathcal{X}_{2^{-k}}\setminus\mathcal{X}_{2^{-(k+1)}},
\]
one associates a backward box of sidelength of order $2^{-k}/L$. For $x_i$ in such an annulus and $y\in B_\infty^{-}(x_i,2^{-(k+3)}/L)$, one has
\[
\Delta(y)\ \le\ \Delta(x_i)+2^{-(k+3)}\ \le\ C\,2^{-k}
\]
for a universal constant $C$. Hence the truncated integrand
\[
x\mapsto \frac{1}{\max\{\Delta(x),2^{-k_T}\}^{d+1}}
\]
is bounded below by a constant multiple of $2^{(d+1)k}$ on each such backward box whenever $k\le k_T$. Summing over a packing therefore yields the one-sided analogue of the series-to-integral bound,
\[
\int_{\mathcal{X}\setminus\mathcal{X}^\star}
\frac{dx}{\max\{\Delta(x),2^{-k_T}\}^{d+1}}
\ \gtrsim\
\sum_{k=1}^{k_T} 2^k\,
\mathcal{N}\Bigl(\mathcal{X}_{2^{-k}}\setminus\mathcal{X}_{2^{-(k+1)}},c\,2^{-k}/L\Bigr),
\]
for a universal constant $c>0$.

The near-optimal term is handled in the same way: one applies the same backward-box argument to points of $\mathcal{X}_{2^{-k_T}}\setminus\mathcal{X}^\star$, which suffices because the theorem involves the truncated integrand rather than the singular integrand $\Delta(x)^{-(d+1)}$. Consequently, once the packing-type regret bound of Lemma~\ref{lemma:first regret bounds} has been established in the one-sided setting, the proof of Theorem~\ref{theorem:instance dependant integral regret bounds} goes through with the same truncated-integral form, up to different universal constants and with Assumption~\ref{assump: nice opti sets} replaced by Assumption~\ref{assump:onesided-thickness}.

\subsection{A logarithmic guarantee in the positive-gap regime}

The one-sided setting nonetheless admits a simple and useful positive result when the instance has a strictly positive global suboptimality gap.

\begin{definition}[Global suboptimality gap]
\label{def:global-gap-onesided}
Define
\[
\Delta_f\ :=\ \inf_{x\in\mathcal{X}\setminus\mathcal{X}^\star} \Delta(x)\in[0,\infty],
\]
with the convention $\Delta_f=\infty$ when $\mathcal{X}=\mathcal{X}^\star$.
\end{definition}

When $\Delta_f>0$, a single forward discretization at the right scale already contains an optimal representative. The problem therefore reduces to a finite-armed stochastic bandit.

\begin{theorem}[Positive-gap reduction to a finite-armed problem]
\label{thm:onesided-positive-gap}
Assume \autoref{assump : Lipschitz one sided} and suppose that $\Delta_f>0$. Let
\[
r_\Delta\ :=\ \frac{\Delta_f}{4L},
\]
and let $S\subseteq\mathcal{X}$ be any forward cover of $\mathcal{X}$ at scale $r_\Delta$, with
\[
|S|\ =\ \mathcal{M}_{+}(\mathcal{X},r_\Delta).
\]
Then $S\cap\mathcal{X}^\star\neq\emptyset$. Consequently, if one runs any standard stochastic finite-armed bandit algorithm on the finite action set $S$, there exists a universal constant $C>0$ such that
\[
\Expectation[R_T]
\ \le\
C\sum_{a\in S\setminus\mathcal{X}^\star}\frac{\log T}{\Delta(a)}
\ \le\
C\,\mathcal{M}_{+}\!\left(\mathcal{X},\frac{\Delta_f}{4L}\right)\frac{\log T}{\Delta_f}.
\]
\end{theorem}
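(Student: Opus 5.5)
The argument has two parts: first show that the forward cover $S$ must contain an optimal point, and then treat the problem as a finite-armed bandit on $S$.

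\textbf{Step 1: $S\cap\mathcal{X}^\star\neq\emptyset$.} Fix any $x^\star\in\mathcal{X}^\star$, which exists since $f$ attains its maximum. Because $S$ is a forward cover of $\mathcal{X}$ at scale $r_\Delta$, there is $a\in S$ with $x^\star\in B_\infty^+(a,r_\Delta)$. This means $a\preceq x^\star$ and $\|x^\star-a\|_\infty\le r_\Delta$. Applying \autoref{assump : Lipschitz one sided} to the forward pair $(a,x^\star)$ gives
\[
\Delta(a)=f(x^\star)-f(a)\le L\,r_\Delta=\Delta_f/4<\Delta_f .
\]
By Definition~\ref{def:global-gap-onesided}, every point of $\mathcal{X}\setminus\mathcal{X}^\star$ has gap at least $\Delta_f>0$. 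Hence $a\in\mathcal{X}^\star$.

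This step uses the one-sided condition only in the admissible direction: the anchor $a$ sits backward of $x^\star$, which is exactly why forward balls are the right covering objects. The factor $4$ in $r_\Delta$ is just slack; any constant strictly greater than $1$ would do. If $\mathcal{X}=\mathcal{X}^\star$ the claim is trivial.

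\textbf{Step 2: reduction to a finite-armed bandit.} Restricted to $S$, the problem is a standard $|S|$-armed stochastic bandit with $1$-sub-Gaussian noise, and its best arm has mean $f^\star$ by Step 1. Regret measured against $\max_{a\in S}f(a)$ therefore coincides with regret against $f^\star$, so the gaps of the finite instance are exactly $\Delta(a)$ for $a\in S$.

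I will invoke the classical gap-dependent bound for UCB, or for successive elimination (\citet{lattimore2020bandit}):
\[
\mathbb{E}[R_T]\le\sum_{a:\Delta(a)>0}\Bigl(C\frac{\log T}{\Delta(a)}+C'\Delta(a)\Bigr).
\]
Since $\Delta(a)\le L$ on $[0,1]^d$, the additive term is at most $C'L|S|$, a constant independent of $T$ that is absorbed for $T\ge 3$. This yields the first inequality of the theorem.

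For the second inequality, every $a\in S\setminus\mathcal{X}^\star$ satisfies $\Delta(a)\ge\Delta_f$. Together with $|S\setminus\mathcal{X}^\star|\le|S|=\mathcal{M}_+(\mathcal{X},r_\Delta)$, this gives
\[
C\sum_{a\in S\setminus\mathcal{X}^\star}\frac{\log T}{\Delta(a)}\le C\,\mathcal{M}_+\!\left(\mathcal{X},\frac{\Delta_f}{4L}\right)\frac{\log T}{\Delta_f}.
\]

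\textbf{Expected obstacle.} Mathematically there is essentially none. The only care needed is bookkeeping: making sure the finite-armed guarantee is stated with respect to $f^\star$ rather than the best arm of $S$, which Step 1 settles, and handling the lower-order additive constants. Finally, $\mathcal{M}_+$ should be read as the minimal size of a forward cover. It is finite, since forward balls are $\ell_\infty$ balls of radius $r/2$ centered at $x+\frac r2\mathbf 1$, so $\mathcal{M}_+(\mathcal{X},r)\le\lceil 1/r\rceil^d$. A minimizer exists because the size is integer-valued.
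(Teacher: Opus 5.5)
Your Step 1 is the paper's argument: an anchor $a\preceq x^\star$ from the forward cover, then one-sided Lipschitzness gives $\Delta(a)\le\Delta_f/4<\Delta_f$, hence $a\in\mathcal{X}^\star$. Your Step 2 is the same appeal to the gap-dependent UCB bound, followed by $\Delta(a)\ge\Delta_f$. That core is fine. The problems are in the extra bookkeeping, where two of your claims are false.

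\textbf{The additive term.} The bound $\Delta(a)\le L$ comes from two-sided Lipschitzness. The one-sided assumption only limits \emph{increases} of $f$ along $\preceq$, so $f$ may drop by an arbitrary amount.

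Take $d=1$, $\mathcal{X}=[0,1]$, $L=1$, and $f=0$ on $[0,\tfrac14]$, $f=-\tfrac12$ on $(\tfrac14,\tfrac12]$, $f=-M$ on $(\tfrac12,1]$. This $f$ is non-increasing, so the assumption holds. It has $\Delta_f=\tfrac12$, $r_\Delta=\tfrac18$ and $\mathcal{M}_+=8$. Any forward cover needs an anchor in $[\tfrac78,1]$ to cover the point $1$, and that arm has gap $M$. UCB's initial pull of it already costs $M$, while the theorem's right-hand side is at most $C\,\mathcal{M}_+\log T/\Delta_f=16C\log T$. The bound therefore fails once $M\gg\log T$.

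So under the stated hypotheses, the term $C'\sum_{a\in S\setminus\mathcal{X}^\star}\Delta(a)$ of the finite-armed bound cannot be absorbed with a universal constant. You must either keep it explicitly, or add an assumption such as mean rewards with range at most $B$. In that case absorbing it for $T\ge 3$ needs $\Delta(a)^2\lesssim\log T$, which costs a factor of order $\max(1,B^2)$. Even in your two-sided reasoning, this would make $C$ depend on $L$, not be universal. The paper's proof silently drops this term, so you are not deviating from it, but your justification does not actually close the gap.

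\textbf{Finiteness of $\mathcal{M}_+$.} Your grid bound $\lceil 1/r\rceil^d$ uses anchors that need not lie in $\mathcal{X}$, whereas the theorem requires $S\subseteq\mathcal{X}$ (arms must be playable). Take $\mathcal{X}=\{x\in[0,1]^2: x_1+x_2\ge 1\}$, which has non-empty interior. A forward box anchored at $a\in\mathcal{X}$ contains a point $p$ with $p_1+p_2=1$ only if $a\preceq p$ and $a_1+a_2\ge p_1+p_2$, i.e.\ only if $a=p$. So no finite forward cover exists. The theorem implicitly presupposes a finite $S$; state this as a hypothesis rather than claiming it holds in general.
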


\begin{proof}
Let $x^\star\in\mathcal{X}^\star$. Since $S$ is a forward cover of $\mathcal{X}$ at scale $r_\Delta$, there exists $a\in S$ such that $x^\star\in B_\infty^+(a,r_\Delta) \cap \mathcal{X}$. Hence $a\preceq x^\star$ and $\|x^\star-a\|_\infty\le r_\Delta$. By \autoref{assump : Lipschitz one sided},
\[
\Delta(a)\ =\ f^\star-f(a)\ \le\ L\|x^\star-a\|_\infty\ \le\ Lr_\Delta\ =\ \frac{\Delta_f}{4}.
\]
By definition of $\Delta_f$, every suboptimal point has gap at least $\Delta_f$. Therefore $a$ cannot be suboptimal, and thus $a\in\mathcal{X}^\star$. This proves that $S\cap\mathcal{X}^\star\neq\emptyset$.

The second statement is then immediate from standard finite-armed bandit theory: once the action set is reduced to the finite set $S$, any gap-dependent algorithm such as UCB satisfies a regret bound of order $\sum_{a\in S\setminus\mathcal{X}^\star}(\log T)/\Delta(a)$ in expectation; see, e.g., \citet[Chapter~7]{lattimore2020bandit}. Since $\Delta(a)\ge \Delta_f$ for every $a\in S\setminus\mathcal{X}^\star$, the displayed upper bound follows.
\end{proof}

\begin{remark}
Theorem~\ref{thm:onesided-positive-gap} isolates a regime in which one-sided Lipschitz bandits are genuinely easier than their symmetric counterparts. Under global Lipschitz continuity on a connected domain, a positive global gap is typically impossible unless the function is flat on an entire component. In contrast, \autoref{assump : Lipschitz one sided} allows such instances, and once a positive gap exists, the problem reduces to identifying an optimal representative in a finite forward cover. If $\Delta_f$ is unknown, one can combine the same idea with a geometric grid of candidate scales or a doubling scheme, at the price of additional logarithmic factors.
\end{remark}

\section{Results restated from the Literature} \label{app : restated from litt}

This section is dedicated to restating results from the literature. This allows us to restate the results as presented in previous work and walk the reader through changes in notations and settings required to be applied in this work, when needed, and clear from context, we also take the liberty of directly stating theorems with different notations.

\subsection{Lower bounds on the regret} \label{appendix:extensions}

We restate here results from \cite{shekhar2022instance}, which constitute the building blocks for the proof of our lower bounds. These results are stated for the Lipschitz bandit problem, where the goal is to design an adaptive querying strategy to optimize an unknown $L$-Lipschitz objective function $f$ via noisy zeroth-order queries.

Before restating the main result we are interested in, a lower bound on the regret in lipschitz bandits, we also restate for the sake of completeness the definition of an $a_0$ consistent algorithm $\mathcal{A}$.

\begin{definition}[$a_0$-consistency]
An algorithm $\mathcal{A}$ is said to be $a_0$-consistent over a function class $\mathcal{F}$, if for all $a > a_0$ and $f \in \mathcal{F}$, the following holds:
lim\begin{equation} \lim_{n \rightarrow \infty} \frac{\Expectation \left [\mathcal{R}_n (\mathcal{A},f) \right ]}{n^a}=0 \end{equation}
\end{definition}
Where $\Expectation\left [\mathcal{R}_n (\mathcal{A},f) \right ]$ denotes the pseudo-regret incured by algorithm $\mathcal{A}$ over $n$ timesteps when it is applied to instance $f$.

We restate here the results from \cite{shekhar2022instance} that we use in our proofs.

\begin{definition}
\label{def:lipschitz-complexity}
Let $f$ be a $(1-\lambda) L$-Lipschitz function for some $\lambda \in (0,1)$. Fix a $\Delta>0$, and introduce the set $\mathcal{Z}_k := \{ x\in \mathcal{X} |  2^k \Delta \leq f(x^*) - f(x) < 2^{k+1}\Delta\}$. Introduce the radius $w_k = 3 \times 2^k \Delta/(\lambda L)$, and let $m_k$ denote the $2w_k$ packing  number of the set $\mathcal{Z}_k$ for $k \geq 0$. Then, we can define the following complexity term: 
        \begin{align}
            \label{eq:lipschitz-complexity} 
            \ \mathcal{C}_{Lip} (\Delta,  L, \lambda) := \sum_{k \geq 0} \frac{m_k}{2^{k+2}\Delta} > \frac{m_0}{4\Delta}. 
        \end{align}
\end{definition}

\begin{proposition}
\label{prop:lipschitz-regret}
    For a $(1-\lambda)L$-Lipschitz function $f$, the expected regret of an $a_0$-consistent~(for the family of $L$-Lipschitz functions) algorithm $\mathcal{A}$ satisfies: 
    \begin{align}
        \mathbb{E} \left [ \mathcal{R}_n ( \mathcal{A}, f ) \right ] = \Omega \left ( \sigma^2 \mathcal{C}_{Lip} ( n^{-(1-a)}, L, \lambda ) \right ).  
    \end{align}
\end{proposition}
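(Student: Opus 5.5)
The plan is the classical change-of-measure argument for continuum-armed lower bounds, following \cite{shekhar2022instance}. Fix $a>a_0$ and put $\Delta:=n^{-(1-a)}$. For each $k\ge 0$, take a maximal $2w_k$-packing $\{z_{k,1},\dots,z_{k,m_k}\}$ of $\mathcal{Z}_k$. The balls $B_\infty(z_{k,i},w_k)$ are then pairwise disjoint within a fixed $k$. Across different $k$ they may overlap, but only boundedly many scales can meet at a given point, which costs only a constant factor. For every packing point $z=z_{k,i}$ I will build an alternative instance
\[
f_z := f + \phi_z, \qquad \phi_z(x) := \bigl(2^{k+1}\Delta\cdot c_0 - \lambda L\|x-z\|_\infty\bigr)_+,
\]
with $c_0$ chosen so that the peak height $h_k\asymp 2^{k}\Delta$ exceeds $\Delta(z)$. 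Then $z$ becomes the unique near-maximizer of $f_z$, and every point outside $B_\infty(z,w_k)$ has $f_z$-gap of order $2^k\Delta$. Since $\phi_z$ is $\lambda L$-Lipschitz and $f$ is $(1-\lambda)L$-Lipschitz, $f_z$ is $L$-Lipschitz and so lies in the class over which $\mathcal{A}$ is consistent. This is exactly why the slack $\lambda$ and the radius $w_k=3\cdot 2^k\Delta/(\lambda L)$ appear: the support of $\phi_z$ is contained in $B_\infty(z,w_k)$.

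The second step is the information-theoretic inequality. Let $N_z$ be the number of pulls falling in $B_\infty(z,w_k)$ up to time $n$. The instances $f$ and $f_z$ differ only on this ball, and there by at most $h_k$. With Gaussian (or $\sigma$-sub-Gaussian) noise, the divergence decomposition gives
\[
\mathrm{KL}(\mathbb{P}_f,\mathbb{P}_{f_z}) \le \frac{h_k^2}{2\sigma^2}\,\mathbb{E}_f[N_z].
\]
I then apply the Bretagnolle--Huber inequality (or the Garivier--Kaufmann data-processing inequality) to the event $\{N_z\le n/2\}$.
\begin{itemize}
\item Under $f$: pulls inside the ball are charged below, so on this event the consistency bound keeps $\mathbb{P}_f(N_z>n/2)$ small.
\item Under $f_z$: on $\{N_z\le n/2\}$ the algorithm pays at least about $n h_k/2$, since it spends half its time outside the ball where the $f_z$-gap is of order $2^k\Delta$.
\end{itemize}
Consistency on both $f$ and $f_z$ bounds their expected regrets by $o(n^{a'})$ for any $a'\in(a_0,a)$. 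This yields
\[
\mathbb{E}_f[N_z]\gtrsim \frac{\sigma^2}{h_k^2}\log\frac{n h_k}{n^{a'}}.
\]
Because $h_k\ge\Delta=n^{-(1-a)}$, the argument of the logarithm is at least $n^{a-a'}\to\infty$, so the logarithm is bounded below by a positive constant for $n$ large.

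The third step converts pulls into regret and sums. Summing over the disjoint balls at scale $k$ and over $k$, I charge each pull in $B_\infty(z_{k,i},w_k)$ a gap of order $2^k\Delta$. This gives $\mathbb{E}_f[\mathcal{R}_n]\gtrsim \sigma^2\sum_k m_k\cdot 2^k\Delta/(2^k\Delta)^2$, which is $\sigma^2\,\mathcal{C}_{Lip}(\Delta,L,\lambda)$ up to constants.

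The main obstacle is this charging step. Points of $B_\infty(z,w_k)$ need not have $f$-gap of order $2^k\Delta$, because $w_k$ is large compared with $2^k\Delta/((1-\lambda)L)$ when $\lambda$ is small. I would first try to shrink the bump support to a radius $\rho_k\asymp 2^k\Delta/L$ while keeping the $2w_k$-separation of the packing. Gaps in $B_\infty(z,\rho_k)$ are then at least $2^{k-1}\Delta$, and the KL and regret computations are unchanged up to constants depending on $\lambda$. Failing that, I would redefine $N_z$ to count only pulls in the ball whose $f$-gap is at least $2^{k-1}\Delta$, and handle the remaining near-optimal pulls by comparing with the neighboring scale $k-1$.
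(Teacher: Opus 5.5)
The paper gives no proof of this proposition; it imports it from \cite{shekhar2022instance}. Your template (local alternatives, change of measure, consistency) is the standard one. As written, though, the argument has a genuine gap exactly where you suspected, and the gap breaks more than the final charging step.

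Your alternative $f_z=f+\phi_z$ must have slope at most $\lambda L$ and height $h_k>\Delta(z)\ge 2^k\Delta$, so its support has radius at least $\Delta(z)/(\lambda L)$. On that support, $f$ is only guaranteed a gap of $\Delta(z)-(1-\lambda)L\|x-z\|_\infty$, which can be $\le 0$ unless $\lambda$ is close to $1$. Concretely, take $d=1$, $f(x)=-(1-\lambda)L|x-\tfrac12|$, and a packing point $z$ with $\Delta(z)=2^k\Delta$. Then $\phi_z(\tfrac12)=h_k-\lambda 2^k\Delta/(1-\lambda)$. This is positive for $h_k=3\cdot 2^k\Delta$ whenever $\lambda<3/4$, and for every admissible height when $\lambda\le 1/2$. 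Pulls at the maximizer then cost nothing, yet each contributes $\asymp_\lambda (2^k\Delta)^2/\sigma^2$ to $\mathrm{KL}(\mathbb{P}_f,\mathbb{P}_{f_z})$. This has three consequences.
\begin{enumerate}
\item $\mathbb{P}_f(N_z>n/2)$ is not small, since a good algorithm has $N_z\approx n$. So the Bretagnolle--Huber step yields nothing.
\item The bound $\mathbb{E}_f[N_z]\gtrsim\sigma^2\log n/h_k^2$ is met by free pulls and says nothing about regret.
\item The bounded overlap across scales fails, because $x^\star$ can lie in the supports at every scale.
\end{enumerate}
Neither of your remedies repairs this. Guaranteeing gap $\ge 2^{k-1}\Delta$ on $B_\infty(z,\rho_k)$ forces $\rho_k\le 2^{k-1}\Delta/((1-\lambda)L)$. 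With your height $h_k\ge 2^{k+1}\Delta$, this makes the slope at least $4(1-\lambda)L$. Then $f+\phi_z$ leaves the $L$-Lipschitz class unless $\lambda\ge 4/5$, so consistency cannot be invoked; this is not a matter of $\lambda$-dependent constants. Restricting $N_z$ to high-gap pulls does not help either, because the free low-gap pulls still carry the information.

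\emph{The missing idea} is an alternative whose deviation from $f$ is dominated by the gap of $f$, so that information is never free. One choice is $g_z:=\max\{f,\ f^\star+\varepsilon_k-L\|\cdot-z\|_\infty\}$, with $\varepsilon_k=\Delta(z)+\eta_k$ and $\eta_k=\min\{\Delta(z),\lambda\Delta(z)/(2(1-\lambda))\}$.
\begin{itemize}
\item As a maximum of $L$-Lipschitz functions, $g_z$ is $L$-Lipschitz. The cone may use slope $L$ rather than $\lambda L$, which is exactly what the additive bump cannot do.
\item Its unique maximizer is $z$. Every point outside $D_z:=\{g_z>f\}$ has $g_z$-gap at least $\varepsilon_k\ge 2^k\Delta$.
\item Using $\Delta(z)\le\Delta(x)+(1-\lambda)L\|x-z\|_\infty$, one checks on $D_z$ that $0<g_z-f\le\frac{2-\lambda}{1-\lambda}\Delta(x)$, that $\Delta(x)\ge\frac{\lambda}{2(2-\lambda)}\Delta(z)$, and that $g_z-f\le 3\Delta(z)$. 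Moreover $D_z\subseteq B_\infty(z,2w_k)$.
\end{itemize}
Hence, for Gaussian noise, $\mathrm{KL}(\mathbb{P}_f,\mathbb{P}_{g_z})\lesssim_\lambda (2^k\Delta/\sigma^2)\,\mathbb{E}_f\bigl[\sum_{t:\,x_t\in D_z}\Delta(x_t)\bigr]$. Run your change-of-measure step with $N_z$ counting pulls in $D_z$; this count is now $o(n)$ under $f$. It directly gives regret $\gtrsim\sigma^2\log n/(2^k\Delta)$ inside each $D_z$. Disjointness within a scale follows from using a $4w_k$-packing, at a cost of $4^{-d}$. Overlap across scales is bounded because $\Delta\asymp_\lambda 2^k\Delta$ on $D_z$.

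Two smaller points remain. First, the KL identity needs a fixed noise model, namely Gaussian; sub-Gaussianity alone gives no such bound. Second, your alternatives depend on $n$ through $\Delta=n^{-(1-a)}$. You therefore need consistency uniformly over the class, $\sup_g\mathbb{E}_g[R_n]=O(n^{a'})$ for each $a'>a_0$. The pointwise definition restated in the appendix does not control $\mathbb{E}_{f_z}[R_n]$.
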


%\newpage 

%\input{checklist}
\end{document}